\newcommand{\set}[1] {\ensuremath{\mathbf {#1}}}
\newcommand{\prob}[1] {\ensuremath{\mathcal {#1}}}
\newcommand{\graph}[1] {\ensuremath{\mathcal {#1}}}
\newcommand{\var}[1]{\ensuremath{#1}}
\newcommand{\marg}[1]{\ensuremath{[_{#1}}}
\newcommand{\manip}[1]{\ensuremath{^{#1}}}
\newcommand{\independent}{\ensuremath{\perp\!\!\!\perp}}
\newcommand {\given}{\ensuremath{\;\vert\;}}
\newcommand{\dotcup}{\ensuremath{\dot{\cup}}}
\newcommand{\m}{\ensuremath{m}}
\newcommand{\combine}{COmbINE}
\newcommand*\circrightarrow{\ensuremath{
      \raisebox{1pt}{%
      \begin{tikzpicture}[>=latex, node distance = 0.5 cm, line width = 0.1pt]
        \node [draw,circle, inner sep =1.25pt](a) {};
        \node [coordinate,right of= a] (b) {};
        \path(a)edge[thin,->](b);
      \end{tikzpicture}}}}
\newcommand*\starrightarrow{\ensuremath{
      \raisebox{1pt}{%
      \begin{tikzpicture}[>=latex, node distance = 0.5 cm, line width = 0.1pt]
      \node (a)[star, star point ratio=4, minimum size=4pt,inner sep =0.01pt, draw] {};
        \node [coordinate,right of= a] (b) {};
        \path(a)edge[thin, ->](b);
      \end{tikzpicture}}}}
\newcommand*\starleftarrow{\ensuremath{
      \raisebox{1pt}{%
      \begin{tikzpicture}[>=latex, node distance = 0.5 cm, line width = 0.1pt]
        \node [coordinate] (a) {};
        \node (b)[star,star point ratio=4, minimum size=4pt,inner sep =0.01pt, draw, right of  =  a] {};
        \path(a)edge[thin,<-](b);
      \end{tikzpicture}}}}
\newcommand*\doublestar{\ensuremath{
      \raisebox{1pt}{%
      \begin{tikzpicture}[>=latex,node distance = 0.5 cm, line width = 0.1pt]
        \node [star,star point height=2pt, star point ratio=4, minimum size=4pt,
      inner sep =0pt, draw] (a) {};
        \node (b)[star,star point height=2pt, star point ratio=4, minimum size=4pt,
      inner sep =0pt, draw, right of  =  a] {};
        \path(a)edge[-](b);
      \end{tikzpicture}}}}
\newcommand*\doublecirc{\ensuremath{
      \raisebox{1pt}{%
      \begin{tikzpicture}[>=latex,node distance = 0.5 cm,line width = 0.1pt]
        \node [draw,circle, inner sep =1.25pt, draw] (a) {};
        \node (b)[draw,circle, inner sep =1.25pt, right of  =  a] {};
        \path(a)edge[-](b);
      \end{tikzpicture}}}}
\newcommand*\pagrightarrow{\ensuremath{
     \raisebox{2.2pt}{%
      \begin{tikzpicture}[>=latex,node distance = 0.5 cm,line width = 0.1pt]
        \node (a)[coordinate] {};
        \node (b) [coordinate, right of  =  a] {};
        \path(a)edge[thin, ->](b);
      \end{tikzpicture}}}}
\newcommand*\pagleftarrow{\ensuremath{
     \raisebox{2.2pt}{%
      \begin{tikzpicture}[>=latex,node distance = 0.5 cm,line width = 0.1pt]
        \node (a)[coordinate] {};
        \node (b) [coordinate, right of  =  a] {};
        \path(a)edge[thin, <-](b);
      \end{tikzpicture}}}}
\newcommand*\pagbidir{\ensuremath{
     \raisebox{2.2pt}{%
      \begin{tikzpicture}[>=latex,node distance = 0.5 cm,line width = 0.1pt]
        \node (a)[coordinate] {};
        \node (b) [coordinate, right of  =  a] {};
        \path(a)edge[thin, <->](b);
      \end{tikzpicture}}}}
\newcommand*\smrightbidir{\ensuremath{
\raisebox{1.2pt}{%
 \begin{tikzpicture}[>=latex,node distance = 0.5 cm,line width = 0.1pt]
        \node (a)[coordinate] {};
        \node (b) [coordinate, right of  =  a] {};
        \node (a1)[coordinate, above = 0.2 of a] {};
        \node (b1) [coordinate, above = 0.2 of b] {};
        \path(a)edge[thin, ->](b);
        \path (a1)edge[thin, <->](b1);
      \end{tikzpicture}}}}
\begin{document}

\title{Constraint-based Causal Discovery from Multiple Interventions over Overlapping Variable Sets}

\author{\name Sofia Triantafillou\thanks{Also in Department of Computer Science, University of Crete.} \email
striant@ics.forth.gr\\
\name Ioannis Tsamardinos$^*$ \email tsamard@ics.forth.gr \\
\addr Institute of Computer Science\\Foundation for Research and Technology - Hellas (FORTH)\\
N. Plastira 100 Vassilika Vouton\\ GR-700 13 Heraklion, Crete, Greece}

\maketitle
\begin{abstract}%
Scientific practice typically involves repeatedly studying a system, each time trying to unravel a different perspective.
In each study, the scientist may take measurements under different experimental conditions (interventions, manipulations,
perturbations) and measure different sets of quantities (variables). The result is a collection of heterogeneous data
sets coming from different data distributions. In this work, we present algorithm \combine, which accepts a collection of
data sets over overlapping variable sets under different experimental conditions; \combine\; then outputs a summary of
all causal models indicating the invariant and variant structural characteristics of all models that simultaneously fit
all of the input data sets. \combine\; converts estimated dependencies and independencies in the data into path
constraints on the data-generating causal model and encodes them as a SAT instance. The algorithm is sound and complete
in the sample limit. To account for conflicting constraints arising from statistical errors, we introduce a general
method for sorting constraints in order of confidence, computed as a function of their corresponding p-values. In our
empirical evaluation, \combine\; outperforms in terms of efficiency the only pre-existing similar algorithm; the latter
additionally admits feedback cycles, but does not admit conflicting constraints which hinders the applicability on real
data. As a proof-of-concept, \combine\; is employed to co-analyze 4 real, mass-cytometry data sets measuring
phosphorylated protein concentrations of overlapping protein sets under 3 different interventions.
\end{abstract}

\section{Introduction}

Causal discovery is an abiding goal in almost every scientific field. In order to discover the causal mechanisms of a
system, scientists typically have to perform a series of experiments (interchangeably: manipulations, interventions, or
perturbations). Each experiment adds to the existing knowledge of the system and sheds light to the sought-after
mechanism from a different perspective. In addition, each measurement may include a different set of quantities
(variables), when for example the technology used allows only a limited number of measured quantities.

However, for the most part, machine learning and statistical methods focus on analyzing a single data set. They are
unable to make joint inferences from the complete collection of available heterogeneous data sets, since each one is
following a different data distribution (albeit stemming from the same system under study). Thus, data sets are often
analyzed in isolation and independently of each other; the resulting knowledge is typically synthesized ad hoc in the
researcher's mind.

The proposed work tries to automate the above inferences. We propose a general, constraint-based algorithm named
\combine\; for learning causal structure characteristics from the integrative analysis of collections of data sets. The
data sets can be heterogeneous in the following manners: they may be measuring different overlapping sets of variables
${\bf O}_i$ under different hard manipulations on a set of observed variables ${\bf I}_i$. A hard manipulation on a
variable $I$, corresponds to a Randomized Controlled Trial \citep{Fisher1922} where the experimentation procedure
completely eliminates any other causal effect on $I$ (e.g.,  randomizing mice to two groups having two different diets;
the effect of all other factors on the diet is completely eliminated).

What connects together the available data sets and allows their co-analysis is the assumption that \emph{there exists a
single underlying causal mechanism that generates the data}, even though it is measured with a different experimental
setting each time. A causal model is plausible as an explanation if it simultaneously fits all data-sets when the effect
of manipulations and selection of measured variables is taken into consideration.

\combine\; searches for the set of causal models that simultaneously fits all available data-sets in the sense given
above. The algorithm outputs a summary network that includes all  the variant and invariant pairwise causal
characteristics of the set of fitting models. For example, it indicates the causal relations upon which all fitting
models agree, as well as the ones for which conflicting explanations are plausible. As our formalism of choice for causal
modeling, we employ Semi-Markov Causal Models (\textbf{SMCMs}). SMCMs \citep{Tian2003} are extensions of Causal Bayesian
Networks (\textbf{CBNs}) that can account for latent confounding variables, but do not admit feedback cycles. Internally,
the algorithm also makes heavy use of the theory and learning algorithm for Maximal Ancestral Graphs (\textbf{MAGs})
\citep{Richardson2002}.

The algorithm builds upon the ideas in \cite{Triantafillou2010} to convert the observed statistical dependencies and
independencies in the data to path constraints on the plausible data generating structures. The constraints are encoded
as a SAT instance and solved with modern SAT engines, exploiting the efficiency of state-of-the-art solvers. However, due
to statistical errors in the determination of dependencies and independencies, conflicting constraints may arise. In this
case, the SAT instance is unsolvable and no  useful information can be inferred. \combine\; includes a technique for
sorting constraints according to confidence:  The constraints are  added to the SAT instance in increasing order of
confidence, and the ones that conflict with the set of higher-ranked constraints are discarded. The technique is general
and the ranking score is a function of the p-values of the statistical tests of independence. It can therefore be applied
to any type of data, provided an appropriate test exists.

\combine\; is empirically compared against a similar, recently developed algorithm by \cite{hyttinen2013discovering}. The
latter is also based on conversion to SAT and is able to additionally deal with cyclic structures, but assumes lack of
statistical errors and corresponding conflicts. It can therefore not be directly applied to typical real problems that
may generate such conflicts. \combine\; proves to be  more efficient than \cite{hyttinen2013discovering} and scales to
larger problem sizes, due to an inherently more compact representation of the path-constraints. The empirical evaluation
also includes a quantification of the effect of sample size, number of data-sets co-analyzed, and other factors on the
quality and computational efficiency of learning. In addition, the proposed conflict resolution technique's superiority
is demonstrated over several other alternative conflict resolution methods.
Finally, we present a proof-of-concept computational experiment by applying the algorithm on 5 heterogeneous data sets
from \cite{Bendall2011} and \cite{Bodenmiller2012} measuring overlapping variable sets under 3 different manipulations.
The data sets measure protein concentrations in thousands of human cells of the autoimmune system using mass-cytometry
technologies. Mass cytometers can perform single-cell measurements with a rate of about 10,000 cells per second, but can
currently only measure up to circa 30 variables per run. Thus, they seem to form a suitable test-bed for integrative
causal analysis algorithms.

The rest of this paper is organized as  follows: Section \ref{sec:rel_work} presents the related literature on learning
causal models and combining multiple data sets. Section \ref{sec:mcg} reviews the necessary  theory of MAGs and SMCMs and
discusses the relation between the two and how hard manipulations are modeled in each. Section \ref{sec:COmbINE} is the
core of this paper, and it is split in three subsections; presenting the conversion to SAT; introducing the algorithm and
proving soundness and completeness; introducing the conflict resolution strategy. Section  \ref{sec:experiments} is
devoted to the experimental evaluation of the algorithm: testing the algorithm's performance in several settings and
presenting an actual case study where the algorithm can be applied. Finally, Section \ref{sec:conclusions} summarizes the
conclusions and proposes some future directions of this work.

\section{Related Work}\label{sec:rel_work}

Methods for causal discovery have been, for the most part, limited to the analysis of a single data set. However, the
great advancement of intervention and data collection technology has led to a vast increase of available data sets, both
observational and experimental. Therefore, over the last few years, there have been a number of works that focus on
causal discovery from multiple sources. Algorithms in that area may differ in the formalism the use to model causality or
in the type of heterogeneity in the studies they co-analyze. In any case, the goal is always to discover the  single
underlying data-generating causal mechanism.

One group of algorithms focuses on combining observational data that measure overlapping variables. \cite{Tillman2008}
and \cite{Triantafillou2010} both provide sound and complete algorithms for learning the common characteristics of MAGs
from data sets measuring  overlapping variables. \cite{Tillman2008} handles conflicts by ignoring conflicting evidence,
while the method presented in \citet{Triantafillou2010} only works with an oracle of conditional independence.
\cite{Tillman2011} present an algorithm for the same task that handles a limited type of conflicts (those conserning
p-values for the same pair of variables stemming from different data sets) by combining the p-values for conditional
independencies that are testable in more than one data sets. \cite{claassen2010learning} present a sound, but not
complete, algorithm for causal structure learning from multiple independence models over overlapping variables by
transforming independencies into a set of causal ancestry rules.

Another line of work deals with learning causal models from multiple experiments. \citet{Cooper1999} use a Bayesian score
to combine experimental and observational data in the context of causal Bayesian networks. \citet{Hauser2012} extend the
notion of Markov equivalence for DAGs to the case of interventional distributions arising from  multiple experiments, and
propose a learning algorithm. \cite{tong2001active} and \cite{murphy2001active}  use Bayesian network theory to propose
experiments that are most informative for causal structure discovery. \cite{eberhardt2007interventions} and
\cite{Eaton2007} discuss how some other types of interventions can be modeled  and used to learn Bayesian networks.
\citet{hyttinen2012alearning} provides an algorithm for learning linear cyclic models from a series of experiments, along
with sufficient and necessary conditions for identifiability. This method admits latent confounders but uses linear
structural equations to model causal relations and is therefore inherently limited to linear relations.
\citet{meganck2006learning}  propose learning SMCMs by learning the Markov equivalence classes of MAGs from observational
data and then designing the experiments necessary to convert it to a SMCM.

Finally, there is a limited number of methods that attempt to co-analyze data sets measuring overlapping variables under
different experimental conditions. In \cite{hyttinen2012causal} the authors extend the methods of
\cite{hyttinen2012alearning} to handle overlapping variables, again under the assumption of linearity.
\citet{hyttinen2013discovering} propose a constraint-based algorithm for learning causal structure from different
manipulations of overlapping variable sets. The method works by transforming the observed \m-connection and \m-separation
constraints into a SAT instance. The method uses a path analysis heuristic to reduce the number of tests translated into
path constraints. Causal insufficiency is allowed, as well as feedback cycles. However, this method cannot handle
conflicts and therefore relies on an oracle of conditional independence. Moreover, the method can only scale up to about
12 variables. \cite{Claassen2010} present an algorithm for learning causal models from multiple experiments;  the
experiments here are not hard manipulations,  but  general experimental conditions, modeled like variables that have no
parents in the graph but can cause other variables in some of the conditions.

To the best of our knowledge, \combine\; is the first algorithm to address both overlapping variables and  multiple
interventions for acyclic structures  without relying on  specific parametric assumptions or requiring an oracle of
conditional independence. While the limits of \combine\; in terms of input size have not been exhaustively checked, the
algorithm scales up to networks of up to 100 variables for relatively sparse networks (maximum number of parents equals
5).

\section{Mixed Causal Models}\label{sec:mcg}
Causally insufficient systems are often described using Semi-Markov causal models (SMCMs) \citep{Tian2003} or Maximal
Ancestral Graphs (MAGs) \citep{Richardson2002}. Both of them are \textbf{mixed graphs}, meaning they can contain both
directed ($\pagrightarrow$)  and bi-directed ($\pagbidir$) edges. We use the term \textbf{mixed causal graph} to denote
both. In this section, we will briefly present their common and unique properties. First, let us review the basic mixed
graph notation:

In a mixed graph $\graph G$, a path  is a sequence of distinct nodes $\langle V_{0}, V_{1},\dots, V_{n} \rangle$ s.t for
$0 \leq i < n$, $V_{i}$ and $V_{i+1}$ are adjacent in $\mathcal{G}$. $X$ is called a \textbf{parent} of $Y$ and $Y$ a
\textbf{child} of $X$ in \graph G if $X\pagrightarrow Y$ in  \graph G. A path from $V_{0}$ to $V_{n}$ is
\textbf{directed} if for $0 \leq i < n$, $V_{i} $ is a parent $V_{i+1}$. \var X is called a \textbf{ancestor} of \var Y
and \var Y is called a \textbf{descendant} of \var X in \graph G if $\var X= \var Y$ in \graph G or there exists a
directed path from $X$ to $Y$ in \graph G. We use the notation $\mathbf{Pa}_{\graph G}(\set X), \mathbf{Ch}_{\graph
G}(\set X), \mathbf{An}_{\graph G}(\set X), \mathbf{Desc}_{\graph G}(\set X)$ to denote the set of parents, children,
ancestors and descendants of nodes \set X in \graph G. A \textbf{directed cycle} in $\mathcal{G}$ occurs when $X
\rightarrow Y \in {\bf E}$  and $Y \in \mathbf{An}_{\graph G}(X)$. An \textbf{almost directed cycle} in $\mathcal{G}$
occurs when $X\leftrightarrow Y\in {\bf E}$ and $Y\in \mathbf{An}_{\graph G}(X)$. Given a path \var p in a mixed graph, a
non-endpoint node \var V on \var p is called a \textbf{collider} if the two edges incident to \var V on \var p are both
into \var V. Otherwise \var V is called a \textbf{non-collider}. A path $p=\langle\var X, \var Y, \var Z\rangle$, where
\var X and \var Z are not adjacent  in \graph G  is called an \textbf{unshielded triple}. If \var Z is a collider on this
path, the triple is called an \textbf{unshielded collider}. A path $p =\langle X\dots W, V, Y\rangle$ is called
\textbf{discriminating} for \var V if \var X is not adjacent to \var Y and every node on the path from \var X to \var V
is a collider and a parent of \var Y.

MAGs and SMCMs are graphical models that represent both  causal relations and conditional independencies among a set of
measured (observed) variables \set O, and can be viewed as  generalizations of causal Bayesian networks that can account
for latent confounders. MAGs can also account for selection bias, but in this work we assume selection bias is not
present.

sufficient. We call this hypothetical extended model the \textbf{underlying causal DAG}.

\subsection{Semi-Markov Causal Models}
Semi-Markov causal models (SMCMs), introduced by  \cite{Tian2003}, often also reported as Acyclic Directed Mixed Graphs
(ADMGs), are causal models that implicitly model hidden confounders using bi-directed edges.
A directed edge  \var X\pagrightarrow\var Y denotes that \var X is a \emph{direct} cause of \var Y in the context of the
variables included in the model. A bi-directed edge \var X \pagbidir \var Y  denotes that \var X and \var Y are
confounded by an unobserved variable. Two variables can be joined by at most two edges, one directed and one
bi-directed.

Semi-Markov causal models are designed to represent marginals of causal Bayesian networks. In DAGs, the probabilistic
properties of the distribution of variables included in the model can be determined graphically using the criterion of
\var d-separation. The natural extension of d-separation to mixed causal graphs is called \m-separation:

\begin{definition}($m$-connection, $m$-separation)
In a mixed graph  $\mathcal{G}=({\bf E}, {\bf V})$, a path $p$ between $A$ and $B$ is \textbf{m-connecting} given
(conditioned on) a set of nodes ${\bf Z}$ , ${\bf Z}\subseteq{\bf V}\setminus\{A, B\}$ if
    \begin{enumerate}
        \item Every non-collider on $p$ is not a member of $\bf{Z}$.
        \item Every collider on the path is an ancestor of some member of $\bf{Z}$.
    \end{enumerate}
    $A$ and $B$ are said to be \textbf{$m$-separated} by \set Z  if there is no $m$-connecting path between $A$ and $B$
    relative to \set Z. Otherwise, we say they are \textbf{$m$-connected} given \set Z.
\end{definition}

Let \graph G be a SMCM over a set of variables \set O, \prob P the joint probability distribution (JPD) over the same set
of variables and \prob J the independence model, defined as the set of conditional independencies that hold in \prob P.
We use $\langle \set X, \set Y |\set Z\rangle$ to denote the conditional independence of variables in \set  X with
variables in \set Y given variables in \set Z. Under the Causal Markov (\textbf{CMC}) and Faithfulness (\textbf{FC})
conditions \citep{Spirtes2000}, \emph{every $m$-separation present in  \graph G corresponds to a conditional independence
in $\mathcal{J}$ and vice-versa.}

In causal Bayesian networks, every missing edge in \graph G corresponds to a conditional independence in \prob J, meaning
there exists a subset of the variables in the model that renders the two non-adjacent variables independent.
Respectively, every conditional independence in \prob J corresponds to a missing edge in the DAG \graph G. This is not
always true for SMCMs. Figure \ref{fig:maximal} illustrates an example of a SMCM where two non-adjacent variables are not
independent given any subset of observed variables.

\citet{Evans2010, Evans2011}   deal  with the factorization and parametrization of SMCMs for discrete variables. Based on
this parametrization,  score-based methods have also recently been explored \citet{Richardson2012, Shpitser2013}, but are
still limited to small sets of discrete variables. To the best of our knowledge,  there exists no constraint-based
algorithm for learning the structure of SMCMs, probably due to the fact that the lack of conditional independence for a
pair of variables does not necessarily mean non-adjacency. \cite{Richardson2002} overcome this obstacle by introducing a
causal mixed graph with slightly different semantics, the maximal ancestral graph.

\subsection{Maximal Ancestral Graphs}
Maximal ancestral graphs (MAGs) are \textbf{ancestral} mixed graphs, meaning that they contain no directed or almost
directed cycles. Every pair of variables \var X, \var Y in an ancestral graph is joined by at most one edge. The
orientation of this edge  represents (non) causal ancestry: A bi-directed edge \var X\pagbidir\var Y  denotes that \var X
does not cause \var Y and \var Y does not cause \var X, but (under the faithfulness assumption) the two share a latent
confounder.  A directed edge \var X \pagrightarrow \var Y denotes causal ancestry: \var X is a \emph{causal ancestor} of
\var Y. Thus, if  \var X causes \var Y (not necessarily directly in the context of observed variables) and they are also
confounded, there is an edge \var X\pagrightarrow \var Y in the corresponding MAG. Undirected edges can also be present
in MAGs that account for selection bias. As mentioned above, we assume no selection bias in this work and the theory of
MAGs presented here is restricted to MAGs with no undirected edges.

Like SMCMs, ancestral graphs are also designed to represent marginals of causal Bayesian networks. Thus, under the causal
Markov and faithfulness conditions, \var X and \var Y are \m-separated given \set Z in an ancestral graph  \graph M if
and only if  $\langle X, Y |\set Z\rangle$ is in the corresponding independence model \prob J. Still, like in SMCMs, a
missing edge does not necessarily correspond to a conditional independence. The following definition describes a subset
of ancestral graphs in which every missing edge (non-adjacency) corresponds to a conditional independence:

\begin{definition}(Maximal Ancestral Graph, MAG)\citep{Richardson2002}
A mixed graph is called \emph{ancestral} if it contains no directed and almost directed cycles. An ancestral graph
$\mathcal{G}$ is called \emph{maximal} if for every  pair of non-adjacent nodes $(X, Y)$, there is a (possibly empty) set
$\mathbf{Z}$, $X, Y \notin \mathbf{Z}$ such that $\langle X, Y | \mathbf{Z}\rangle \in \mathcal{J}$.
\end{definition}

Figure \ref{fig:maximal} illustrates an ancestral graph that is not maximal, and the corresponding maximal ancestral
graph. MAGs  are closed under marginalization \citep{Richardson2002}. Thus, if \graph G is a MAG faithful to \prob P,
then there is a unique MAG \graph G$'$ faithful to any marginal distribution of \prob P.

\begin{figure}[t!]\centering
\begin{tabular}{cc}
\includegraphics[width = 0.2\columnwidth]{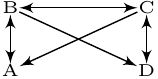}\hspace{0.15\columnwidth}&
\includegraphics[width = 0.2\columnwidth]{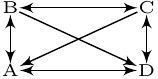}\\
(a) & (b)
\end{tabular}
\caption{\textbf{Maximality and primitive inducing paths}.(a) Both  (i) a semi Markov causal model over variables
$\{A,\;B,\;C,\;D\}$. Variables \var A and \var D are \m-connected given any subset of observed variables, but they do not
share a direct relationship in the context of observed variables and (ii) a non-maximal ancestral graph over variables
$\{A,\;B,\;C,\;D\}$. (b) The corresponding MAG. \var A and \var D are adjacent, since they cannot be \m-separated given
any subset of $\{\var B, \var C\}$. Path $\langle A, B, C, D\rangle$ is a primitive inducing path. This example was
presented in \cite{Zhang2008b}. \label{fig:maximal}}
\end{figure}

We use \marg {\set L} to denote the act of marginalizing out variables \set L, thus, if  \graph G  is a MAG  over
variables $\set O \cup \set L$ faithful to a joint probability distribution \prob P, \graph G\marg{\set L} is the MAG
over \set O faithful to the marginal joint probability distribution \prob P\marg{\set L}. Obviously, the DAG of a causal
Bayesian network is also a MAG. For a MAG \graph G over \set O and a set of variables $\set L \subset \set O$, the
marginal MAG \graph G\marg{\set L} is defined as follows:

\begin{definition}\label{def:margMAG} \citep{Richardson2002} MAG \graph G\marg{\set L} has node set $\set O\setminus \set
L$ and edges specified as follows:
If \var X, \var Y are s.t. $\forall \set Z\subset \set O\setminus \set L\cup\{X, Y\}$, \var X and \var Y are \m-connected
given \set Z in \graph G, then
\[\textnormal{ if } \left\{
\begin{array}{rl}
X \notin \mathbf{An}_{\graph G}(Y); Y \notin \mathbf{An}_{\graph G}(X)\\
X \in \mathbf{An}_{\graph G}(Y); Y \notin \mathbf{An}_{\graph G}(X)\\
X \notin \mathbf{An}_{\graph G}(Y); Y \in \mathbf{An}_{\graph G}(X)
\end{array} \right\}\textnormal{ then } \left\{
\begin{array}{rl}
X \leftrightarrow Y \\
X \rightarrow Y\\
X \leftarrow Y
\end{array} \right\} \textnormal{ in }\graph G\marg{\set L}\]

\end{definition}

As mentioned above, every conditional independence in an independence model \graph J corresponds to a missing edge in the
corresponding faithful MAG \graph G. Conversely, if \var X and \var Y are dependent given every subset of observed
variables, then \var X and \var Y are adjacent in \graph G. Thus, given an oracle of conditional independence it is
possible to learn the skeleton of a MAG \graph G over variables \set O from a data set. Still, some of the orientations
of \graph G are not distinguishable by mere observations. The set of MAGs \graph G  faithful to distributions \prob  P
that entail a set of conditional independencies form a \textbf{Markov equivalence class}. The following result was proved
in \cite{Spirtes1996}:

\begin{proposition}Two MAGs over the same variable set are Markov equivalent if and only if:
\begin{enumerate}\item They share the same edges.
\item They share the same unshielded colliders.
\item If a path \var p is discriminating for a node \var V in both graphs, \var V is a collider on the path on one
    graph if and only if it is a collider on the path on the other.
\end{enumerate}
\end{proposition}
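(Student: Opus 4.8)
The statement is the Spirtes--Richardson characterization of Markov equivalence for MAGs, so the plan is to prove the two directions separately, recalling that two MAGs $\graph G_1, \graph G_2$ over \set O are \textbf{Markov equivalent} exactly when they entail the same \m-separations, i.e. when their independence models coincide, $\prob J_1 = \prob J_2$; this reformulation is licensed by the Markov and faithfulness conditions stated above. The \emph{necessity} direction (Markov equivalence $\Rightarrow$ conditions 1--3) is the more routine one and rests on the fact that each of the three structural features can be read off from \m-separation statements alone. The \emph{sufficiency} direction (conditions 1--3 $\Rightarrow$ Markov equivalence) is the hard part and requires a careful analysis of minimal \m-connecting paths.

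For necessity I would first settle the skeleton. Suppose \var X and \var Y were adjacent in $\graph G_1$ but non-adjacent in $\graph G_2$. Adjacent vertices can never be \m-separated, so no set \m-separates them in $\graph G_1$; but $\graph G_2$ is \emph{maximal}, so the defining property of a MAG supplies a set \set Z with $\langle X, Y \given \set Z\rangle \in \prob J_2$. Markov equivalence gives $\prob J_1 = \prob J_2$, so this same independence forces an \m-separation of \var X and \var Y in $\graph G_1$ — a contradiction. Hence the skeletons agree, and this is precisely the step where maximality is indispensable. For conditions 2 and 3 I would invoke the standard collider-detection lemma: for an unshielded triple $\langle X, V, Y\rangle$, the node \var V is a collider iff \var V belongs to \emph{no} set that \m-separates \var X and \var Y (conditioning on a collider opens the triple, whereas a non-collider must lie in every separating set in order to block it); the analogous statement holds for the distinguished node \var V of a discriminating path $p = \langle X, \ldots, W, V, Y\rangle$, where \var V is a collider on \var p iff \var V is excluded from every set \m-separating the non-adjacent endpoints. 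Since these membership facts are properties of $\prob J$ alone, Markov-equivalent graphs must agree on them, which is exactly conditions 2 and 3.

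For sufficiency I would argue that conditions 1--3 force $\prob J_1 = \prob J_2$ by transferring \m-connection. Assume for contradiction that \var X and \var Y are \m-connected given \set Z in $\graph G_1$ but \m-separated given \set Z in $\graph G_2$, and choose a \emph{shortest} \m-connecting path \var p in $\graph G_1$ for this triple. Because the skeleton is shared, the same vertex sequence is a path in $\graph G_2$, and the goal is to show it remains \m-connecting there. The decisive structural lemma is that on a \emph{minimal} \m-connecting path every collider is either an unshielded collider or can be exhibited as the distinguished node of a discriminating path assembled from the triangulating edges that length-minimality forces, while each non-collider's blocking role is pinned down by the shared adjacencies. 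Conditions 2 and 3 then guarantee that each collider of \var p keeps its collider status in $\graph G_2$ and each non-collider keeps its status, so \var p is still \m-connecting in $\graph G_2$, contradicting the assumed \m-separation.

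The main obstacle is the interaction between \emph{shielded} colliders and \emph{ancestry}. Two Markov-equivalent MAGs need not share their ancestral relations — already $X \pagrightarrow Y$ and $X \pagbidir Y$ are equivalent — so I cannot directly transport the clause ``this collider is an ancestor of some member of \set Z'' from one graph to the other, nor assume a shielded collider is detectable from unshielded triples. The discriminating-path condition exists precisely to certify the collider status of the troublesome shielded nodes that appear along minimal connecting paths, and the heart of the proof is the combinatorial claim that \emph{these} are the only shielded colliders a minimal \m-connecting path can carry. Establishing that claim — controlling the length-minimization so that every such shielded collider sits at the end of a genuine discriminating path whose intermediate colliders are all parents of the common endpoint \var Y — is where the real work lies.
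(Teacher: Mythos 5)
First, a point of reference: the paper does not actually prove this proposition. It is stated as a quoted result with the attribution ``proved in Spirtes (1996)'', so there is no in-paper argument to compare yours against; the relevant benchmark is the original Spirtes--Richardson proof (and its later elaboration by Ali, Richardson and Spirtes). Measured against that, your outline correctly reproduces the architecture: the necessity direction via maximality (non-adjacent vertices admit a separating set, so skeletons must agree) and via the membership criteria ``$V$ is a collider on an unshielded triple iff $V$ lies in no set $m$-separating the endpoints'', with the analogous criterion for the distinguished node of a discriminating path; the sufficiency direction via transfer of a minimal $m$-connecting path. These are the right ingredients, and your identification of shielded colliders and non-shared ancestry as the central obstacles is accurate.

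However, as a proof the proposal has a genuine gap, and you have in effect flagged it yourself: the sufficiency direction rests entirely on the ``decisive structural lemma'' that on a minimal $m$-connecting path every collider is either unshielded or the distinguished node of a discriminating path built from the triangulating edges, and you do not establish this lemma --- you state that proving it ``is where the real work lies'' and stop there. This is not a routine verification; it is the bulk of the original paper's argument. Moreover, even granting that lemma, your transfer step is incomplete: showing that each vertex of the path $p$ keeps its collider/non-collider status in $\graph G_2$ does not yet show that $p$ is $m$-connecting in $\graph G_2$, because the $m$-connection definition also requires every collider on $p$ to be an \emph{ancestor of some member of} $\set Z$ in $\graph G_2$, and ancestral relations are precisely what Markov equivalent MAGs need not share. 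You name this difficulty in your final paragraph but offer no mechanism for resolving it (the known proofs handle it by re-routing the connecting path or by certifying the required ancestry through the same discriminating-path machinery). So the proposal is a correct and well-oriented road map, but it is a road map rather than a proof: the two hardest steps --- the classification of shielded colliders on minimal connecting paths, and the transfer of the ancestor-of-$\set Z$ clause --- are asserted as targets, not demonstrated.
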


We use [\graph G] to denote the class of MAGs that are Markov equivalent to \graph G. A \textbf{partial ancestral graph
(PAG)} is a representative graph of this class, and has the skeleton shared by all the graphs in [\graph G], and all the
orientations invariant in all the graphs in  [\graph G]. Endpoints that can be either arrows or tails in different MAGs
in \graph G are denoted with a circle ``$\circ$" in the representative PAG. We use the symbol
\raisebox{1pt}{\begin{tikzpicture}[>=latex, node distance = 0.5 cm, line width = 0.1pt] \node (a)[star, star point
ratio=4, minimum size=4pt,inner sep =0.01pt, draw] {};\end{tikzpicture}}  as a wildcard to denote any of the three marks.
We use the notations  $\graph M\in \graph P$ to denote that MAG \graph M belongs to the Markov equivalence class
represented by PAG \graph P, and we use the notation $\graph M\in \graph J$ to denote that MAG \graph M is faithful to
the conditional independence model \graph J. \textbf{FCI} Algorithm \citep{Spirtes2000, Zhang2008} is a sound and
complete algorithm for learning the complete (maximally informative) PAG of the MAGs faithful to a distribution \prob P
over variables \set O in which a set of conditional independencies \prob J hold.  An important advantage of FCI is that
it employs CMC, faithfulness and some graph theory to reduce the number of tests required to identify the correct PAG.

\subsection{Correspondence between SMCMs and MAGs}
Semi Markov Causal Models and Maximal Ancestral Graphs both represent causally insufficient causal structures, but they
have some significant differences.
While they both entail the conditional independence and causal ancestry structure of the  observed variables, SMCMs
describe the causal relations among observed variables, while MAGs encode independence structure with partial causal
ordering. Edge semantics in SMCMs are closer to the semantics of causal Bayesian networks, whereas edge semantics in MAGs
are more complicated. On the other hand, unlike in DAGs and MAGs, a missing edge in a SMCM does not necessarily
correspond to a conditional independence (SMCMs do not obey a pairwise Markov property).

Figure \ref{fig:mixedcausalmodels} summarizes the main differences  of SMCMs and MAGs. It shows two different DAGs, and
the corresponding marginal SMCMs and MAGs over four observed variables. SMCMs have a many-to-one relationship to MAGs:
For a MAG \graph M, there can exist more than one SMCMs that entail the same probabilistic and causal ancestry relations.
On the other hand, for any given SMCM there exists only one MAG  entailing the same probabilistic and causal ancestry
relations. This is clear in Figure \ref{fig:mixedcausalmodels}, where a unique MAG, $\graph M_1=\graph M_2$ entails the
same information as two different SMCMs,  $\graph S_1$ and $\graph S_2$ in the same figure.

Directed edges in a SMCM denote a causal relation that is \emph{direct} in the context of observed variables. In
contrast, a directed edge in a MAG merely denotes causal ancestry; the causal relation is not necessarily direct. An edge
\var X\pagrightarrow \var Y can be present in a MAG even though \var X does not directly causes \var Y; this happens when
\var X is a causal ancestor of \var Y and the two cannot be rendered independent given any subset of observed variables.
Depending on the structure of latent variables, this edge can be either missing or bi-directed in the respective SMCM.

In Figure \ref{fig:mixedcausalmodels} we can see examples of both cases. For example, \var A  is a causal ancestor of
\var D in DAG $\graph G_1$, but not a direct cause (in the context of observed variables). Therefore, the two are not
adjacent in the corresponding  SMCM $\graph S_1$ over $\{A, B, C, D\}$. However, the two cannot be rendered independent
given any subset of $\{B, C\}$, and therefore \var A\pagrightarrow \var D  is in the respective MAG $\graph M_1$.

\begin{figure}[t!]\centering
\begin{tabular}{cccc}
\includegraphics[width = 0.22\columnwidth]{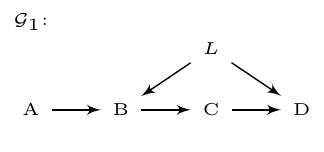}&
\includegraphics[width = 0.22\columnwidth]{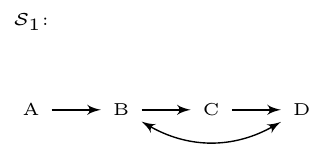}&
\includegraphics[width = 0.22\columnwidth]{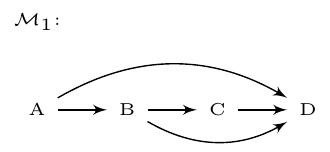}&
\includegraphics[width = 0.22\columnwidth]{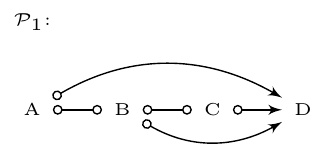}\\
\includegraphics[width = 0.22\columnwidth]{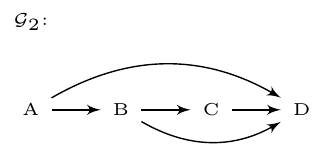}&
\includegraphics[width = 0.22\columnwidth]{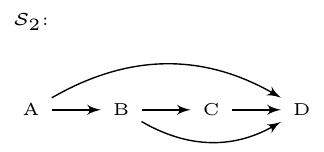}&
\includegraphics[width = 0.22\columnwidth]{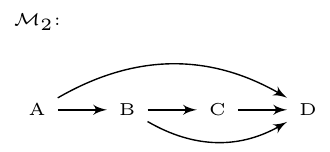}&
\includegraphics[width = 0.22\columnwidth]{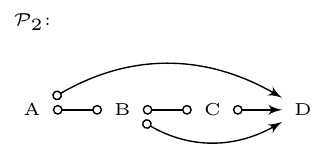}\\
\end{tabular}
\caption{\textbf{An example two different DAGs and the corresponding mixed causal graphs over observed variables}. On the
right we can see DAGs $\graph G_1$ over variables $\{A,\;B,\;C,\;D,\;L\}$ (top) and $\graph G_2$ over variables
$\{A,\;B,\;C,\;D\}$ (bottom). From left to right, on the same row as the underlying causal DAG, we can see the respective
SMCMs $\graph S_1$ and $\graph S_2$ over $\{A,\;B,\;C,\;D\}$; the respective MAGs $\graph M_1=\graph G_1\marg{L}$ and
$\graph M_2=\graph G_2$ over variables $\{A,\;B,\;C,\;D\}$; finally, the respective PAGs $\graph P_1$ and $\graph P_2$.
Notice that, $\graph M_1$ and $\graph M_2$ are identical, despite representing different underlying causal
structures.\label{fig:mixedcausalmodels}}
\end{figure}
\begin{figure}[h!]\centering
\begin{tabular}{cccc}
\includegraphics[width = 0.22\columnwidth]{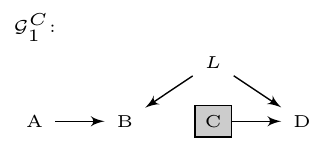}&
\includegraphics[width = 0.22\columnwidth]{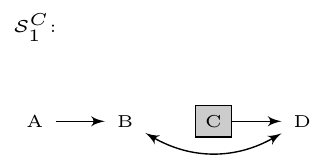}&
\includegraphics[width = 0.22\columnwidth]{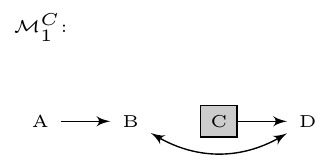}&
\includegraphics[width = 0.22\columnwidth]{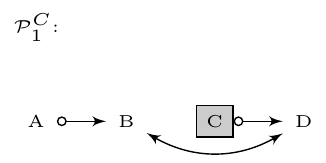}\\
\includegraphics[width = 0.22\columnwidth]{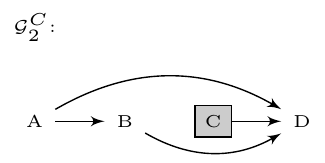}&
\includegraphics[width = 0.22\columnwidth]{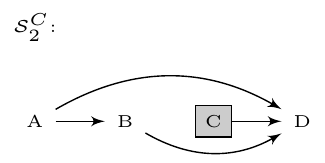}&
\includegraphics[width = 0.22\columnwidth]{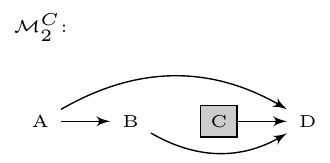}&
\includegraphics[width = 0.22\columnwidth]{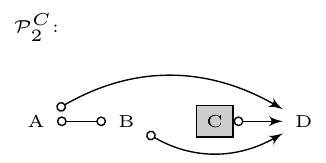}\\
\end{tabular}
\caption{\textbf{Effect of manipulating variable \var C on the causal graphs of Figure \ref{fig:mixedcausalmodels}}. From
right to left we can see the manipulated DAGs  $\graph G_1\manip{C}$ (top) and  $\graph G_2\manip{C}$ (bottom), the
manipulated SMCMs $\graph S_1\manip{C}$ (top) and $\graph S_2\manip{C}$ (bottom) over variables $\{A,\;B,\;C,\;D\}$, the
manipulated MAGs  $\graph M_1\manip{C} =\graph G_1\manip{C}\marg{L}$ (top) and $\graph M_2\manip{C} =\graph G_2\manip{C}$
(bottom) over the same set of variables, and the corresponding PAGs $\graph P_1\manip{C}$ (top) and $\graph P_2\manip{C}$
(bottom). Notice that edge $\var A\protect\pagrightarrow \var D$ is removed in $\graph M_1\manip{C}$, even though it is
not adjacent to the manipulated variable. Moreover, on the same graph,  edge  $\var B\protect\pagrightarrow \var D$ is
now $\var B\protect\pagbidir \var D$.\label{fig:manipulatemixedcausalmodels}}
\end{figure}

On the same DAG, \var B is another causal ancestor (but not a direct cause) of \var D. The two variables share the common
cause \var L. Thus, in the corresponding SMCM $\graph S_1$ over $\{A, B, C, D\}$ we can see the edge \var B\pagbidir\var
D. However, a bi-directed edge between \var B and \var D is not allowed in MAG $\graph M_1$, since it would create an
almost directed cycle. Thus, \var B\pagrightarrow \var D is in   $\graph M_1$.

We must also note that unlike SMCMs, MAGs only allow one edge per variable pair. Thus, if \var X directly causes \var Y
and the two are also confounded, both edges will be in a relevant SMCM (\var X\smrightbidir\var Y), while the two will
share a directed edge from \var X to \var Y in the corresponding MAG.

Overall, a SMCM has a subset of adjacencies (but not necessarily edges) of its MAG counterpart. These extra adjacencies
correspond to pairs of variables that cannot be \m-separated given any subset of observed variables, but neither directly
causes the other, and the two are not confounded. These adjacencies can be checked in a SMCM using a special type of
path, called \textbf{inducing path} \citep{Richardson2002}.

\begin{definition}(inducing path)\label{def:inducing_path}
A path $\var p =\langle V_1, V_2, \dots, V_n\rangle$ on a mixed causal graph \graph G over a set of variables $\set V=
\set O\dotcup \set L$ is called \textbf{inducing} with respect to \set L if every non-collider on the path is in \set L
and every collider is an ancestor of either \var {V_1} or \var {V_n}. A path that is inducing with respect to the empty
set is called a \textbf{primitive} inducing path.
\end{definition}

Obviously, an edge joining \var X and \var Y is a primitive inducing path. Intuitively, an inducing path with respect to
\set L is \m-connecting given  any subset of variables that does  not include variables in \set L. Path \var
A\pagrightarrow\var B\pagleftarrow \var L \pagrightarrow \var D is an inducing path with respect to \var L in $\graph
G_1$ of Figure \ref{fig:mixedcausalmodels}, and \var A\pagrightarrow\var B\pagbidir \var D  is an inducing path with
respect to the empty set in $\graph S_1$ of the same figure. Inducing paths are extensively discussed in
\cite{Richardson2002}, where the following theorem is proved:

\begin{theorem}\label{the:indpaths}If \graph G is an ancestral graph over variables $\set V = \set O\dotcup\set L$, and
$\var X, \var Y\in \set O $ then the following statements are equivalent:
\begin{enumerate}
\item \var X and \var Y are adjacent in \graph G\marg{\set L}.
\item There is an inducing path with respect to \set L in \graph G.
\item $\forall \set Z$, $\set Z\subseteq \set V\setminus\set L\cup\{X, Y\}$, \var X and \var Y are \m-connected given
    \set Z in \graph G.
\end{enumerate}
\end{theorem}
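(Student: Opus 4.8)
The plan is to first dispatch (1) $\Leftrightarrow$ (3) and then concentrate on (2) $\Leftrightarrow$ (3). Statement (1) $\Leftrightarrow$ (3) is immediate from Definition \ref{def:margMAG}: by construction $\var X$ and $\var Y$ are joined by an edge in $\graph G\marg{\set L}$ precisely when they are \m-connected given every $\set Z\subseteq \set O\setminus(\set L\cup\{X,Y\})$. So the whole content lies in relating inducing paths to the ``\m-connected for all $\set Z$'' condition. Throughout I will use the paper's standing no-selection-bias assumption, so that $\graph G$ has only directed and bi-directed edges; consequently every tail-endpoint on a path comes from a directed edge, a fact I lean on repeatedly.

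For (3) $\Rightarrow$ (2) I will argue the contrapositive and, more strongly, exhibit an explicit separating set. Assuming there is no inducing path w.r.t.\ $\set L$, put $\set Z^\ast = (\mathbf{An}_{\graph G}(X)\cup\mathbf{An}_{\graph G}(Y))\cap\set O\setminus\{X,Y\}$, the observed proper ancestors of the endpoints; this is an admissible conditioning set. I claim $\set Z^\ast$ \m-separates $\var X$ and $\var Y$. Suppose not, and let $q$ be \emph{any} \m-connecting path given $\set Z^\ast$. Each collider of $q$ is an ancestor of some member of $\set Z^\ast$, hence, since $\set Z^\ast\subseteq \mathbf{An}_{\graph G}(\{X,Y\})$, an ancestor of $X$ or $Y$ by transitivity --- exactly the collider condition for an inducing path. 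It then suffices to show every non-collider of $q$ lies in $\set L$. If some non-collider $V$ were observed, then $V\notin\set Z^\ast$ forces $V\notin\mathbf{An}_{\graph G}(\{X,Y\})$; but a non-collider has a tail at $V$, i.e.\ an edge $V\rightarrow V'$ along $q$, and following such out-edges forward along $q$ reaches, in finitely many steps, either an endpoint or a collider of $q$ (where the chain must stop), and in both cases $V\in\mathbf{An}_{\graph G}(\{X,Y\})$, a contradiction. Hence $q$ is an inducing path w.r.t.\ $\set L$, contradicting the hypothesis, so $\set Z^\ast$ separates and (3) fails.

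The direction (2) $\Rightarrow$ (3) is where the real work sits. Fix an inducing path $p$ and an arbitrary admissible $\set Z$. Because the non-colliders of $p$ lie in $\set L$, they never lie in $\set Z$, so the non-collider requirement for \m-connection is automatic; the only obstruction is a collider $C$ that is an ancestor of an endpoint but of no member of $\set Z$. For such a \emph{problematic} collider --- say $C\in\mathbf{An}_{\graph G}(X)$, the other case being symmetric --- I will perform path surgery: replace the $X$-to-$C$ portion of $p$ by a directed path $d: C\rightarrow\cdots\rightarrow X$ traversed backwards. On the resulting $X$--$Y$ walk, $C$ becomes a non-collider and the inserted interior nodes of $d$ are non-colliders; none of them lies in $\set Z$ (otherwise $C$ would be an ancestor of $\set Z$), the surgery creates no new colliders, and it destroys at least the collider $C$. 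I will organize this as an induction on the number of colliders that are not ancestral to $\set Z$: each surgery step preserves the invariant \emph{non-colliders outside $\set Z$, colliders ancestral to $\{X,Y\}$} while strictly decreasing that count, so the process terminates in a genuine \m-connecting path.

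The main obstacle is the bookkeeping in this last step: the directed path $d$ may intersect the retained $C$-to-$Y$ segment or pass through the other endpoint, so the resulting walk need not have distinct nodes. I expect to handle this by choosing shortest directed paths, extracting a simple subpath, and checking that the shortcut preserves the two invariants above, using the ancestral (acyclicity) property to exclude the degenerate cycles that would otherwise arise. Once distinctness is secured the induction closes, giving (2) $\Rightarrow$ (3); combined with the clean (3) $\Rightarrow$ (2) above and (1) $\Leftrightarrow$ (3), all three statements are equivalent.
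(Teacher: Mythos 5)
Your argument is correct, but note that the paper does not prove this theorem itself: the statement is imported from \cite{Richardson2002} and the ``proof'' in the paper is a one-line pointer to Theorem 4.2 there. So the relevant comparison is with that standard argument, which is essentially what you have reconstructed: (1)$\Leftrightarrow$(3) read off Definition \ref{def:margMAG}; (3)$\Rightarrow$(2) by showing that, absent an inducing path, the observed ancestors $\set Z^\ast=(\mathbf{An}_{\graph G}(X)\cup\mathbf{An}_{\graph G}(Y))\cap\set O\setminus\{X,Y\}$ separate $\var X$ and $\var Y$; and (2)$\Rightarrow$(3) by splicing directed collider-to-endpoint paths into the inducing path. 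Your chain-of-out-edges step in the separation argument is sound precisely because the no-selection-bias assumption excludes undirected edges, as you say. The only place you sketch rather than prove is the walk-to-path repair in (2)$\Rightarrow$(3), and the right device is not ``shortest directed paths'' but the choice of junction: traversing $d$ from the problematic collider $C$ toward the endpoint, take the \emph{last} node $E_j$ of $d$ lying on the retained segment of $p$ (possibly $C$ itself, possibly the far endpoint) and short-circuit there. Every node of $d$ is a descendant of $C$ and hence outside $\set Z$ (else $C$ would be an ancestor of a member of $\set Z$), the junction node acquires an out-edge and becomes a non-collider, and no new colliders arise, so both invariants are preserved and the count of problematic colliders strictly drops. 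With that detail fixed the induction terminates in an \m-connecting path, and your proof is complete.
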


\begin{proof}See proof of Theorem 4.2 in \cite{Richardson2002}.\end{proof}

This theorem links inducing paths in an ancestral graph to \m-separations in the same graph and to adjacencies in any
marginal ancestral graph. The equivalence of (ii) and (iii) can also be proved for SMCMs, using the proof presented in
\cite{Richardson2002} for Theorem \ref{the:indpaths}:

\begin{theorem}\label{the:indpathssmcm}If \graph G is a SMCM  over variables $\set V = \set O\dotcup\set L$, and $\var X,
\var Y\in \set O $ then the following statements are equivalent:
\begin{enumerate}
\item There is an inducing path with respect to \set L in \graph G.
\item $\forall \set Z$, $\set Z\subseteq \set V\setminus\set L\cup\{X, Y\}$, \var X and \var Y are \m-connected given
    \set Z in \graph G.
\end{enumerate}
\end{theorem}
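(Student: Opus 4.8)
The plan is to transfer Richardson and Spirtes' proof of the equivalence of statements (ii) and (iii) in Theorem \ref{the:indpaths} essentially verbatim, after checking that it never relies on the two features that separate ancestral graphs from SMCMs, namely the absence of almost directed cycles and the restriction to a single edge per pair. Both \m-connection and the inducing-path condition are defined purely in terms of the collider/non-collider status of interior vertices and the ancestor relation $\mathbf{An}_{\graph G}(\cdot)$, and neither notion presupposes ancestrality; the only structural fact I will actually use is that the directed part of \graph G is acyclic, which holds for SMCMs. Where a pair is joined by a double edge I read every path as an edge-sequence, so that each interior vertex has a well-defined collider status, and I keep in mind that bidirected edges never contribute to directed paths and so play no role in the ancestor relation.

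First I would establish the routine lemma that on any \m-connecting path between \var X and \var Y given \set Z, every interior vertex lies in $\mathbf{An}_{\graph G}(\{\var X, \var Y\}\cup\set Z)$: colliders are ancestors of \set Z by definition, and from a non-collider one follows its outgoing edge to obtain a directed path that must terminate either at an endpoint or at a collider (hence at an ancestor of \set Z). This argument uses only edge marks and acyclicity, so it holds unchanged in an SMCM.

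Using this lemma I would prove (2)$\Rightarrow$(1) by contraposition. Assuming no inducing path exists, I take the candidate set $\set Z^{*} = (\mathbf{An}_{\graph G}(\var X)\cup \mathbf{An}_{\graph G}(\var Y))\cap \set O \setminus\{\var X,\var Y\}$ and claim it \m-separates \var X and \var Y. If some path $p$ were \m-connecting given $\set Z^{*}$, then each collider of $p$ would be an ancestor of $\set Z^{*}$ and hence of \var X or \var Y, while each observed interior non-collider would, by the lemma and since $\set Z^{*}\subseteq \mathbf{An}_{\graph G}(\var X)\cup\mathbf{An}_{\graph G}(\var Y)$, lie in $\mathbf{An}_{\graph G}(\var X)\cup\mathbf{An}_{\graph G}(\var Y)$ and therefore in $\set Z^{*}$, contradicting that $p$ is \m-connecting; so every non-collider of $p$ is in \set L. Thus $p$ would be an inducing path with respect to \set L, a contradiction. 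This direction transfers without any use of the distinguishing SMCM features.

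The harder direction, (1)$\Rightarrow$(2), is where I expect the real work. Given an inducing path $p$ and an arbitrary $\set Z\subseteq \set O\setminus\{\var X,\var Y\}$, the interior non-colliders of $p$ already avoid \set Z (they lie in \set L), but a collider that is an ancestor of an endpoint need not be an ancestor of \set Z, so $p$ itself may be blocked and must be rerouted. For each such blocked collider \var C, which is an ancestor of, say, \var X, I would take a shortest directed path $\delta$ from \var C to \var X; since \var C is not an ancestor of \set Z, $\delta$ avoids \set Z, and splicing the reverse of $\delta$ in place of the $\var X$-to-$\var C$ segment turns \var C into an unblocked non-collider while introducing only \set Z-free directed-path interiors. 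Iterating this (taking, at each step, the blocked collider nearest an endpoint and truncating $\delta$ at its first intersection with the remaining path to preserve distinctness) yields an \m-connecting path. The main obstacle is the combinatorial bookkeeping: showing that the rerouting terminates and that intersections between the spliced directed paths and the surviving portion of $p$ never recreate a blocked collider. Here I would double-check that double edges cause no ambiguity once paths are treated as edge-sequences, and that almost directed cycles are harmless because $\delta$ uses directed edges only and the acyclicity of the directed subgraph guarantees $\delta$ is finite — these being exactly the points at which the original ancestral-graph proof must be audited for the SMCM setting.
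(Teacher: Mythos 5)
Your proposal takes essentially the same route as the paper: the paper offers no separate argument for Theorem \ref{the:indpathssmcm}, stating only that the equivalence of (ii) and (iii) in Theorem \ref{the:indpaths} ``can also be proved for SMCMs, using the proof presented in \cite{Richardson2002}'', which is exactly the transfer you perform. Your version is correct and in fact more explicit than the paper's, since you actually audit the two places (ancestrality and the one-edge-per-pair restriction) where the ported proof could conceivably break.
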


Primitive inducing paths are connected to the  notion of maximality in ancestral graphs: Every ancestral graph can be
transformed into a maximal ancestral graph with the addition of a finite number of bi-directed edges. Such edges are
added between variables $\var X, \var Y$ that are m-connected through a \textbf{primitive inducing path}
\citep{Richardson2002}. Path $\var A\pagbidir\var B\pagbidir\var C\pagbidir\var D$ in Figure \ref{fig:maximal} is an
example of a primitive inducing path.

Inducing paths are crucial in this work because adjacencies and non-adjacencies in marginal ancestral graphs can be
translated into existence or absence of inducing paths in causal graphs that include some additional variables. For
example, path $\var A\pagrightarrow\var B \pagleftarrow \var{L}\pagrightarrow\var D$ is an inducing path w.r.t. $\var L$
in $\graph G_1$ in Figure \ref{fig:mixedcausalmodels}, and therefore \var A and \var D are adjacent in $\graph M_1$.
Thus, inducing paths are useful for combining causal mixed graphs over overlapping variables.

Inducing paths  are also necessary to decide whether two variables in an SMCM will be adjacent in a MAG over the same
variables without having to check all possible \m-separations. Algorithm \ref{algo:SMCMtoMAG} describes how to turn a
SMCM into a MAG over the same variables. To prove the algorithm's soundness, we first need to prove the following:

\begin{proposition}\label{prop:mcmindpaths}Let \set O be a set of variables and \graph J the independence model over \set
V. Let \graph S be a SMCM over variables \set V that is faithful to \prob J and \graph M be the MAG over the same
variables that is faithful to \prob J. Let  $\var X, \var Y\in \set O$. Then there is an inducing path between $\var X$
and $\var Y$ with respect to $\set L$, $\set L\subseteq\set V$  in \graph S if and only if there is an inducing path
between $\var X$ and $\var Y$ with respect to \set L in \graph M.
\end{proposition}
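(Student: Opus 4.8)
The plan is to reduce both sides of the claimed equivalence to a single statement about \m-connection in each graph, and then to tie those two statements together through the shared independence model \prob J. The key observation is that Theorem \ref{the:indpathssmcm} and Theorem \ref{the:indpaths} supply, for the SMCM \graph S and the MAG \graph M respectively, inducing-path characterizations of exactly the same form: the existence of an inducing path with respect to \set L between \var X and \var Y is equivalent to \var X and \var Y being \m-connected given \emph{every} $\set Z\subseteq\set V\setminus\set L\cup\{X,Y\}$. Throughout I would assume $X,Y\notin\set L$, so that the partition $\set V=(\set V\setminus\set L)\dotcup\set L$ places \var X and \var Y on the observed side and both theorems apply; this is the only case in which an inducing path with the stated endpoints is meaningful.

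First I would apply Theorem \ref{the:indpathssmcm} to \graph S. Since \graph S is a SMCM over \set V, the equivalence of statements (i) and (ii) of that theorem yields that there is an inducing path with respect to \set L between \var X and \var Y in \graph S if and only if \var X and \var Y are \m-connected given \set Z in \graph S for every $\set Z\subseteq\set V\setminus\set L\cup\{X,Y\}$. Symmetrically, the equivalence of statements (ii) and (iii) of Theorem \ref{the:indpaths}, applied to the MAG \graph M, yields that there is an inducing path with respect to \set L between \var X and \var Y in \graph M if and only if \var X and \var Y are \m-connected given \set Z in \graph M for every such \set Z.

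The second, and central, step invokes faithfulness. Both \graph S and \graph M are faithful to the same independence model \prob J over \set V, so for each fixed conditioning set \set Z the following holds:
\[
\begin{aligned}
\text{$X,Y$ \m-connected given $\set Z$ in $\graph S$}
&\iff \langle X, Y \given \set Z\rangle \notin \prob J\\
&\iff \text{$X,Y$ \m-connected given $\set Z$ in $\graph M$.}
\end{aligned}
\]
Hence, for every $\set Z\subseteq\set V\setminus\set L\cup\{X,Y\}$, \m-connection in \graph S and \m-connection in \graph M coincide. Quantifying over all such \set Z, the \m-connection condition characterizing an inducing path in \graph S (from the first step) is identical to the one characterizing an inducing path in \graph M, and the proposition follows by chaining these equivalences.

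I do not expect a genuine obstacle: the graph-theoretic content of the result is already packaged inside Theorems \ref{the:indpaths} and \ref{the:indpathssmcm}, and faithfulness to a common \prob J does the gluing. The only point that needs care is the bookkeeping of variable sets — I must ensure that the marginalization set used in both theorem invocations is literally the same \set L, that the quantifier ranges over exactly $\set Z\subseteq\set V\setminus\set L\cup\{X,Y\}$ in both cases, and that $X,Y\notin\set L$ so both theorems are applicable. Once these are aligned, no argument about the explicit shapes of the inducing paths is required, since the paths in \graph S and \graph M need not correspond edge-by-edge; only their existence must agree.
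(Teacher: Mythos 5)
Your proposal is correct and follows essentially the same route as the paper's own proof: both reduce inducing-path existence in \graph S and \graph M to the universally quantified \m-connection statements via Theorems \ref{the:indpathssmcm} and \ref{the:indpaths} respectively, and then glue the two through the fact that \graph S and \graph M, being faithful to the same \prob J, entail the same \m-separations. Your additional bookkeeping remarks (matching the quantifier range and requiring $X, Y \notin \set L$) are sensible but do not change the argument.
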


\begin{proof}See Appendix \ref{app:proofs}..\end{proof}

\IncMargin{1em}
\begin{algorithm}[t!]
\SetKwInOut{Input}{input}\SetKwInOut{Output}{output}
\Input{SMCM \graph S}
\Output{MAG $\graph M$}
\BlankLine
\graph M$\leftarrow$\graph S\;
\ForEach{ordered pair of variables \var X, \var Y not adjacent in \graph S}{
\If{$\exists$ primitive inducing path from \var X to \var Y in \graph S}{
\uIf {$\var X\in \mathbf{An}_{\graph S}(Y)$}{add \var X\pagrightarrow \var Y to \graph M\;
}
\uElseIf {$\var Y\in \mathbf{An}_{\graph S}(X)$}{add \var Y\pagrightarrow \var X to \graph M\;
}
\Else{add \var Y\pagbidir \var X to \graph M\;
}
}
}
\ForEach{\var X\smrightbidir\var Y in \graph M}{remove \var X\pagbidir \var Y\;}
\caption{SMCMtoMAG}\label{algo:SMCMtoMAG}
\end{algorithm}
\DecMargin{1em}

Algorithm \ref{algo:SMCMtoMAG} takes as input a SMCM and adds the necessary edges to transform it into a MAG by looking
for primitive inducing paths.
The soundness of the algorithm is a direct consequence of Proposition \ref{prop:mcmindpaths}. The inverse procedure,
converting a MAG into the  underlying SMCM, is not possible, since we cannot know in general which of the edges
correspond to direct causation or confounding and which are there because of a (non-trivial) primitive inducing path.
Note though that, there exist sound and complete algorithms that identify all edges for which such a determination is
possible \citep{Borboudakis2012}. In addition, we later show that co-examining manipulated distributions can  indicate
that some edges stand for indirect causality (or indirect confounding).

\subsection{Manipulations under causal insufficiency}\label{sec:mcgmanip}

An important motivation for using causal models is to predict causal effects. In this work, we focus on hard
manipulations, where the value of the manipulated variables is set exclusively  by the manipulation procedure. We also
adopt the assumption of locality, denoting that the intervention of each manipulated variable should not directly affect
any variable other than its direct target, and more importantly, local mechanisms for other variables should remain the
same as before the intervention \citep{zhang2006causal}. Thus, the intervention is merely a local surgery with respect to
causal mechanisms. These assumptions may seem a bit restricting, but this type of experiment is fairly common in several
modern fields where the technical capability for  precise interventions is available, such as, for example, molecular
biology. Finally, we assume that the manipulated model is faithful to the corresponding manipulated distributions.

In the context of causal Bayesian networks, hard interventions are modeled using what is referred to as ``graph surgery",
in which all edges incoming to the manipulated variables are removed from the graph. The resulting graph is referred to
as the \textbf{manipulated graph}. Parameters of the distribution that refer to the probability of manipulated variables
given their parents are replaced by the parameters set by the manipulation procedure, while all other parameters remain
intact. Naturally, DAGs are closed under manipulation. We use the term \textbf{intervention target} to denote a set of
manipulated variables. For a DAG \graph D and an intervention target \set I, we use \graph D\manip{\set I} to denote the
manipulated DAG. The same notation (the intervention targets as a superscript) is used to denote a manipulated
independence model.

Graph surgery can be easily extended to SMCMs: One must simply remove edges into the manipulated variables. Again, we use
the notation \graph S\manip{\set I} to denote the graph resulting from  a SMCM \graph S after the manipulation of
variables in \set I. On the contrary, predicting the effect of manipulations in MAGs is not trivial. Due to the
complicated semantics of the edges, the manipulated graph is usually not unique.

This becomes more obvious by looking at Figures \ref{fig:mixedcausalmodels} and \ref{fig:manipulatemixedcausalmodels}.
Figure \ref{fig:mixedcausalmodels} shows two different causal DAGs and the corresponding SMCMs and MAGs, and Figure
\ref{fig:manipulatemixedcausalmodels} shows the effect of a manipulation on the same graphs. In Figure
\ref{fig:mixedcausalmodels} the marginals DAGs $\graph  D_1$ and $\graph D_2$ are represented by the same MAG $\graph
M_1$ =$\graph M_2$. However, after manipulating variable \var C, the resulting manipulated MAGs $\graph M_1\manip{C}$ and
$\graph M_2\manip{C}$ do not belong to the same equivalence class (they do not even share the same skeleton). We must
point out, that the indistinguishability of $\graph M_1$ and $\graph M_2$ refers to \m-separation only; the absence of a
direct causal edge between \var A and  \var D could be detected using other types of tests, like the Verma constraint
\citep{Verma1991}.

While we cannot predict the effect of manipulations on a MAG \graph M, given a data set measuring variables \set O when
variables in $\set I\subset \set O$ are manipulated, we can obtain (assuming an oracle of conditional independence) the
PAG representative of the actual manipulated MAG \graph M\manip{\set I}. We use  \graph P\manip{\set I} to denote this
PAG. Moreover, by observing PAGs  $\{\graph P\manip{\set I_i}\}_i$ that stem from different manipulations of the same
underlying distribution, we can infer some more refined information for the underlying causal model.

Let's suppose, for example, that $\graph G_1$ in Figure \ref{fig:mixedcausalmodels}  is the true underlying causal graph
for  variables $\{A, B, C, D, L\}$ and that we have the learnt PAGs $\graph P_1\manip{A}$ and $\graph P_1\manip{C}$ from
relevant data sets. Graph $\graph P_1\manip{A}$ is not shown, but is identical to $\graph P_1$ in Figure
\ref{fig:mixedcausalmodels} since  \var A has no incoming edges in the underlying DAG (and SMCM). $\graph P_1\manip{C}$
is illustrated in Figure \ref{fig:manipulatemixedcausalmodels}. Edge \var A \doublecirc \var D is present in $\graph
P_1\manip A$, but is missing in $\graph P_1\manip{C}$ even though neither \var A nor \var D are manipulated in $\graph
P_1\manip{C}$. By reasoning on the basis of both graphs, we can infer that edge \var A \pagrightarrow \var D in $\graph
P_1\manip{A}$ cannot denote a \emph{direct} causal relation among the two variables, but must be the result of a
primitive, non-trivial inducing path.

\section{Learning causal structure from multiple data sets measuring overlapping variables under different
manipulations\label{sec:COmbINE}}
In the previous section we described the effect of manipulation on MAGs and saw an example of how co-examining  PAGs
faithful to different manipulations of the same underlying distribution can help classify an edge between two variables
as  not direct.

In this section, we expand this idea and present a general, constraint-based algorithm for learning causal structure from
overlapping manipulations. The algorithm takes as input a set of data sets measuring overlapping variable sets $\{\set
O_i\}_{i=1}^N$; in each data set, some of the observed variables can be manipulated. The set of manipulated variables in
data set $i$ is also provided and is denoted with $\set I_i$.

We assume that there exists an underlying causal mechanism over the union of observed variables $\set O = \bigcup_i{\set
O_i}$ that can be described with a probability distribution \prob P over \set O and a semi Markov causal model \graph S
such that \prob P and \graph S are faithful to each other. We denote with $\prob J$ the independence model of \prob P.
Every manipulation is then performed on \graph S and only on variables observed in the corresponding data set. In
addition, we assume Faithfulness holds for the manipulated graphs as well. The data are then sampled from the manipulated
distribution. In each data set $i$, the set $\set L_i = \set O\setminus\set O_i$ is latent. We denote the independence
model of each data set $i$ as $\graph J_i \equiv \graph J\manip{\set I_i}\marg{\set L_i}$. We now define the following
problem:

\begin{definition}[Identify a consistent SMCM]\label{def:probmain}

Given sets $\{\set O_i\}_{i=1}^N$, $\{\set I_i\}_{i=1}^N$, and $\{\prob J_i\}_{i=1}^N$ identify a SMCM \graph S, such
that:
\begin{equation*}\graph M_i \in \prob J_i, \;\forall i\textnormal{ where } \graph M_i = \textnormal{SMCMtoMAG}(\graph
S\manip{\set I_i})\marg{\set L_i}
\end{equation*}
that is, $\graph M_i$ is the MAG corresponding to the manipulated marginal of \graph S, for each data set $i$. We call
such a graph \graph S a \textbf{possibly underlying SMCM} for $\{\graph J_i\}_{i=1}^N$.
\end{definition}

We present an algorithm that converts the problem above into a satisfiability instance s.t. a SMCM is consistent iff it
corresponds to a truth-setting assignment of the SAT instance. Notice that, an independence model \graph J corresponds to
a PAG \graph P over the same variables when they represent the same Markov equivalence class of MAGs. Thus, in what
follows we use the corresponding set of manipulated marginal PAGs $\{\graph P_i\}_{i=1}^N$ instead of the independence
models $\{\graph J_i\}_{i=1}^N$. Notice that, PAGs $\{\graph P_i\}_{i=1}^N$ can be learnt with a sound and complete
algorithm such as FCI.

In the following section, we discuss converting the problem presented above into a constraint satisfaction problem.

\subsection{Conversion to SAT}

\begin{figure}[p!]
\resizebox{\columnwidth}{!}{
\fbox{
\begin{minipage}{\columnwidth}
\textbf{Formulae relating properties of observed PAGs to the underlying SMCM \graph S:}
\begin{flalign*}
\mathbf{adjacent(X, Y, \graph P_i)}\leftrightarrow\exists p_{XY}: inducing(p_{XY},i)
\end{flalign*}\vskip -0.5cm
\begin{flalign*}
\begin{multlined}
\mathbf{unshielded\_dnc}(X, Y, Z,\graph P_i)\rightarrow\\
unshielded(\langle X, Y, Z\rangle , \graph P_i)\wedge(ancestor(Y, X, i) \vee ancestor(Y, Z, i))\big]
\end{multlined}
\end{flalign*}
\begin{flalign*}
\begin{multlined}
\mathbf{discriminating\_dnc}(\langle W, \dots, X, Y, Z\rangle, Y, \graph P_i)\rightarrow\\
(discriminating(\langle W, \dots, X, Y, Z\rangle, Y, \graph P_i)\wedge ancestor(Y, X,i) \vee ancestor(Y, Z, i))
\end{multlined}
\end{flalign*}\vskip -0.5cm
\begin{flalign*}
\begin{multlined}
\mathbf{unshielded\_collider}(X, Y, Z,\graph P_i)\rightarrow\\
unshielded(\langle X, Y, Z\rangle , \graph P_i)\wedge(\neg ancestor(Y, X, i) \wedge \neg ancestor(Y, Z, i))
\end{multlined}
\end{flalign*}
\begin{flalign*}
\begin{multlined}
\mathbf{disriminating\_collider}(\langle W, \dots, X, Y, Z\rangle, Y, \graph P_i)\rightarrow\\
(discriminating(\langle W, \dots, X, Y, Z\rangle, Y, \graph P_i)\wedge (\neg ancestor(Y, X,i) \wedge \neg ancestor(Y, Z,
i)))
\end{multlined}
\end{flalign*}\vskip -0.5cm
\begin{flalign*}
\begin{multlined}
\mathbf{unshielded}(\langle X, Y, Z\rangle, \graph P_i)\leftrightarrow \\adjacent(X, Y, \graph P_i)\wedge adjacent(Y,Z,
\graph P_i) \wedge\neg adjacent(X, Z, \graph P_i)
\end{multlined}
\end{flalign*}\vskip -0.5cm
\begin{flalign*}
\begin{multlined}
\mathbf{discriminating}(\langle V_0, V_1, \dots,V_{n-1}, V_n, V_{n+1} \rangle, V_n, \graph P_i)\leftrightarrow\\
\shoveleft[1cm]\forall j\big [ V_j\not \in \set I_i\wedge adjacent(V_{j-1}, Y, \graph P_i)\wedge ancestor(V_j,
V_{n+1},i)\wedge\\
adjacent(V_{j-1}, V_j, \graph P_i)\wedge\neg ancestor(V_j, V_{j-1},i)\wedge \neg ancestor(V_{j-1},V_j,i)
\big] \\
\end{multlined}
\end{flalign*}
\textbf{Formulae reducing path properties of  \graph S to the core variables:
}\begin{flalign*}
\begin{multlined}
\mathbf{inducing}(p_{XY}, i)\leftrightarrow\\
\shoveleft[1cm]\forall j\; V_j\not \in \set I_i \wedge(X\in \set I_i\rightarrow tail(V_1, X))\wedge(Y\in \set
I_i\rightarrow tail(V_n, Y))\wedge\\
 (|p_{XY}|=2\rightarrow edge(X, Y))\wedge(|p_{XY}|>2\rightarrow\forall j\; unblocked(\langle V_{j-1}, V_j,
 V_{j+1}\rangle, X, Y, i))
\end{multlined}
\end{flalign*}\vskip -0.5cm
\begin{flalign*}
\begin{multlined}
\mathbf{unblocked}(\langle Z, V, W\rangle, X, Y, i)\leftrightarrow\\
\shoveleft[1cm] edge(Z, V)\wedge edge(V, W) \wedge\\
\shoveleft[1cm][V\in\set L_i \rightarrow \neg head2head(Z, V, W, i)\vee ancestor(V, X, i)\vee ancestor(V, Y, i)]\wedge\\
\shoveleft[1cm][V\not\in\set L_i \rightarrow head2head(Z, V, W, i)\wedge(ancestor(V, X, i)\vee ancestor(V, Y, i))]
\end{multlined}
\end{flalign*}\vskip -0.5cm
\begin{flalign*}
\mathbf{head2head}(X, Y, Z, i)\leftrightarrow Y\not \in \set I_i\wedge arrow (X, Y)\wedge arrow(Z, Y)
\end{flalign*}\vskip -1cm
\begin{flalign*}
\mathbf{ancestor}(X, Y, i)\leftrightarrow\exists p_{XY}: ancestral(p_{XY}, i)
\end{flalign*}\vskip -1cm
\begin{flalign*}
\begin{multlined}
\mathbf{ancestral}(p_{XY}, i)\leftrightarrow\\
\shoveleft[1cm] \forall j\big[ V_j\not \in \set I_i\wedge(edge(V_{j-1}, V_j)\wedge tail(V_j, V_{j-1})\wedge
arrow(V_{j-1}, V_j))\big]
\end{multlined}
\end{flalign*}\vskip -1cm
\end{minipage}
}
}
\caption{\label{fig:sat_constraints}Graph properties expressed  as boolean formulae using  the variables $edge$, $arrow$
and $tail$. In all equations, we use $\var p_{XY}$ to denote a path of length n+2 between \var X and \var Y in \graph S:
$p_{XY} =\langle V_0=X, V_1, \dots V_j, \dots V_n, V_n+1=Y\rangle$. Index i is used to denote experiment i, where
variables $\set L_i$ are latent and variables $\set O_i$ are manipulated. Conjunction and disjunction are assumed to have
precedence over implication with regard to bracketing.}
\end{figure}

Definition \ref{def:probmain} implies that each  $\graph M_i$ has the same edges (adjacencies), the same unshielded
colliders and the same discriminating colliders as $\graph P_i$, for all \var i. We impose these constraints on \graph S
by converting them to a SAT instance. We express the constraints in terms of the following \textbf{core} variables,
denoting edges and orientation orientations in any consistent SMCM \graph S.

\begin{itemize}
\item edge(\var X, \var Y): true if \var X and \var Y are adjacent in \graph S, false otherwise.
\item tail(\var X, \var Y): true if there exists an edge between \var X and \var Y in \graph S that is out of \var Y,
    false otherwise.
\item arrow(\var X, \var Y): true if there exists an edge between \var X and \var Y in \graph S that is into \var Y,
    false otherwise.
\end{itemize}

Variables tail and arrow are not mutually exclusive, enabling us to represent $X \smrightbidir \; Y$ edges when
$tail(\var Y, \var X) \wedge arrow(\var Y, \var X)$. Each independence model $\graph J_i$ is entailed by the (non)
adjacencies and (non) colliders in each observed PAG $\graph P_i$. These structural characteristics correspond to paths
in any possibly underlying SMCM as follows:

\begin{enumerate}
\item $\forall\var X, \var Y\in \set O_i$,  \var X and \var Y are adjacent in $\graph P_i$ if and only if there
    exists  an inducing path between \var X and \var Y with respect to \set{L_i} in $\graph S\manip{\set I_i}$ (by
    Theorems \ref{the:indpaths} and \ref{the:indpathssmcm} and Proposition \ref{prop:mcmindpaths}).
\item If $\langle X, Y, Z\rangle$ is an unshielded definite non collider in $\graph P_i$, then $\langle X, Y,
    Z\rangle$ is an unshielded triple in $\graph P_i$ and  \var Y is an ancestor of either \var X or \var Z in
    $\graph S\manip{\set I_i}$ (by the semantics of edges in MAGs).
\item If$\langle X, Y, Z\rangle$  is an unshielded collider in $\graph P_i$, then $\langle X, Y, Z\rangle$ is an
    unshielded triple in $\graph P_i$ and \var Y is not an ancestor of \var X nor \var Z in $\graph S\manip{\set
    I_i}$ (by the semantics of edges in MAGs).
\item If $\langle W, \dots, X, Y, Z\rangle$ is a discriminating collider in $\graph P_i$, then $\langle W\dots, X, Y,
    Z\rangle$ is a discriminating path for \var Y in $\graph P_i$ and \var Y is an ancestor of either \var X or \var
    Z in $\graph S\manip{\set I_i}$ (by the semantics of edges in MAGs).
\item If $\langle W,\dots, X, Y, Z\rangle$ is a discriminating definite non collider in $\graph P_i$, then $\langle
    W\dots, X, Y, Z\rangle$ is a discriminating path for \var Y in $\graph P_i$ and  \var Y is not an ancestor of
    \var X nor \var Z in $\graph S\manip{\set I_i}$ (by the semantics of edges in MAGs).
    \newcounter{enumTemp}
    \setcounter{enumTemp}{\theenumi}
\end{enumerate}

These constraints are expressed using the core variables (edges, tails and arrows), as described in Figure
\ref{fig:sat_constraints}. For example, if \var X and \var Y are adjacent in $\graph P_i$, in a consistent SMCM \graph S
there must exist an inducing path \var p between \var X and \var Y in \graph S\manip{\set I_i} with respect to variables
\set{L_i}. Any truth-assignment to the core variables that does not entail the presence of such an inducing path should
not satisfy the SAT instance. The following constraints are added to ensure that the graphs satisfying constraints 1-5
above are SMCMs:
\begin{enumerate}
 \setcounter{enumi}{\theenumTemp}
\item $\forall\var X, \var Y\in \set O$, either \var X is not an ancestor of \var Y or \var Y is not an ancestor of
    \var X in \graph S (no directed cycles).
\item $\forall\var X, \var Y\in \set O$, at most one of $tail(X, Y)$ and $tail(Y, X)$ can be true (no selection
    bias).
\item $\forall\var X, \var Y\in \set O$, at least one of $tail(X, Y)$ and $arrow(Y, X)$ must be true.
\end{enumerate}

Naturally, Constraints 7 and 8 are meaningful only if \var X and \var Y are adjacent (if edge(X, Y) is true), and
redundant otherwise.

\subsection{Algorithm COmbINE}

\begin{algorithm}[t!]
\SetKwInOut{Input}{input}\SetKwInOut{Output}{output}
\SetKwFunction{addConstraints}{addConstraints}
\SetKwFunction{backBone}{backBone}
\SetKwFunction{initializeSMCM}{initializeSMCM}
\SetKwFunction{FCI}{FCI}
\Input{data sets $\{\set D_i\}_{i=1}^N$, sets of intervention targets $\{\set I_i\}_{i=1}^N$, FCI parameters
\emph{params}, maximum path length $mpl$,  conflict resolution strategy \emph{str}}
\Output{Summary graph $\graph H$}
\BlankLine
\lForEach{i}{$\graph P_i\leftarrow \FCI(\set D_i$, \emph{params})}\label{algoline:fci}
$\graph H\leftarrow$ \initializeSMCM($\{\graph P_i\}_{i=1}^N$)\;
$(\Phi, \graph F)\leftarrow$ \addConstraints($\graph H$, $\{\graph P_i\}_{i=1}^N$, $\{\set I_i\}_{i=1}^N$, $mpl$)\;
\graph F$'$ $\leftarrow$ select a subset of non-conflicting literals \graph F$'$ according to strategy  \emph{str}\;
$\graph H\leftarrow$ \backBone($\Phi\wedge \graph F'$)
\caption{COmbINE}\label{algo:COmbINE}
\end{algorithm}

We now present algorithm  \textbf{COmbINE} (Causal discovery from Overlapping INtErventions) that learns causal features
from multiple, heterogenous data sets. The algorithm takes as input a set of data sets $\{\set D_i\}_{i=1}^N$ over a set
of overlapping variable sets $\{\set O_i\}_{i=1}^N$. In each data set, a (possibly empty) subset of the observed
variables $\set I_i\subset \set O_i$ may be manipulated. FCI is run on each data set and the corresponding PAGs $\{\graph
P_i\}_{i=1}^N$ are produced. The algorithm then creates an candidate underlying SMCM \graph H. Subsequently, for each PAG
$\graph P_i$, the features of $\graph P_i$ are translated into constraints, expressed in terms of edges and endpoints in
\graph H, using the formulae in Figure \ref{fig:sat_constraints}. In the sample limit (and under the assumptions
discussed above), the SAT formula $\Phi\wedge F'$ produced by this procedure is satisfied by all and only the possible
underlying SMCMs for $\{\set P_i\}_{i=1}^N$. In the presence of statistical errors, however, $\Phi\wedge F'$ may be
unsatisfiable. To handle conflicts, the algorithm takes as input a strategy for selecting a non-conflicting subset of
constraints and ignores the rest. Finally, \combine\; queries the SAT formula for variables that have the same
truth-value in all satisfying assignments, translates them into graph features, and returns a graph that summarizes the
invariant edges and orientations of all possible underlying SMCMs. In the rest of this paper we call the graphical output
of \combine\; a \textbf{summary graph}.

The pseudocode for \combine\; is presented in Algorithm \ref{algo:COmbINE}. Apart from the set of data sets described
above, \combine\; takes as input the chosen parameters for FCI (threshold $\alpha$, maximum conditioning set $maxK$), the
maximum length of possible inducing paths to consider and a strategy for selecting a subset of non-conflicting
constraints.

Initially, the algorithm runs FCI on each data set $\set D_i$ and produces the corresponding  PAG $\graph P_i$. Then the
candidate SMCM \graph H is initialized: \graph H is the graph upon which all path constraints will be imposed. Therefore,
\graph H must have at least a superset of edges and at most a  subset of orientations of any consistent SMCM \graph S: If
\var p is an inducing (ancestral) path in \graph S, it must be a possibly inducing (ancestral) path in \graph H. An
obvious--yet not very smart--choice for \graph H would be the complete unoriented graph. However, looking for possible
inducing and ancestral paths on the complete unoriented graph over the union of variables could make the problem
intractable even for small input sizes. To reduce the number of possible inducing and ancestral paths, we use  Algorithm
\ref{algo:initializeSMCM} to construct \graph H.

Algorithm \ref{algo:initializeSMCM} constructs a graph \graph H that has all edges observed in any PAG $\graph P_i$  as
well as some additional edges that would not have been observed even if they existed: Edges connecting variables that
have never been  observed together, and edges connecting variables that have been observed together, but at least one of
them was manipulated in each joint appearance in a data set. For example, variables \var {X9} and \var {X15} in Figure
\ref{fig:example_io} are measured together in two data sets: $\set D_2$ and $\set D_3$. If \var {X9}\pagrightarrow
\var{X15} in the underlying SMCM, this edge would be present in $\graph P_3$. Similarly, if \var {X15}\pagrightarrow
\var{X9} in the underlying SMCM, the variables would be adjacent in $\graph P_2$. We can therefore rule out the
possibility of a directed edge between the two variables in \graph S. However, it is possible that \var {X15} and
\var{X9} are confounded in \graph S, and the edge disappears by the manipulation procedure in both $\graph P_2$ and
$\graph P_3$. Thus, Algorithm \ref{algo:initializeSMCM} will add these possible edges in \graph H. In addition, in Line
\ref{algoline:addOrientations}, Algorithm \ref{algo:initializeSMCM} adds all the orientations found so far in all $\graph
P_i$'s that are invariant\footnote{Other options would be to keep all non-conflicting arrows, or keep non-conflicting
arrows and tails after some additional analysis on definitely visible edges (see \cite{Zhang2008b},
\cite{Borboudakis2012} for more on this subject). These options are asymptotically correct and would constrain search
even further. Nevertheless, orientation rules in FCI seem to be prone to error propagation and we chose a more
conservative strategy giving a chance to the conflict resolution strategy to improve the learning quality. Naturally, if
an oracle of conditional independence is available or there is a reason to be confident on certain features, one can opt
to make additional orientations.}. The resulting graph has, in the sample limit, a superset of  edges and a subset of
orientations compared to the actual underlying SMCM. Lemma \ref{lemma:indpaths} formalizes and proves this property.

Having initialized the search graph, Algorithm \ref{algo:COmbINE} proceeds to generate the constraints. This procedure is
described in detail in Algorithm \ref{algo:addConstraints}, that is the core of \combine. These are: (i) the
bi-conditionals regarding the presence/absence of edges (Line \ref{algoline:bc1}), (ii) conditionals regarding unshielded
and discriminating colliders (Lines \ref{algoline:bc2a}, \ref{algoline:bc2b}, \ref{algoline:bc3a} and
\ref{algoline:bc3b}), (iii) constraints that ensure that any truth-setting assignment is a SMCM, i.e., it has no directed
cycles and that every edge has at least one arrowhead (Lines \ref{algoline:ndc} and \ref{algoline:ntt} respectively).

\begin{algorithm}[t!]
\SetKwInOut{Input}{input}\SetKwInOut{Output}{output}
\Input{PAGs $\{\graph P_i\}_{i=1}^{N}$}
\Output{initial graph $\graph H$}
\BlankLine
$\graph H\leftarrow$ empty graph over $\cup \set O_i$\;
\lForEach{i}{
$\graph H \leftarrow$ add all edges in $\graph P_i$ unoriented\label{algoline:addedges}}
Orient only arrowheads that are present in every $\graph P_i$\;\label{algoline:addOrientations}
\tcc{\small Add edges between variables never measured unmanipulated together}\label{algoline:de}
\normalfont
\ForEach {pair \var X, \var Y of non-adjacent nodes}{
\If{$\not \exists i$ s.t. $\var X, \var Y \in \set O_i \setminus\set I_i$}{
add \var X\doublecirc \var Y to \graph H\label{algoline:addedgesu}\;
\lIf{$\exists i$ s.t. $\var X, \var Y\in \set O_i$, $\var X\in \set I_i$, $\var Y\not\in\set I_i$}{add arrow into \var
X\label{algoline:addedges1}}
\lIf{$\exists i$ s.t. $\var X, \var Y\in \set O_i$, $\var Y\in \set I_i$, $\var X\not\in\set I_i$}{add arrow into \var
Y\label{algoline:addedges2}}
}
} 
\caption{initializeSMCM}\label{algo:initializeSMCM}
\end{algorithm}

\begin{algorithm}[t!]
\SetKwInOut{Input}{input}\SetKwInOut{Output}{output}
\Input{$\graph H$, $\{\graph P_i\}_{i=1}^N$, $\{\set I_i\}_{i=1}^N$, $mpl$}
\Output{$\Phi$, list of literals \graph F}
\BlankLine
$\Phi \leftarrow \emptyset\;$
\ForEach {$\var X, \var Y$}{
\textbf{posIndPaths}$\leftarrow$ paths in \graph H of maximum length $mpl$ that are possibly inducing with respect to
$\set L_i$\;

\ForEach{i}{
$\Phi\leftarrow \Phi \wedge \big [adjacent(\var X, \var Y, \graph P_i)\leftrightarrow \exists
p_{XY}\in$\textbf{posIndPaths} s. t. $inducing(p_{XY},i)$\big ]\;\label{algoline:bc1}
\lIf{\var X, \var Y are adjacent in $\graph P_i$}{add $adjacent(\var X, \var Y, \graph P_i)$ to \graph F}
\lElse{add $\neg adjacent(\var X, \var Y, \graph P_i)$ to \graph F}
} 
$\Phi\leftarrow\Phi\wedge\big [ \neg ancestor(X, Y)\vee \neg ancestor (Y, X)$\big ]\label{algoline:ndc}\;
$\Phi\leftarrow\Phi\wedge\big [\neg tail(X, Y)\vee \neg tail (Y, X)\big ]\wedge\big [(arrow(X, Y)\vee tail(X, Y)$\big
]\label{algoline:ntt}\;
}
\ForEach {i}{
\ForEach {unshielded triple in $\graph P_i$}{
{$\Phi\leftarrow \Phi \wedge \big [ dnc (X, Y, Z, \graph P_i) \rightarrow unshielded\_dnc(X, Y, Z, \graph
P_i)\big]$\;\label{algoline:bc2a}}
{$\Phi\leftarrow \Phi \wedge \big [collider (X, Y, Z, \graph P_i) \rightarrow unshielded\_collider(X, Y, Z, \graph
P_i\big]$\;\label{algoline:bc2b}}
\lIf{$\langle \var X, \var Y, \var Z\rangle$ is a collider in $\graph P_i$}{add $collider(\var X, \var Y, \var Z, \graph
P_i)$ to \graph F}\lElse{add $dnc(\var X, \var Y, \var Z, \graph P_i)$ to \graph F}
}
\ForEach {discriminating path $p_{WZ}=\langle W, \dots, X, Y, Z\rangle$}{
{$\Phi\leftarrow \Phi \wedge\big [ dnc(X, Y, Z, \graph P_i) \rightarrow discriminating\_dnc(p_{WZ}, Y,\graph
P_i)\big]$\;\label{algoline:bc3a}}
{$\Phi\leftarrow \Phi \wedge \big [collider(X, Y, Z, P_i) \rightarrow discriminating\_collider(p_{WZ}, Y, \graph
P_i)\big]$\;\label{algoline:bc3b}}
\lIf{\var X, \var Y, \var Z is a collider in $\graph P_i$}{add $collider(\var X, \var Y, \var Z, \graph P_i)$ to \graph
F}\lElse{add $dnc(\var X, \var Y, \var Z, \graph P_i)$ to \graph F}
}
}
\caption{addConstraints}\label{algo:addConstraints}
\end{algorithm}\DecMargin{1em}

The constraints are realized on the basis of the \emph{plausible} configurations of \graph H: Thus, for the constraints
corresponding to $adjacent(X, Y, i)$ the algorithm finds all paths between \var X and \var Y in \graph H that are
possibly inducing. Then, for the literal $adjacent(X, Y, i)$ to be true,  at least one of these paths is constrained to
be inducing; for the opposite, none of these paths is allowed to be inducing. This step is the most computationally
expensive part of the algorithm. The parameter \var{mpl} controls the length of the possibly inducing paths;  instead of
finding \emph{all} paths between \var X and \var Y that are possibly inducing, the algorithm looks for all paths of
length at most $mpl$. This plays a major part in the ability of the algorithm to scale up, since finding all possible
paths between every pair of variables can blow up even in relatively small networks, particularly in the presence of
unoriented cliques or in relatively dense networks.

Notice that the information on manipulations is included in the satisfiability instance through the encoding of the
constraints: For every adjacency between \var X and \var Y observed in  $\graph P_i$, the plausible inducing paths are
consistent with the respective intervention targets: No inducing path is allowed to include an edge that is incoming to a
manipulated variable. For example, in Figure \ref{fig:example_io} \var {X15} and \var {X14} are adjacent in $\graph P_3$,
where \var {X15} is manipulated. Since no information concerning experiments is employed up to the initialization of the
search graph, $\var X15\circrightarrow \var X14$ is in the initial search graph \graph H, and the edge is a possible
inducing path for \var {X15} and \var{X14} in $\graph P_3$. However, since \var {X15} is manipulated in $\graph P_3$, the
edge cannot have an arrow into \var {X15}. This is imposed by the constraint:
\begin{equation*}
\begin{multlined}
inducing(\langle X15, X14 \rangle, 3)\leftrightarrow  \\
(X15\in \set I_3\rightarrow tail(X14, X15)) \wedge (X14\in \set I_3\rightarrow tail(X15, X14))\wedge edge(X14, X15)
\end{multlined}
\end{equation*}
which is then added to $\Phi$ as \[inducing(X15, X14, 3)\leftrightarrow tail(X14, X15) \wedge edge(X14, X15).\]
Thus, in any SMCM \graph S that satisfies the final formula of Algorithm \ref{algo:COmbINE}, if\\$inducing(\langle X15,
X14\rangle, 3)$ is true, the edge will be consistent with the manipulation information.

As mentioned above, in the absence of statistical errors, all the constraints stemming from all PAGs $\graph P_i$ are
simultaneously satisfiable. In practical settings however, it is possible that some of the PAGs have some erroneous
features due to statistical errors, and these features can lead to conflicting constraints. To tackle this problem,
Algorithm \ref{algo:addConstraints} using the following technique: For every observed feature, instead of imposing the
implied constraints on the formula $\Phi$, the algorithm adds a bi-conditional connecting the feature to the constraints.
For example, if \var X and \var Y are found adjacent in $\graph P_i$, then instead of adding the constraints  $\exists
p_{XY}: inducing (X, Y, i)$ to $\Phi$, we add the bi-conditional $adjacent(X, Y, \graph P_i)\leftrightarrow \exists
p_{XY}: inducing (X, Y, i)$. The antecedents of the conditionals are stored in a list of literals \graph F. The conflict
resolution strategy is then imposed on this list of literals, selecting a subset $F'$ that results in a satisfiable SAT
formula $\Phi \wedge \graph F'$. The formula $\Phi \wedge \graph F'$ is expressed in Conjunctive Normal Form (CNF) so it
can be input to standard SAT solvers.

\def\imagetop#1{\vtop{\null\hbox{#1}}}
 \begin{figure}[t!]\centering
\begin{tabular}{|c|c|c|c|c|}\hline
\includegraphics[width = 0.18\columnwidth]{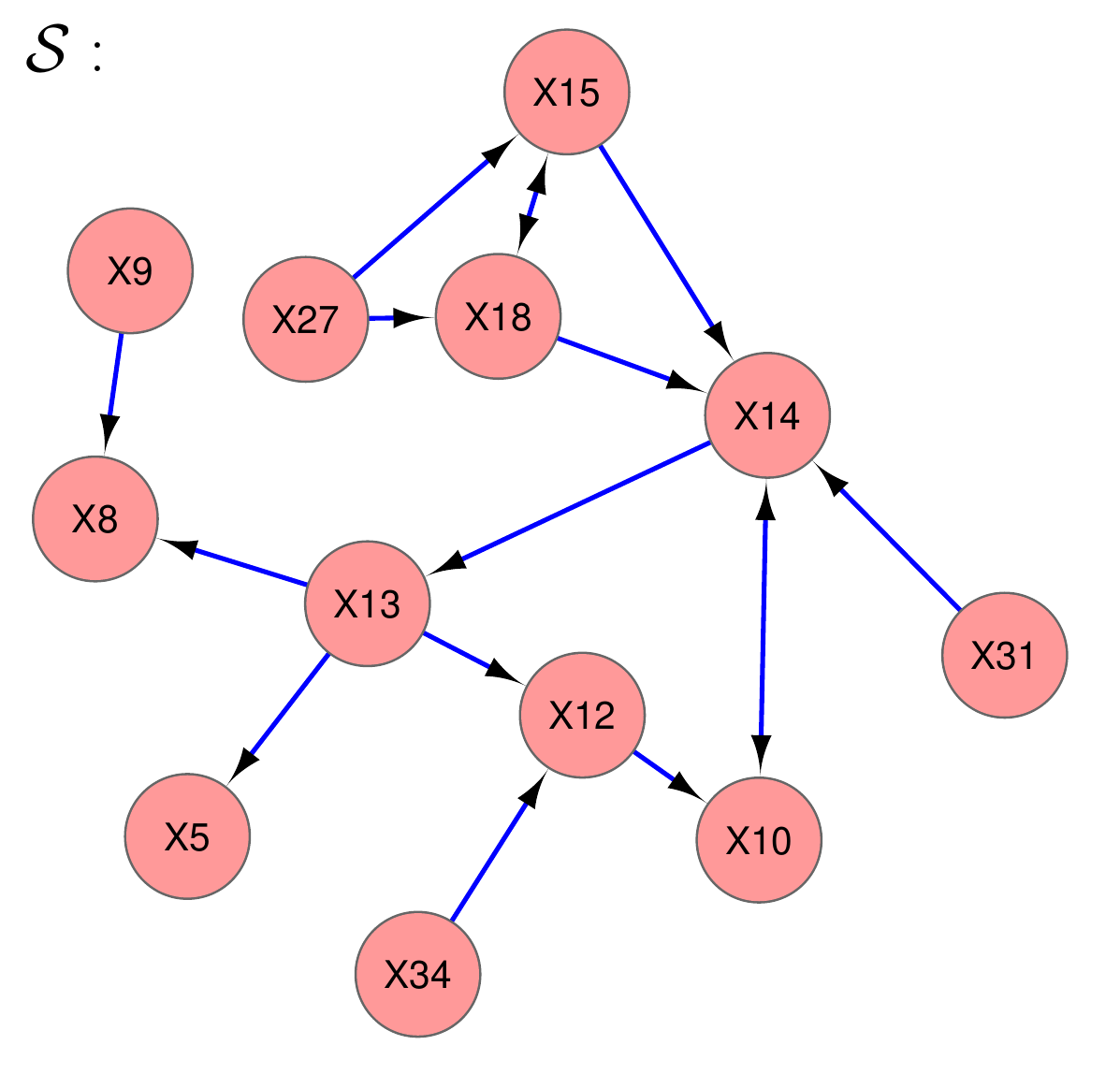}&
\includegraphics[width = 0.18\columnwidth]{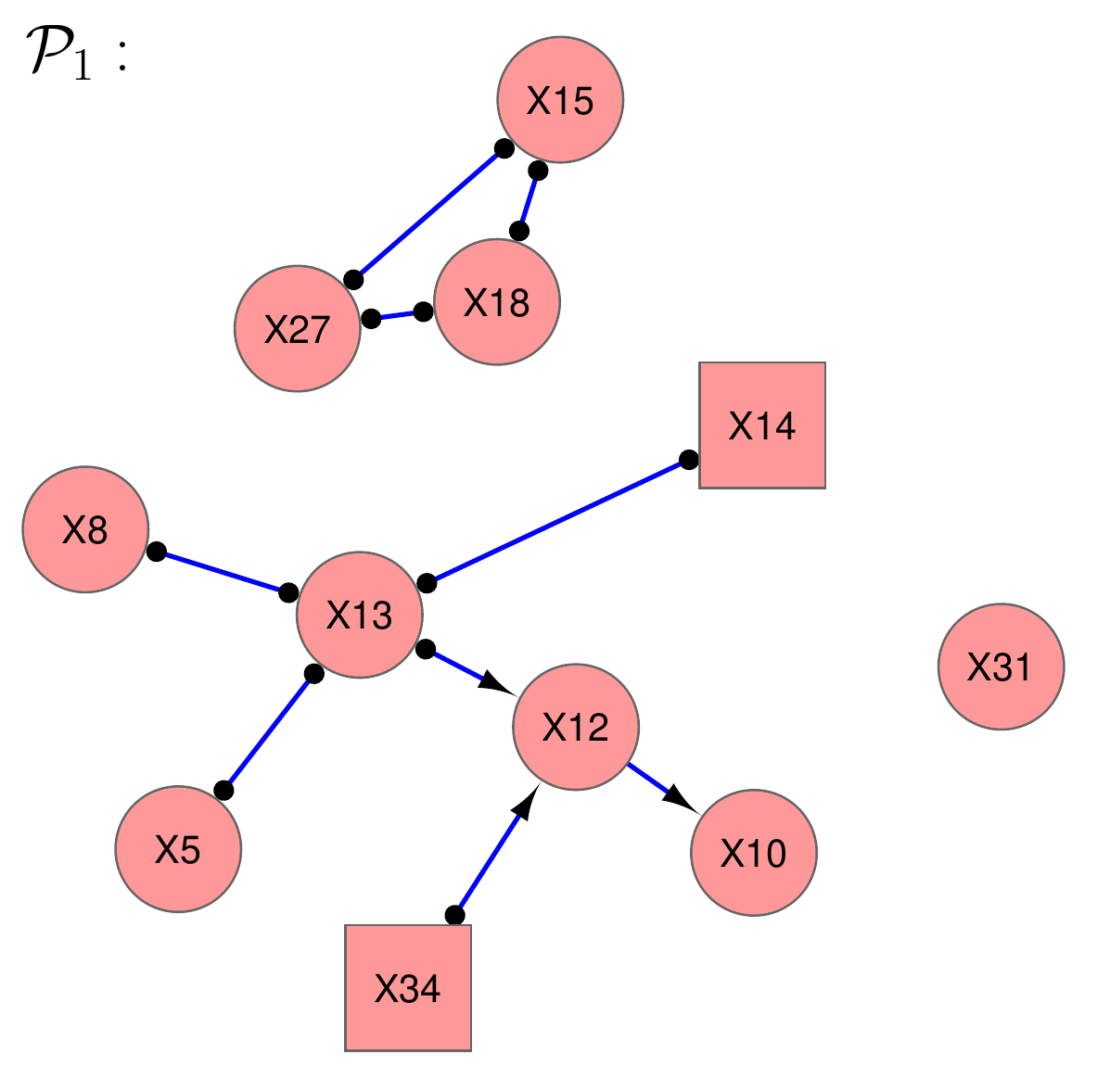}&
\includegraphics[width = 0.18\columnwidth]{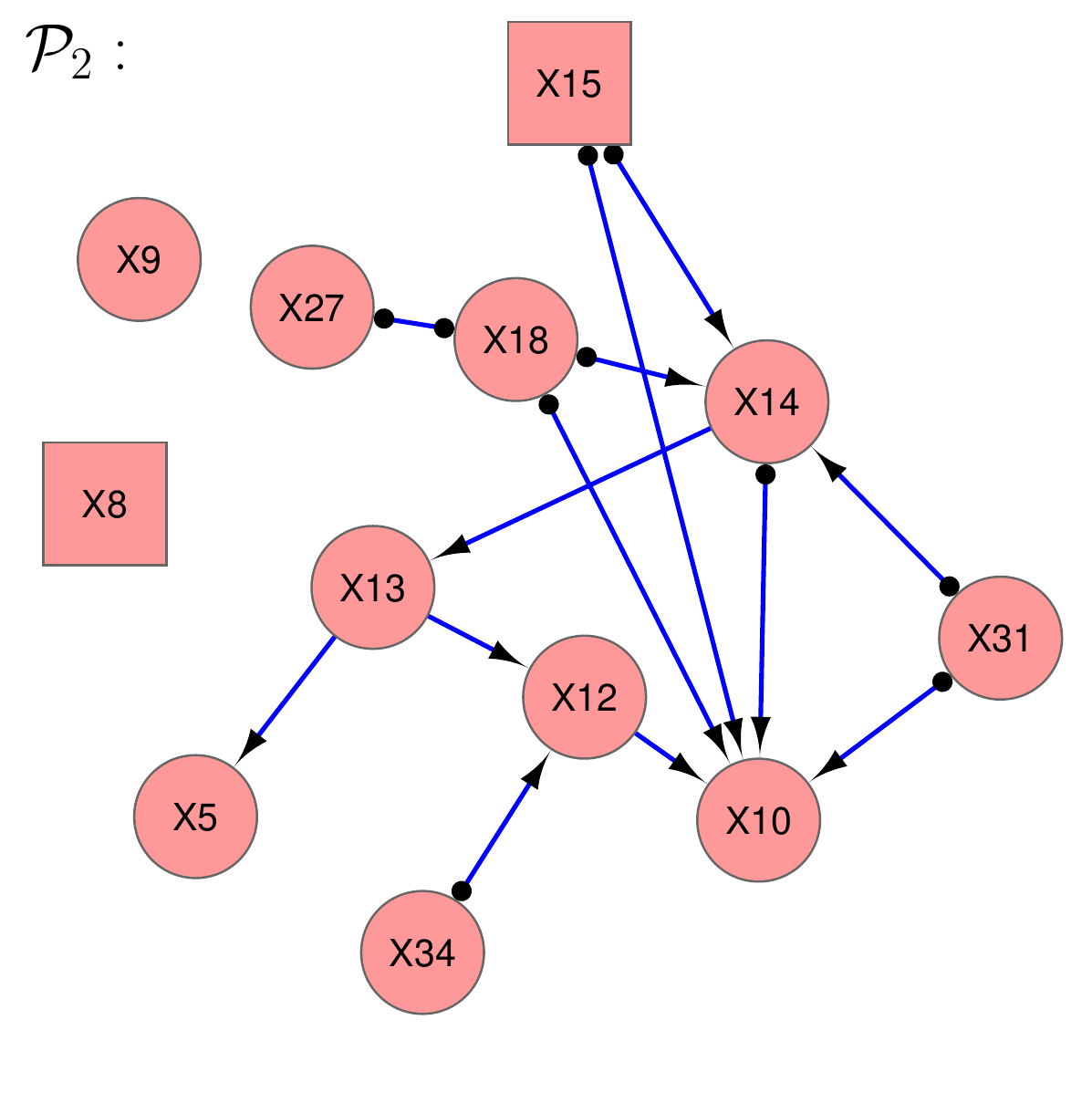}&
\includegraphics[width = 0.18\columnwidth]{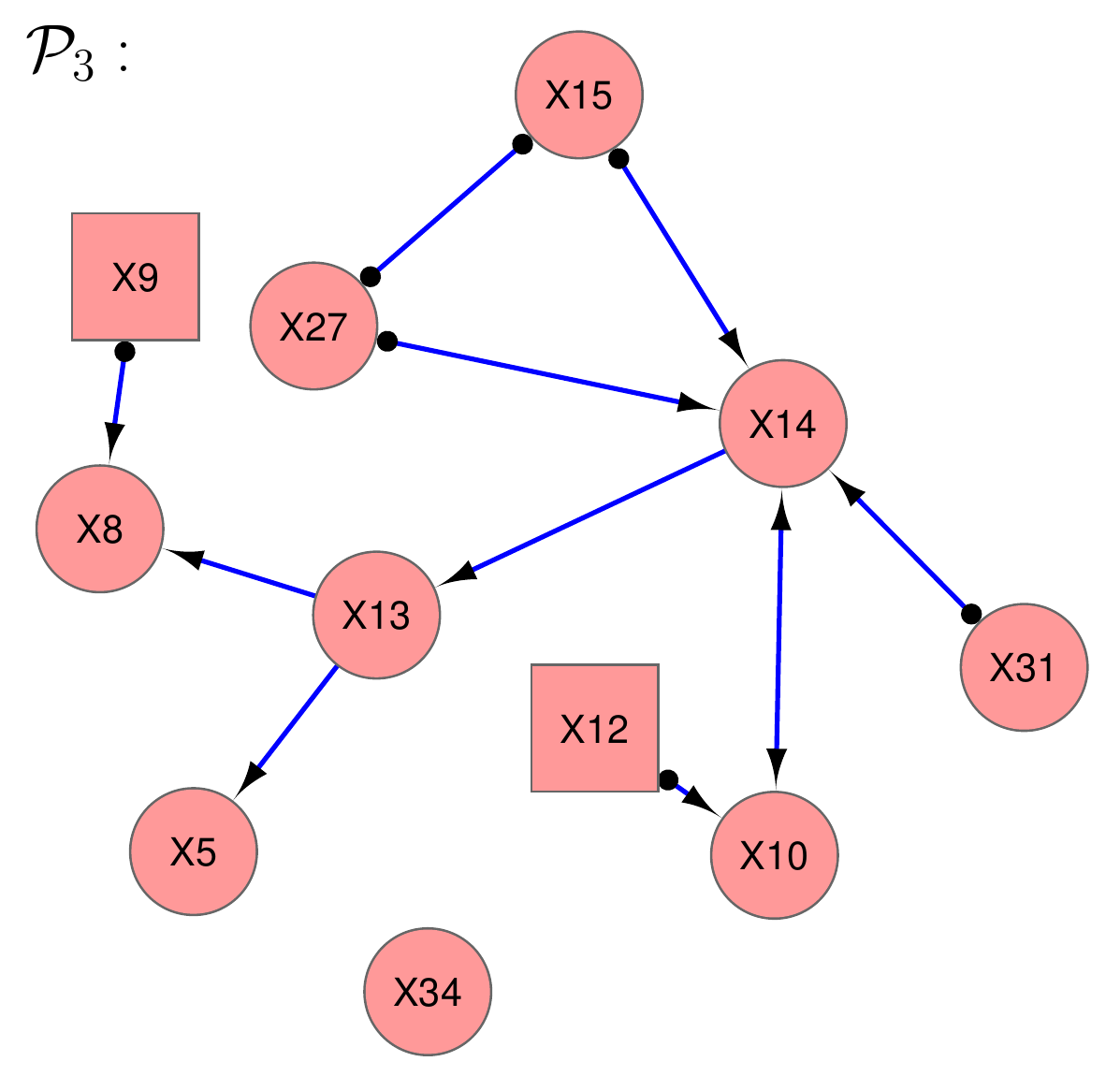}&
\includegraphics[width = 0.18\columnwidth]{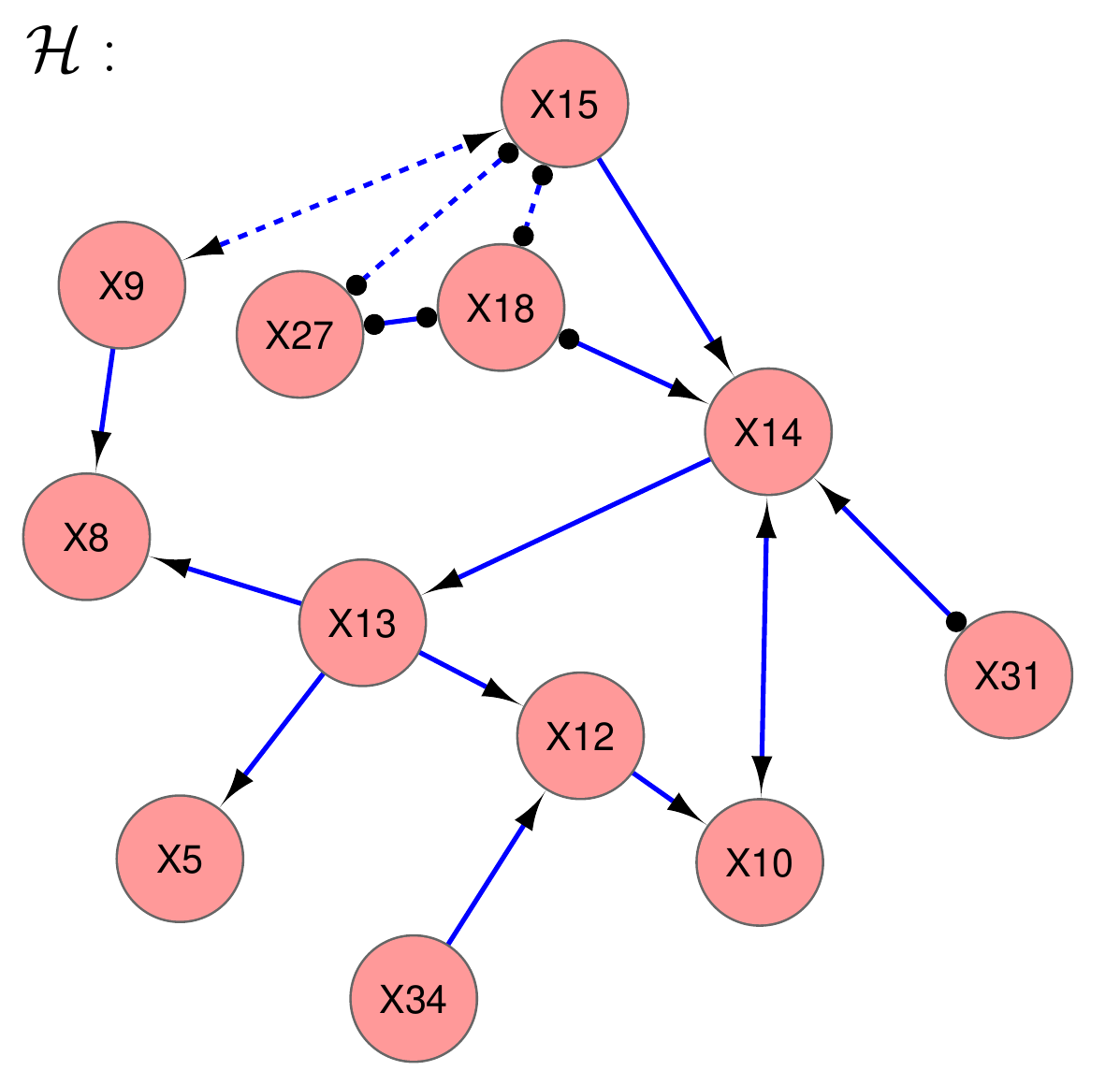}\\ \hline
\end{tabular}
\caption{\label{fig:example_io}\textbf{An example of COmbINE input - output}. Graph \graph S is the actual,
data-generating, underlying SMCM over 12 variables. PAGs $\graph P_1, \graph P_2 $ and $\graph P_3$ are the output of FCI
ran with an oracle of conditional independence on three different marginals of \graph G. \graph H is the output of
\combine\; algorithm. The sets of latent variables (with respect to the union of observed variables) per data set are:
$\set L_1 =\{X9\}$, $\set L_2= \{\emptyset\}$, $\set L_3= \{X18\}$. The sets of manipulated variables (annotated as
rectangle nodes instead of circles in the respective graphs) are: $\set I_1 = \{X14, X34\}$, $\set I_2 = \{X15, X8\}$,
$\set I_3 = \{X9, X12\}$. Notice that \var{X10} and \var{X31} are adjacent in $\graph P_2$, but not in $\graph P_1$ or
$\graph P_3$. This happens because there exists an inducing path in the underlying SMCM ($X31\protect\pagrightarrow
X14\protect\pagbidir X10$ in \graph S) that is ``broken" by the manipulation of \var{X14}  and \var{X12}, respectively.
Also notice a dashed edge between \var{X9} and \var{X15}, which cannot be excluded since the variables have never been
observed unmanipulated together. Even if the link existed, it would be destroyed in both $\graph  P_2$ and $\graph P_3$,
where both variables are observed.  All graphs were visualized in Cytoscape \citep{smoot2011cytoscape}.}
\end{figure}

without imposing the antecedent. These semantics should always be guaranteed and thus, $\Phi$ forms a set of
hard-constraints. In contrast, if the list of antecedents in \graph F leads to a conflict, one can select only a subset
of antecedents to satisfy (soft-constraints).

Recall that the propositional variables of $\Phi$ correspond to the features of the actual underlying SMCM (its edges and
endpoints). Some of these variables have the same value in all the possible truth-setting assignments of $\Phi \wedge
\graph F'$, meaning the respective features are invariant in all possible underlying SMCMs. Such variables are called
\textbf{backbone} variables of $\Phi \wedge \graph F'$  \citep{hyttinen2013discovering}. The actual value of a backbone
variable is called the polarity of the variable. For sake of brevity, we say an edge or endpoint has polarity 0/1 if the
corresponding variable is a backbone variable in $\Phi \wedge \graph F'$ and has polarity 0/1. Based on the backbone of
$\Phi \wedge \graph F'$, the final step of \combine\; is to construct the summary graph \graph S. \graph S has the
following types of edges and endpoints:

\begin{itemize}
\item \textbf{Solid Edges:} in \graph H that have polarity 1 in $\Phi \wedge \graph F'$, meaning that they are
    present in all possible underlying SMCM.
\item \textbf{Absent Edges:} Edges that are not in \graph H or edges in \graph H  that have polarity 0 in $\Phi$,
    meaning that they are absent in all possible underlying SMCM.
\item \textbf{Dashed Edges:} Edges in \graph H  that are not backbone variables in  $\Phi \wedge \graph F'$, meaning
    that there exists at least one possible underlying SMCM where this edge is present and one where this edge is
    absent.
\item \textbf{Solid Endpoints:} Endpoints in \graph H  that are  backbone variables in $\Phi \wedge \graph F'$,
    meaning that this orientation is invariant in all possible underlying SMCMs.
\item \textbf{Dashed (circled) Endpoints:} Endpoints  in \graph H  that are  not backbone variables in $\Phi \wedge
    \graph F'$, meaning that there exists at least one SMCM where this orientation does not hold.
\end{itemize}

We use the term \textbf{solid features} of the summary graph to denote the set of solid edges, absent edges and solid
endpoints of the summary graph.

\def\imagetop#1{\vtop{\null\hbox{#1}}}
 \begin{figure}[t!]\centering
\begin{tabular}{|c|c|c|}\hline
\includegraphics[width = 0.3\columnwidth]{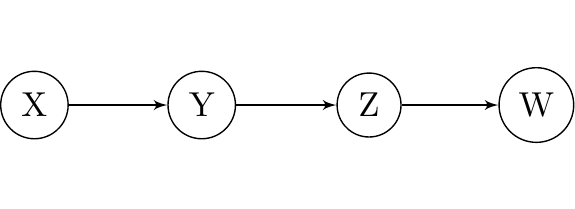}&
\includegraphics[width = 0.3\columnwidth]{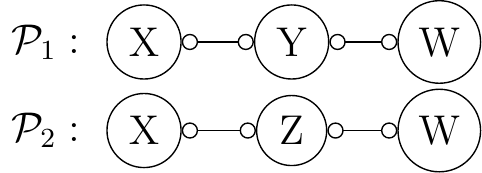}&
\includegraphics[width = 0.3\columnwidth]{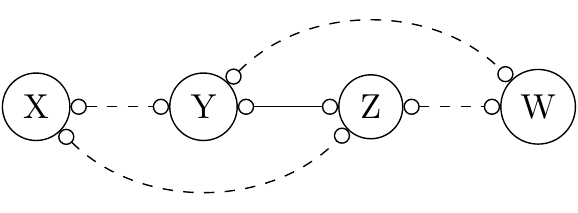}\\ \hline
\end{tabular}
\caption{\label{fig:inca_example}
\textbf{A detailed example of a non-trivial inference}. From left to right: The true underlying SMCM over variables \var
X, \var Y, \var Z, \var W;
PAGs $\graph P_1$ and $\graph P_2$ over $\{X, Y , W\}$ and $\{X, Z , W\}$, respectively; The output \graph H of Algorithm
\ref{algo:COmbINE} ran with an oracle of conditional independence. Notice that, the edges in $\graph P_1$ can not both
simultaneously occur in a consistent SMCM \graph S: This would make $\var X\protect\doublecirc\var
Y\protect\doublecirc\var W$ an inducing path for \var X and \var W with respect to $\set L_2 =\{Y\}$ and contradict the
features of $\graph P_2$, where \var X and \var W are not adjacent. Similarly, $\var X\protect\doublecirc\var
Z\protect\doublecirc\var W$ cannot occur in any consistent SMCM \graph S. The only possible edge structures that explain
all the observed adjacencies and definite non colliders are $\var X\protect\doublecirc\var Y\protect\doublecirc\var
Z\protect\doublecirc\var W$  or $\var X\protect\doublecirc\var Z\protect\doublecirc\var Y\protect\doublecirc\var W$.
Either way, \var Y and \var Z share an edge in all consistent SMCMs, and the algorithm will predict a solid edge between
\var Y and \var Z, even if the two have not been measured in the same data set. This example is discussed in detail in
\citep{Tsamardinos2012towards}.}
\end{figure}

Overall, \emph{Algorithm \ref{algo:COmbINE} takes as input a set of data sets and a list of parameters  and outputs a
summary graph that has all invariant edges and orientations of the SMCMs that satisfy as many constraints as possible
(according to some strategy)}.  The algorithm
is capable of non-trivial inferences, like for example the presence of a solid edge among variables never measured
together. Figures \ref{fig:example_io} and \ref{fig:inca_example} illustrate the output of Algorithm \ref{algo:COmbINE},
along with the corresponding input PAGs. For an oracle of conditional independence, Algorithm \ref{algo:COmbINE} is sound
and complete in the manner described in Theorem \ref{the:soundcomplete}. Lemmas \ref{lemma:indpaths} to
\ref{lemma:complete} are necessary for the proof of soundness and completeness: Lemma \ref{lemma:indpaths} proves that
the possibly inducing and ancestral paths employed by \combine\; are complete: for any consistent \graph S, if \var p is
a path  that is  inducing with respect to a set \set L (ancestral) in \graph S, \var p  is possibly inducing  with
respect to \set L (possibly ancestal) in the initial search graph \graph H, and will therefore be considered during
Algorithm \ref{algo:addConstraints}. This also implies that if there exists no possibly inducing (ancestral) path in
\graph H there exists no inducing (ancestral) in \graph S. Lemma \ref{lemma:sound} proves that any consistent SMCM \graph
S satisfies the final formula $\Phi \wedge F'$ of Algorithm \ref{algo:COmbINE}, and Lemma \ref{lemma:complete} proves
that any truth-setting assignment of the final formula $\Phi \wedge F'$ corresponds to a consistent SMCM \graph S.

\begin{lemma}\label{lemma:indpaths}Let $\{\graph P_i\}_{i=1}^N$ be a set of PAGs and \graph S a SMCM such that \graph S
is possibly underlying for $\{\graph P_i\}_{i=1}^N$, and let \graph H be the initial search graph returned by Algorithm
\ref{algo:initializeSMCM} for $\{\graph P_i\}_{i=1}^N$. Then, if \var p is an ancestral path in \graph S, then \var p is
a possibly ancestral path in \graph H. Similarly, if \var p is a possibly inducing path with respect to \set L in \graph
S, then \var p is a possibly inducing path with respect to \set L in \graph H.
\end{lemma}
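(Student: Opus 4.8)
The plan is to reduce the statement to two structural relations between $\graph H$ and any possibly underlying SMCM $\graph S$, and then build the required orientation of $\var p$ by hand. Write the relations as: \textbf{(E)} every adjacency of $\graph S$ is also an adjacency of $\graph H$; and \textbf{(O)} every endpoint that $\graph H$ commits to (as an arrowhead or a tail) coincides with the corresponding endpoint of $\graph S$ on that edge. Granting (E) and (O), I would finish as follows. Suppose $\var p$ is inducing with respect to $\set L$ (respectively, ancestral) in $\graph S$. By (E) all edges of $\var p$ are present in $\graph H$; by (O) any endpoint on $\var p$ that $\graph H$ has already oriented agrees with its orientation in $\graph S$, and all remaining endpoints on $\var p$ are circles. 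Orienting those circles exactly as in $\graph S$ reproduces, along $\var p$, the same collider/non-collider pattern and the same directed sub-paths witnessing that colliders are ancestors of the endpoints (their edges lie in $\graph H$ by (E)); hence under this orientation $\var p$ is inducing with respect to $\set L$ (respectively, directed), which is precisely the assertion that $\var p$ is possibly inducing with respect to $\set L$ (respectively, possibly ancestral) in $\graph H$. Thus everything reduces to (E) and (O).

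To prove (E), let $\var X\pagrightarrow\var Y$, $\var X\pagbidir\var Y$, or $\var X\smrightbidir\var Y$ be an edge of $\graph S$; such an edge is a trivial (primitive) inducing path, so by Theorems \ref{the:indpaths} and \ref{the:indpathssmcm} and Proposition \ref{prop:mcmindpaths} the pair $\var X, \var Y$ cannot be \m-separated by any subset in the relevant marginals. I would split into two cases. If some data set $i$ contains both variables unmanipulated, i.e. $\var X, \var Y\in\set O_i\setminus\set I_i$, the edge is untouched by the surgery for $\set I_i$ and, being a primitive inducing path, survives marginalization of $\set L_i$ as an adjacency; hence $\var X, \var Y$ are adjacent in $\graph P_i$ and the edge enters $\graph H$ at Line \ref{algoline:addedges}. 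Otherwise no data set observes both unmanipulated, the guard of Line \ref{algoline:addedgesu} is satisfied, and Algorithm \ref{algo:initializeSMCM} inserts $\var X\doublecirc\var Y$ explicitly. Either way the adjacency is in $\graph H$.

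For (O) I would treat the two sources of committed endpoints separately. The arrowheads added at Lines \ref{algoline:addedges1}--\ref{algoline:addedges2} concern a pair with $\var X\in\set I_i$, $\var Y\notin\set I_i$ for some $i$ that is nonetheless non-adjacent in every $\graph P_i$: if such an edge is in fact present in $\graph S$, then since it is absent from $\graph P_i$ it must be eliminated by the surgery for $\set I_i$, and graph surgery deletes only edges carrying an arrowhead into $\var X$; hence $\graph S$ carries an arrowhead into $\var X$, exactly as $\graph H$ records. The arrowheads kept at Line \ref{algoline:addOrientations} are those present in every $\graph P_i$; here I would argue the contrapositive. Suppose $\graph S$ has a tail at $\var Y$ on the $\var X$--$\var Y$ edge, i.e. $\var Y\pagrightarrow\var X$ with $\var Y\in\mathbf{An}_{\graph S}(\var X)$. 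In any data set $i$ containing the edge with $\var X\notin\set I_i$, the edge $\var Y\pagrightarrow\var X$ survives surgery, $\var Y$ stays an ancestor of $\var X$ in $\graph S\manip{\set I_i}$, and by the ancestry semantics of MAGs (an arrowhead at $\var Y$ requires $\var Y\notin\mathbf{An}(\var X)$) the endpoint at $\var Y$ in $\graph M_i$, and hence in $\graph P_i$, is a tail or a circle --- never a definite arrowhead. So the arrowhead at $\var Y$ is not invariant and is not transferred to $\graph H$.

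The step I expect to require the most care completes the Line \ref{algoline:addOrientations} half of (O) in the one remaining configuration: when \emph{every} data set containing the pair manipulates the target $\var X$, so that the direct edge $\var Y\pagrightarrow\var X$ is cut everywhere. One must then exclude the adjacency reappearing through an alternative inducing path that, in the manipulated marginals, legitimately bears an arrowhead at $\var Y$ (a manipulated variable has no ancestors, so $\var Y\notin\mathbf{An}(\var X)$ holds vacuously there, permitting the arrowhead). The decisive ingredient is acyclicity of the unmanipulated $\graph S$: from $\var Y\pagrightarrow\var X$ we get $\var X\notin\mathbf{An}_{\graph S}(\var Y)$, and since surgery only loses ancestry, $\var X\notin\mathbf{An}_{\graph S\manip{\set I_i}}(\var Y)$, while manipulating $\var X$ gives $\var Y\notin\mathbf{An}_{\graph S\manip{\set I_i}}(\var X)$. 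A node-by-node inspection of any candidate inducing path then shows it must leave the manipulated $\var X$ along a tail and that each interior node is forced to be a latent non-collider on a directed path emanating from $\var X$, so the path can only reach $\var Y$ by creating a directed path from $\var X$ to $\var Y$, contradicting $\var X\notin\mathbf{An}_{\graph S\manip{\set I_i}}(\var Y)$. Hence no inducing path survives, the edge is absent from every such $\graph P_i$, and the feared arrowhead never materializes. This careful tracking of how surgery on $\set I_i$ and marginalization of $\set L_i$ move endpoint marks --- anchored by the acyclicity of $\graph S$ --- is, I expect, the real work of the proof.
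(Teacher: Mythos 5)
Your proof is correct and follows essentially the same route as the paper's: establish that $\graph H$ contains a superset of $\graph S$'s adjacencies and that every arrowhead $\graph H$ commits to (Lines \ref{algoline:addOrientations}, \ref{algoline:addedges1}--\ref{algoline:addedges2}) agrees with the corresponding mark in $\graph S$, then transfer the orientation of $\var p$ edge by edge. The one point where you go beyond the paper is the ``remaining configuration'' for Line \ref{algoline:addOrientations}: where the paper simply asserts that if $\var Y\pagrightarrow\var X$ in $\graph S$ and $\var X\in\set I_i$ then the pair is non-adjacent in $\graph P_i$, you correctly supply the missing justification by using acyclicity of $\graph S$ to rule out any alternative inducing path between $\var X$ and $\var Y$ in $\graph S\manip{\set I_i}$ --- a step the paper's proof leaves implicit.
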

\begin{proof}  See Appendix \ref{app:proofs}.\end{proof}
\begin{lemma}\label{lemma:sound}Let $\{\set D_i\}_{i=1}^N$ be a set of data sets over overlapping subsets of \set O, and
$\{\set I_i\}_{i=1}^N$ be a set of (possibly empty) intervention targets such that $\set I_i\subset \set O_i$ for each i.
Let $\graph P_i$  be output PAG of  FCI  for data set $\set D_i$, $\Phi \wedge\graph F'$ be the final formula of
Algorithm \ref{algo:COmbINE}, and \graph S be a possibly underlying SMCM for $\{\set P_i\}_{i=1}^N$. Then \graph S
satisfies  $\Phi \wedge\graph F'$ .
\end{lemma}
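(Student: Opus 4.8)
The plan is to exhibit the natural truth-assignment $\tau$ determined by $\graph S$ and verify that it satisfies every conjunct of $\Phi$ together with every literal retained in $\graph F'$. Concretely, for each ordered pair $\var X, \var Y$ I set $edge(\var X,\var Y)$ true iff $\var X,\var Y$ are adjacent in $\graph S$, $arrow(\var X,\var Y)$ true iff the incident edge has an arrowhead at $\var Y$, and $tail(\var X,\var Y)$ true iff it has a tail at $\var Y$; I then force the auxiliary predicates ($ancestral$, $ancestor$, $head2head$, $unblocked$, $inducing$, $unshielded$, $discriminating$) to the unique values dictated by their defining biconditionals in Figure \ref{fig:sat_constraints}, and I set the feature variables $adjacent(\cdot,\graph P_i)$, $collider(\cdot,\graph P_i)$, $dnc(\cdot,\graph P_i)$ to the values actually observed in $\graph P_i$. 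The first thing I would establish is that, under this $\tau$, each auxiliary predicate computes its intended graph-theoretic meaning \emph{in the manipulated graph} $\graph S\manip{\set I_i}$: $ancestral(p,i)$ holds iff $p$ is a directed path in $\graph S\manip{\set I_i}$, so $ancestor(\var X,\var Y,i)$ holds iff $\var X\in\mathbf{An}_{\graph S\manip{\set I_i}}(\var Y)$; the $V_j\notin\set I_i$ and $tail$ clauses make $head2head$ and $unblocked$ track colliders/active triples correctly; and $inducing(p,i)$ holds iff $p$ is an inducing path with respect to $\set L_i$ in $\graph S\manip{\set I_i}$. This reduces the lemma to matching the graph properties of $\graph S\manip{\set I_i}$ against the recorded features of $\graph P_i$.

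Next I would invoke the hypothesis that $\graph S$ is possibly underlying, i.e. $\graph M_i=\mathrm{SMCMtoMAG}(\graph S\manip{\set I_i})\marg{\set L_i}\in\graph P_i$ for every $i$ (Definition \ref{def:probmain}). This yields, per data set, that $\graph M_i$ and $\graph P_i$ have the same adjacencies, the same unshielded colliders, and the same discriminating colliders. For the adjacency biconditionals (Line \ref{algoline:bc1}), adjacency of $\var X,\var Y$ in $\graph P_i$ is equivalent to adjacency in $\graph M_i$, which by Theorems \ref{the:indpaths} and \ref{the:indpathssmcm} together with Proposition \ref{prop:mcmindpaths} is equivalent to the existence of an inducing path with respect to $\set L_i$ in $\graph S\manip{\set I_i}$; by Lemma \ref{lemma:indpaths} such a path is possibly inducing in $\graph H$ and is therefore among the enumerated paths, so the right-hand disjunction of Line \ref{algoline:bc1} holds under $\tau$ exactly when the left-hand feature literal does.

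I would then discharge the remaining clauses. For the collider/non-collider conditionals (Lines \ref{algoline:bc2a}--\ref{algoline:bc3b}) I use the MAG edge semantics: an unshielded (resp. discriminating) collider at $\var Y$ in $\graph M_i$ forces $\var Y\notin\mathbf{An}_{\graph S\manip{\set I_i}}(\var X)\cup\mathbf{An}_{\graph S\manip{\set I_i}}(\var Z)$, whereas a definite non-collider forces $\var Y$ into one of these ancestor sets; since the antecedent literals kept in $\graph F'$ record precisely the collider status in $\graph P_i$, and $\graph M_i\in\graph P_i$, each implication is satisfied. The structural clauses (Lines \ref{algoline:ndc}, \ref{algoline:ntt}) hold because $\graph S$ is a genuine SMCM: it is acyclic, so $ancestor$ is antisymmetric; it has no undirected edges, so at most one tail per pair; and every present edge has an arrowhead or a tail at each endpoint. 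Finally, $\graph F'\subseteq\graph F$ and every literal of $\graph F$ is assigned its observed value by $\tau$, so $\tau$ satisfies $\graph F'$, completing the argument.

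I expect the main obstacle to be the adjacency direction, specifically reconciling the \emph{bounded} enumeration of Algorithm \ref{algo:addConstraints} (possibly-inducing paths in $\graph H$ of length at most $mpl$) with the a priori \emph{unbounded} inducing path guaranteed in $\graph S\manip{\set I_i}$: Lemma \ref{lemma:indpaths} supplies the ``every inducing/ancestral path of $\graph S$ survives in $\graph H$'' half, but I must additionally argue that whenever an inducing path exists there is one of length at most $mpl$ (or appeal to $mpl$ being taken large enough). A second subtlety is verifying the biconditional in the \emph{backward} direction, namely that no possibly-inducing path of $\graph H$ can evaluate to $inducing$ under $\tau$ unless it is a genuine inducing path of the \emph{manipulated} graph $\graph S\manip{\set I_i}$ rather than of $\graph S$ itself; this is exactly where the manipulation bookkeeping in Figure \ref{fig:sat_constraints} (the $V_j\notin\set I_i$ and $tail$ clauses) must be checked to make the encoded predicate faithful.
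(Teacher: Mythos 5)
Your proposal is correct and follows essentially the same route as the paper's proof: define the truth-assignment from $\graph S$, use the possibly-underlying hypothesis to equate the features of $\graph M_i$ with those of $\graph P_i$, invoke Proposition \ref{prop:mcmindpaths} and Lemma \ref{lemma:indpaths} for the adjacency biconditionals, MAG edge semantics for the (non-)collider conditionals, and the SMCM axioms for the structural clauses. The two obstacles you flag are real but are handled the same way the paper handles them, namely implicitly: the lemma is understood to apply to the complete version of the algorithm (unrestricted $mpl$, so every inducing path of $\graph S\manip{\set I_i}$ is among the enumerated possibly-inducing paths of $\graph H$), and the faithfulness of the $inducing$/$ancestral$ encodings to the manipulated graph is taken for granted rather than verified clause by clause.
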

\begin{proof} See Appendix \ref{app:proofs}.\end{proof}
\begin{lemma}\label{lemma:complete}Let $\{\set D_i\}_{i=1}^N$, $\{\set I_i\}_{i=1}^N$, $\{\set P_i\}_{i=1}^N$, $\Phi
\wedge\graph F'$ be defined as in Lemma \ref{lemma:sound}. If graph S satisfies  $\Phi \wedge\graph F'$, then \graph S is
a possibly underlying SMCM for $\{\set P_i\}_{i=1}^N$.
\end{lemma}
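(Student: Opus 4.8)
The plan is to read the clauses of $\Phi\wedge\graph F'$ in the direction opposite to Lemma \ref{lemma:sound}: rather than showing that a possibly underlying SMCM forces the encoded structural properties, I will show that any SMCM $\graph S$ whose $edge/tail/arrow$ assignment satisfies the formula reproduces, for each $i$, the Markov equivalence class of $\graph P_i$ through the derived MAG $\graph M_i=\textnormal{SMCMtoMAG}(\graph S\manip{\set I_i})\marg{\set L_i}$. In the oracle/sample-limit regime underlying Theorem \ref{the:soundcomplete} there are no conflicts, so $\graph F'=\graph F$ contains a literal for every adjacency and every (non-)collider of every $\graph P_i$; it therefore suffices to prove that $\graph M_i$ and $\graph P_i$ agree on the three invariants of the Markov-equivalence characterization stated earlier, namely the skeleton, the unshielded colliders, and the discriminating colliders. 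Constraints 6--8 guarantee that a satisfying assignment is a genuine SMCM (acyclic, no selection bias, every edge bearing at least one arrowhead), so $\graph S$ and hence $\graph M_i$ are well defined.

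First I would establish two ``faithfulness of the encoding'' facts that make the SAT predicates mean what their names suggest on $\graph S\manip{\set I_i}$. For ancestry, $ancestor(X,Y,i)$ is true exactly when some path satisfies $ancestral(\cdot,i)$, i.e.\ a directed path from $X$ to $Y$ that avoids edges into manipulated variables; this holds iff $X$ is an ancestor of $Y$ in $\graph S\manip{\set I_i}$. Since SMCMtoMAG only adds bi-directed edges for primitive inducing paths and orients any added edge by existing ancestry, and since marginalizing $\set L_i$ preserves ancestral relations among retained nodes (Definition \ref{def:margMAG}), this ancestry among $\set O_i$ coincides with ancestry in $\graph M_i$. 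For adjacency, $inducing(p_{XY},i)$ encodes precisely the definition of an inducing path with respect to $\set L_i$ in $\graph S\manip{\set I_i}$, the manipulation being captured by the $V_j\notin\set I_i$ and $tail(\cdot)$ clauses that forbid edges into manipulated variables; and by Lemma \ref{lemma:indpaths} every such path already appears among the possibly inducing paths of $\graph H$ enumerated in Algorithm \ref{algo:addConstraints}. Hence $\exists p_{XY}:inducing(p_{XY},i)$ is true iff an inducing path exists in $\graph S\manip{\set I_i}$.

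With these in hand the skeleton step is immediate: the bi-conditional of Line \ref{algoline:bc1}, together with the fixed literal $adjacent(X,Y,\graph P_i)$ in $\graph F$, forces adjacency of $X,Y$ in $\graph P_i$ to be equivalent to the existence of an inducing path w.r.t.\ $\set L_i$ in $\graph S\manip{\set I_i}$, which by Theorems \ref{the:indpaths} and \ref{the:indpathssmcm} and Proposition \ref{prop:mcmindpaths} is equivalent to adjacency of $X,Y$ in $\graph M_i$; so $\graph M_i$ and $\graph P_i$ share a skeleton, and hence the same unshielded triples. For the unshielded colliders I would use the MAG edge semantics, under which the edge $XY$ carries an arrowhead at $Y$ iff $Y$ is not an ancestor of $X$. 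For each unshielded triple of $\graph P_i$ the satisfied conditional of Line \ref{algoline:bc2a} or \ref{algoline:bc2b}, fired by the corresponding $dnc$/$collider$ literal of $\graph F$, forces $Y$ to be, respectively, an ancestor of $X$ or $Z$, or a non-ancestor of both, in $\graph S\manip{\set I_i}$; by the ancestry fact this is exactly the (non-)collider status of $Y$ in $\graph M_i$. Because in the PAG of a Markov-equivalence class every unshielded triple is a definite collider or a definite non-collider, these two implications exhaust all cases and yield equality of unshielded colliders. The discriminating-collider step (Lines \ref{algoline:bc3a}, \ref{algoline:bc3b}) is structurally identical. Collecting the three agreements gives $\graph M_i\in[\graph P_i]$ for every $i$, i.e.\ $\graph M_i\in\prob J_i$ in the sense of Definition \ref{def:probmain}, so $\graph S$ is possibly underlying.

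The main obstacle I anticipate is the discriminating-collider case. Unlike an unshielded triple, whether a path is discriminating for $Y$ is not fixed by adjacencies alone: it requires every intermediate node to be a collider and a parent of the far endpoint, which is an orientation condition. I therefore expect the delicate part to be arguing that a path discriminating for $Y$ in $\graph P_i$ is also discriminating for $Y$ in $\graph M_i$ (and conversely), so that the conditionals of Lines \ref{algoline:bc3a}--\ref{algoline:bc3b} and the encoded $discriminating$ predicate really pin down the same feature in both graphs; this must be extracted from the already-established skeleton agreement and the ancestry-faithfulness fact, together with a careful line-by-line comparison of the $discriminating$ encoding in Figure \ref{fig:sat_constraints} against the graphical definition. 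A secondary technical point is verifying the ancestry-preservation claim through the two-stage map SMCMtoMAG followed by $\marg{\set L_i}$, which I would isolate as a small auxiliary fact rather than reprove inline.
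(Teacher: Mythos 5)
Your proposal follows essentially the same route as the paper's proof: verify SMCM-hood from the acyclicity/tail constraints, then check the three Markov-equivalence invariants (skeleton via the inducing-path bi-conditionals, unshielded colliders via the ancestry conditionals, discriminating colliders likewise) to conclude $\graph M_i\in[\graph P_i]$ for each $i$. If anything you are more explicit than the paper on two points it glosses over --- that the $ancestor$/$inducing$ predicates faithfully track the graphical notions in $\graph S\manip{\set I_i}$ after SMCMtoMAG and marginalization, and that discriminating-path status itself must be shown to transfer between $\graph P_i$ and $\graph M_i$ --- so the proposal is correct and matches the paper's argument.
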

\begin{proof} See Appendix \ref{app:proofs}.\end{proof}

\begin{theorem}\label{the:soundcomplete}(Soundness and completeness of Algorithm \ref{algo:COmbINE})
Let $\{\set D_i\}_{i=1}^N$, $\{\set I_i\}_{i=1}^N$, $\{\set P_i\}_{i=1}^N$, $\Phi \wedge\graph F'$ be defined as in Lemma
\ref{lemma:sound}. Finally, let \graph H be the summary graph returned by \combine\;. Then the following hold: \\
\textbf{Soundness}: If a feature (edge, absent edge, endpoint) is \emph{solid} in \graph H, then this feature is present
in \emph{all} consistent SMCMs. \\
\textbf{Completeness}: If a feature is present in \emph{all} consistent SMCMs, the feature is solid in \graph H.
\end{theorem}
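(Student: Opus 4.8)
The plan is to combine the three preceding lemmas into an exact correspondence between consistent (possibly underlying) SMCMs and satisfying truth-assignments of $\Phi \wedge \graph F'$, and then to observe that, once this correspondence is in place, ``solid feature of the summary graph'' and ``feature present in every consistent SMCM'' become two descriptions of the same condition.

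First I would pin down the decoding between assignments and graphs. A truth-setting assignment of the core variables $edge$, $tail$, $arrow$ over the pairs appearing in \graph H determines a unique mixed graph, and conversely every SMCM all of whose edges lie inside \graph H determines a unique assignment to these core variables. Lemma \ref{lemma:indpaths} is what makes this decoding total: no consistent \graph S can contain an edge outside \graph H, because a direct edge is an inducing path of length one and must therefore already appear as a possibly inducing path in \graph H. Feeding this into Lemma \ref{lemma:sound} (every consistent \graph S satisfies $\Phi \wedge \graph F'$) and Lemma \ref{lemma:complete} (every satisfying assignment decodes back to a consistent \graph S), I obtain that $\graph S \mapsto (\text{its assignment of } edge, tail, arrow)$ is a bijection between the consistent SMCMs and the satisfying assignments of $\Phi \wedge \graph F'$.

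Given the bijection, both directions are short. For soundness, suppose a feature is solid. If it is a solid edge, a solid endpoint, or an absent edge that lies in \graph H, then the matching core variable is a backbone variable of $\Phi \wedge \graph F'$ and hence has a single polarity across all satisfying assignments; by the bijection that feature is present (respectively absent) in every consistent SMCM. The one remaining case is an absent edge that does not lie in \graph H, which Lemma \ref{lemma:indpaths} disposes of directly: such an edge cannot occur in any consistent \graph S, so it is indeed absent everywhere. For completeness, suppose a feature holds in every consistent SMCM. Through the bijection the associated core variable takes one and the same value in every satisfying assignment, so it is by definition a backbone variable of that polarity, and the feature is therefore solid; absent edges are handled symmetrically by splitting on whether they lie in \graph H (backbone of polarity $0$) or outside it (absent by construction).

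The logical skeleton above is almost immediate; the real work is making the bijection airtight, i.e.\ checking that $edge$, $tail$, $arrow$ encode an SMCM \emph{faithfully and injectively}, so that constancy across satisfying assignments and constancy across consistent SMCMs really do coincide with no information lost. In particular one must verify that $tail$ and $arrow$---which are deliberately not mutually exclusive so as to represent $\var X\smrightbidir\var Y$ edges---recover exactly the endpoint marks of \graph S, and that the well-formedness constraints (no directed cycle and at least one arrowhead per edge) exclude precisely those assignments that would decode to a non-SMCM. Once the encoding is confirmed edge- and endpoint-faithful, the theorem collapses to the tautology that a propositional variable is constant over a family of models iff it is a backbone variable, which is exactly how the solid features were defined.
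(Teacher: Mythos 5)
Your proposal is correct and follows essentially the same route as the paper's proof: soundness comes from Lemma \ref{lemma:sound} (every consistent SMCM yields a satisfying assignment, so a backbone variable's polarity is shared by all of them) and completeness from Lemma \ref{lemma:complete} (every satisfying assignment decodes to a consistent SMCM, so a feature shared by all consistent SMCMs forces a backbone variable). Your explicit packaging of this as a bijection, and your separate treatment of absent edges lying outside $\graph H$ via Lemma \ref{lemma:indpaths}, only makes explicit details that the paper leaves implicit in the definition of the summary graph.
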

\begin{proof} \emph{Soundness:} Solid features correspond to backbone variables. By Lemma \ref{lemma:sound} every
possible underlying SMCM \graph S for $\{\set P_i\}_{i=1}^N$ satisfies the final formula $\Phi \wedge \graph F'$. Thus,
if a core variable has the same value in all the possible truth-setting assignments of $\Phi \wedge \graph F'$, this
feature is present in all possible underlying SMCMs.
\emph{Completeness:} By Lemma \ref{lemma:complete} the final formula $\Phi \wedge \graph F$' of Algorithm
\ref{algo:COmbINE} is satisfied only by possibly underlying SMCMs. Thus, if a core variable is present in \emph{all}
consistent SMCMs, the corresponding core variable will be a backbone variable for $\Phi \wedge \graph F'$.
\end{proof}

\subsection{A strategy for conflict resolution based on the Maximum MAP Ratio}
In this section, we present a method for assigning a measure of confidence to every literal in list \graph F described in
Algorithm \ref{algo:COmbINE}, and a strategy for selecting a subset of non-conflicting constraints. List \graph F
includes four types of literals, expressing different statistical information:
\begin{enumerate}
\item $adjacent(X, Y, \graph P_i)$: \var X and \var Y are independent given some $\set Z\subset \set O_i$
\item $\neg adjacent(X, Y, \graph P_i)$: \var X and \var Y are not independent given any subset of $\set O_i$.
\item $collider(X, Y, Z, \graph P_i)$: \var  Y is in no subset of $\set O_i$ that renders \var X and \var Z
    independent.
\item $dnc(X, Y, Z, \graph P_i)$: \var Y is in every subset of $\set O_i$ that renders \var X and \var Z
    independent.
\end{enumerate}
For the scope of this work, we will  focus on ranking the first two types of antecedents: Adjacencies and
non-adjacencies. We will then assign unshielded colliders and non-colliders to the same rank as the non-adjacency of the
triple's endpoints; similarly, discriminating colliders and non-colliders will be assigned to the same rank as the
non-adjacency of the path's endpoints. Naturally, this criterion of sorting colliders and non-colliders is merely a
heuristic, as more than one tests of independence are involved in deciding that a triple is a (non) collider.

Assigning a measure of likelihood or posterior probability to every single (non) adjacency would enable their comparison.
A non-adjacency in a PAG corresponds to a conditional independence given some subset of the observed variables. In
contrast, an adjacency corresponds to the lack of such a subset. Thus, an edge between \var X and \var Y should be
present in $\graph P_i$ if the evidence (data) is less in favor of hypothesis:
\begin{equation}H_0: \exists \set Z\subset \set O_i: Ind(\var X, \var Y|\set Z)\textnormal{
than the alternative } H_1: \not\exists  \set Z\subset \set O_i: Ind(\var X, \var Y|\set
Z)\label{eq:initialHypotheses}\end{equation}
This is a complicated set of hypotheses, that involves multiple tests of independence. We try to approximate testing by
using a single test of independence as a surrogate: During FCI, several conditioning sets are tested for every pair of
variables \var X and \var Y. Let $\set Z_{XY}$ be the conditioning test for which the highest p-value is identified for
the given pair of variables. Notice that it is this maximum p-value that is employed in FCI and similar algorithms to
determine whether an edge is included in the output or not. We use the set of hypotheses \[H_0:Ind(\var X, \var Y|\set
Z_{XY})\textnormal{ against the alternative }H_1:\neg Ind(\var X, \var Y|\set Z_{XY})\] as a surrogate for the set of
hypotheses in Equation \ref{eq:initialHypotheses}. Under the null hypothesis, the p-values follow a uniform
$\mathcal{U}([0, 1])$ distribution\footnote{This is actually an approximation in this case, since these p-values are
maximum p-values over several tests.}, also known as the $Beta(1, 1)$ distribution. Under the alternative hypothesis, the
density of the p-values should be decreasing in \var p. One class of decreasing densities is the $Beta(\xi, 1)$
distribution for $0<\xi<1$, with density $f(p|\xi)= \xi p^{\xi-1}$.  Thus, we can approximate the null and alternative
hypotheses in terms of the p-value as
\begin{equation}H_0:p_{XY.\set Z}\sim Beta(1,1) \text{ against }H_1: p_{XY.\set Z}\sim Beta(\xi, 1) \text{ for some
}\xi\in(0, 1).\label{eq:beta_alt}\end{equation}
Taking the Beta alternatives was presented as a method for calibrating p-values in \cite{sellke2001calibration}. For the
purpose of this work, we use them to estimate whether dependence is more probable than independence for a given p-value
\var p, by estimating which of the Beta alternatives it is most likely to follow.

Let \graph F be a set of $M$ literals corresponding to adjacencies and non-adjacencies, and $\{ p_j\}_{j=1}^M$ the
respective maximum p-values: If the j-th literal in \graph F is $(\neg) adjacent(X, Y, \graph P_i)$, then $p_j$ is the
maximum p-value obtained for \var X, \var Y during FCI over $\set D_i$. We assume that this population of p-values
follows a mixture of $Beta(\xi, 1)$ and $Beta(1,1)$ distribution. If $\pi_0$  is the proportion of p-values following
$Beta(\xi, 1)$,  the probability density function is

\begin{equation*}f(p|\xi, \pi_0) = \pi_0+(1-\pi_0)\xi p^{\xi-1}\end{equation*} and the likelihood for a set of p-values
$\{p_j\}_{j=1}^M$ is
\begin{equation*}L(\xi, \pi_0) = \prod_j (\pi_0+(1-\pi_0)\xi p_j^{\xi-1}).\end{equation*}
The respective negative log likelihood is
\begin{equation}\label{eq:nll}-LL(\xi, \pi_0) = -\sum_j log(\pi_0+(1-\pi_0)\xi p_i^{\xi-1}).\end{equation}
For given estimates $\hat{\pi_0}$ and $\hat{\xi}$, the MAP ratio of $H_0$ against $H_1$ is

\begin{equation*}E_0(p) = \frac{P(p|H_0)P(H_0)}{P(p|H_1)P(H_1)} = \frac{P(p|p\sim Beta(1, 1))P(p\sim Beta(1,
1))}{P(p|p\sim Beta(\hat{\xi}, 1))P(p\sim Beta(\hat{\xi}, 1))}=\frac{\hat{\pi_0}}{\hat{\xi}
p^{\hat{\xi}-1}(1-\hat{\pi_0})}.\end{equation*}
$E_0(p)>1$ implies that for the test of independence represented by the p-value \var p, independence is more probable
than dependence, while $E_0(p)<1$ implies the opposite. Moreover, the value of $E_0(p)$ \emph{quantifies} this belief.
Conversely, the corresponding MAP ratio of $H_1$ against $H_0$ is
\begin{equation*}E_1(p) = \frac{\hat{\xi}
p^{\hat{\xi}-1}(1-\hat{\pi_0})}{\hat{\pi_0}}.\label{eq:mapratio2}\end{equation*}
We define the \textbf{maximum MAP ratio (MMR)} for a p-value \var p to be the maximum between the two:
\begin{equation}E(p) = max\big\{\frac{\hat{\pi_0}}{\hat{\xi} p^{\hat{\xi}-1}(1-\hat{\pi_0})}, \frac{\hat{\xi}
p^{\hat{\xi}-1}(1-\hat{\pi_0})}{\hat{\pi_0}}\big\}.\label{eq:mapRatio}\end{equation}

\begin{figure}[!t]
\begin{center}
\begin{tabular}{cccc}
\includegraphics[width = 0.3\columnwidth]{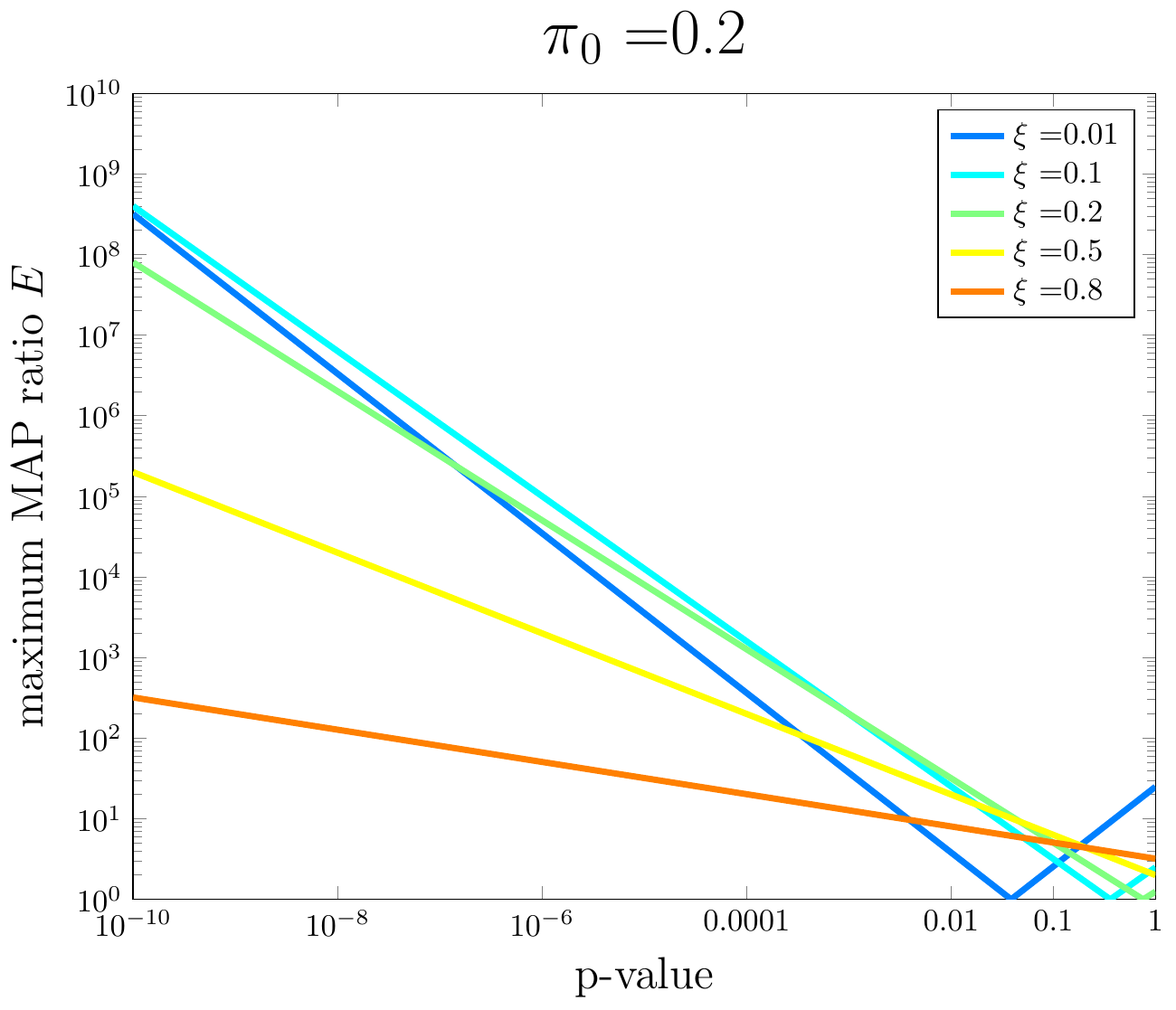}&
\includegraphics[width = 0.3\columnwidth]{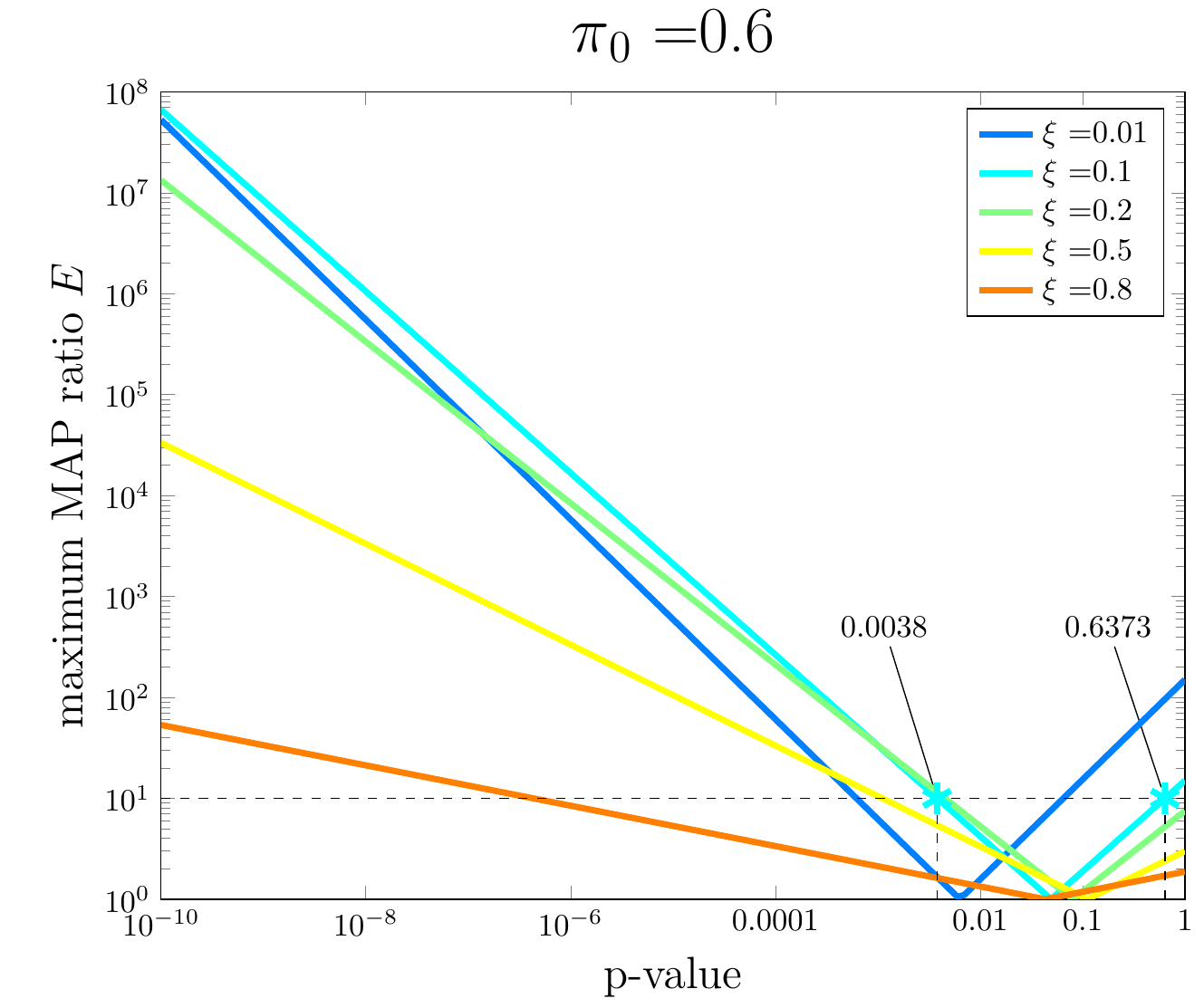}&
\includegraphics[width = 0.3\columnwidth]{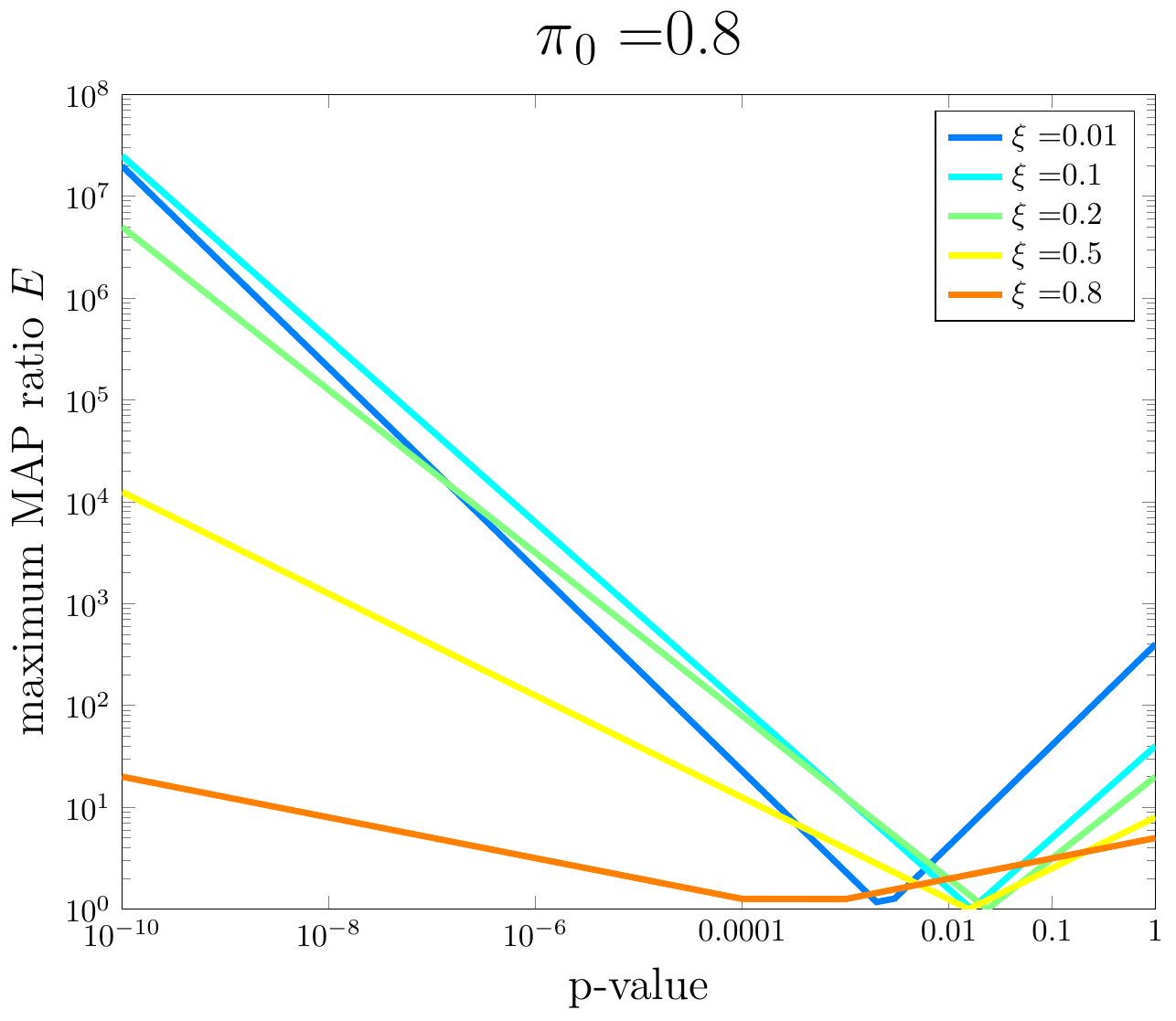}\\
\end{tabular}
\end{center}
\caption{\label{fig:mapratios}\textbf{Log of the maximum map ratio $E(p)$ versus log of the p-value \var p for various
$\hat{\pi_0}$ and $\hat{\xi}$.}. For $\hat{\pi_0} = 0.6$ and $\hat{\xi} = 0.1$, an adjacency supported by a maximum
p-value of 0.0038 corresponds to the same $E$ as a non-adjacency supported by a p-value of 0.6373. The intersection point
of the line with the x axis is the p for which $E_0(p) =E_1(p)=1$.}
\end{figure}
\begin{figure}[!t]
\begin{center}
\includegraphics[width = 0.3\columnwidth]{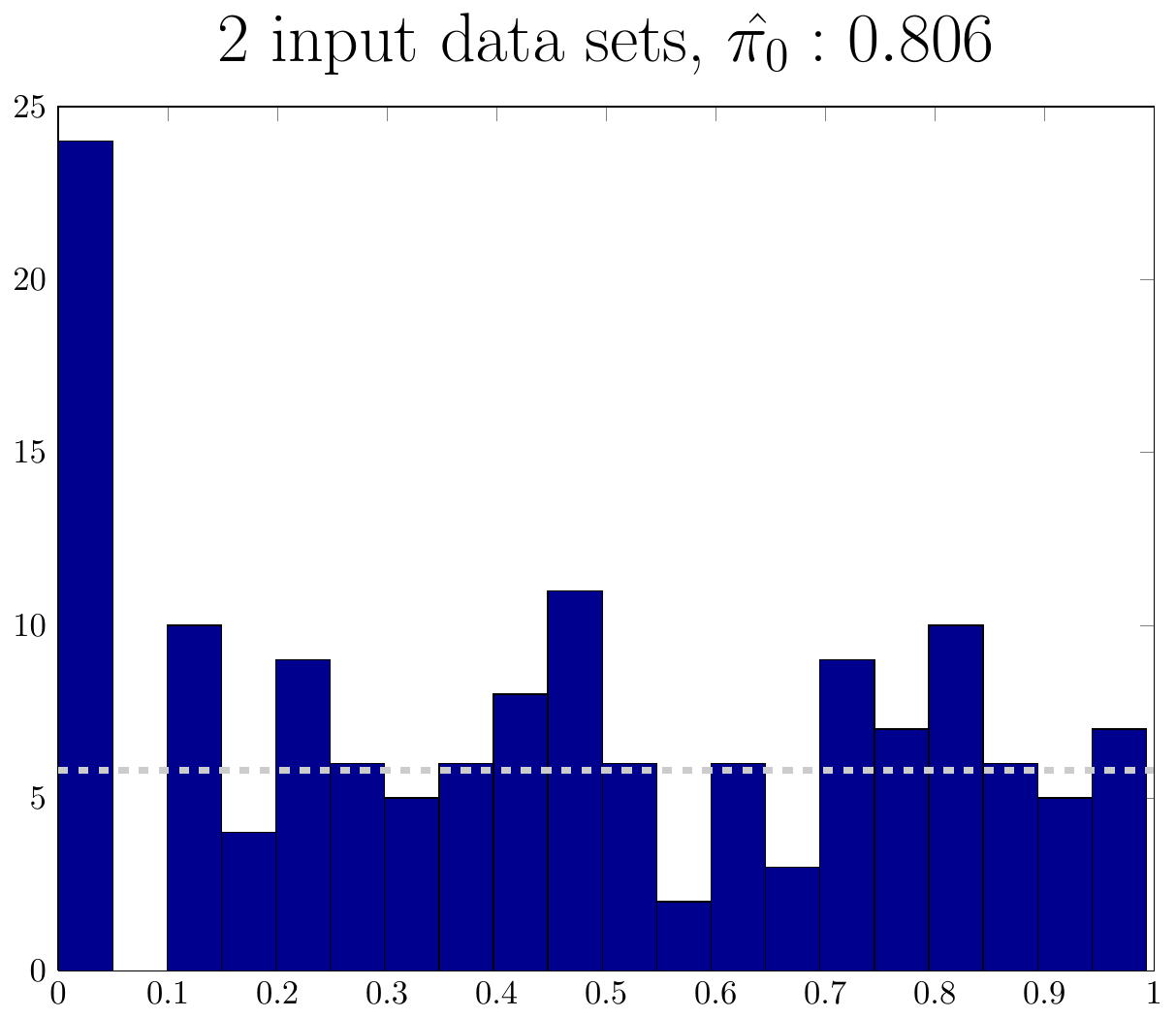}
\includegraphics[width = 0.3\columnwidth]{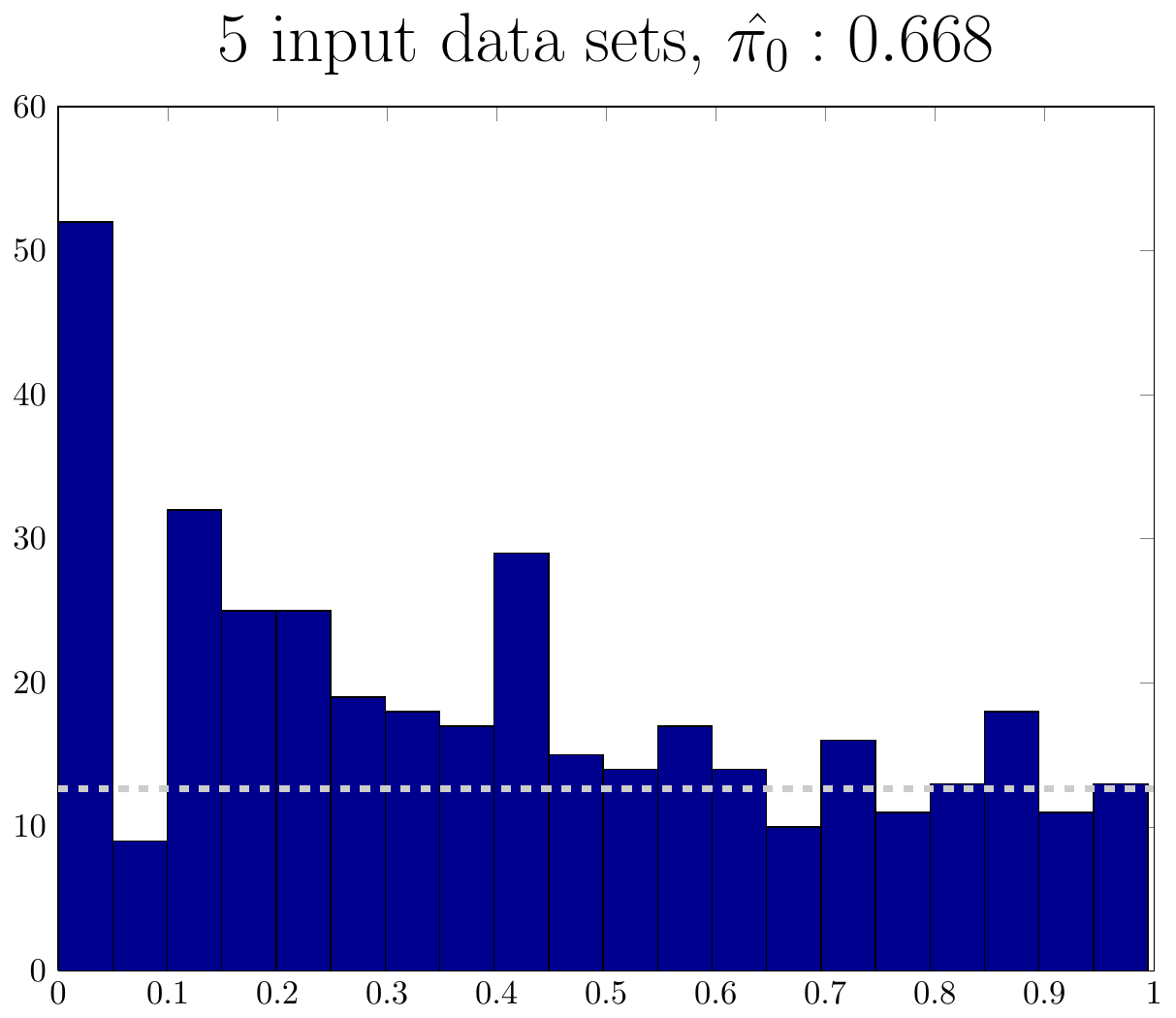}
\includegraphics[width = 0.3\columnwidth]{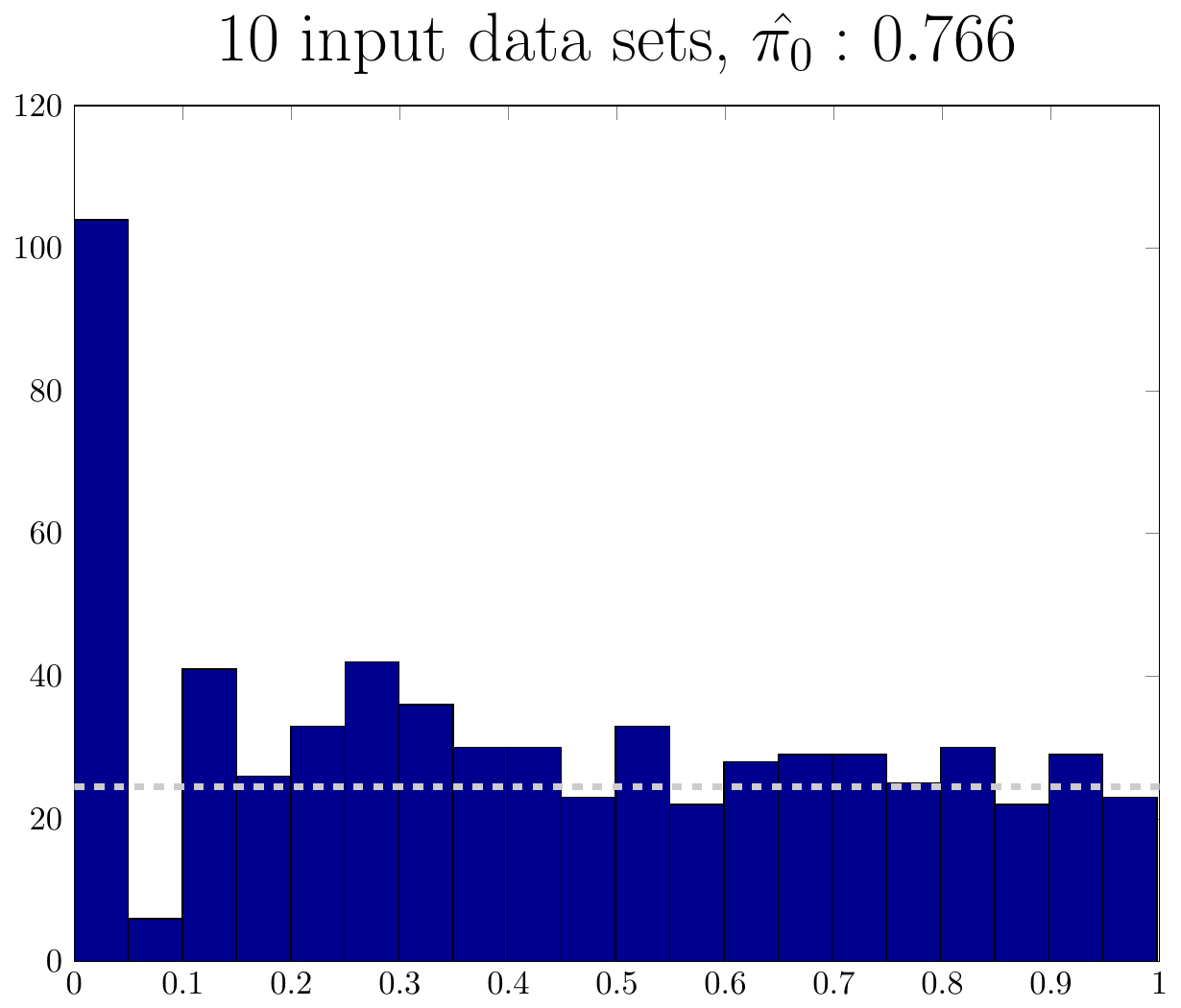}\end{center}
\caption{\label{fig:pi0xiALARM}\textbf{Distribution of p-values and estimated $\hat{\pi_0}$}. We used the method of
\cite{Storey2003} to estimate $\hat{\pi_0}$ for a sample of p-values corresponding to 2 (left), 5 (center) and 10 (right)
input data sets. We generated networks by manipulating a marginal of the ALARM network \citep{Beinlich1989} consisting of
14 variables. In each experiment, at most 3 variables were latent and at most 2 variables were manipulated. We  simulated
data sets of 100 samples each from the resulting manipulated graphs. We ran FCI on each data set with $\alpha =0.1$ and
$maxK=5$ and cached the maximum p-value reported for each pair of variables. We used the p-values from all data sets to
estimate $\hat{\pi_0}$. The dashed line corresponds to the proportion of p-values that come from the null distribution
based on the estimated $\hat{\pi_0}$.}
\end{figure}

MMR estimates heuristically quantify our confidence in the observed adjacencies and non-adjacencies and are employed to
create a  list of literals as follows:  Let \var X and \var Y be a pair of observed variables, and $\var p_{XY}$ be the
maximum p-value reported during FCI for these variables. Then, if $E_0(p_{XY})>E_1(p_{XY})$, the literal $\neg
adjacent(X, Y, i)$ is added to \graph F  with confidence estimate $E(p_{XY})$. Otherwise, the literal $adjacent(X, Y, i)$
is added to \graph F with a confidence estimate $E(p_{XY})$. The list can then be sorted in order of confidence, and the
literals can be satisfied incrementally. Whenever a literal in the list is encountered that cannot be satisfied in
conjunction with the ones already selected, it is ignored.

Notice that, it is possible that for a p-value $E_0(p_{XY})>E_1(p_{XY})$ (i.e., MMR determines independence is more
probable), even though $p_{XY}$ is smaller than the FCI threshold used. In other words, given a fixed FCI threshold,
dependence maybe accepted; but, when analyzing the set of p-values encountered to compute MMR, independence seems more
probable. The reverse situation is also possible. The pseudo-code in Algorithm \ref{algo:MMRstrategy} (Lines 6---10)
accepts the MMR decisions for dependencies and independencies; {\em this is equivalent to dynamically readjusting the
decisions made by FCI}. Nevertheless, in anecdotal experiments we found that the literals for which this situation occurs
are near the end of the sorted list; thus, whether one accepts the initial decisions of FCI based on a fixed threshold,
or a dynamic threshold based on MMR usually does not have a large impact on the output of the algorithm.

\begin{algorithm}[t!]
\SetKwInOut{Input}{input}\SetKwInOut{Output}{output}
\SetKwFunction{FCI}{FCI}
\Input{SAT formula $\Phi$, list of literals \graph F, their corresponding p-values $\{p_j\}$}
\Output{List of non conflicting literals $\graph F'$}
\BlankLine
$\graph F'\leftarrow\emptyset$\;
Estimate $\hat{\pi_0}$ from  $\{p_j\}$ using the method described in \cite{Storey2003}\;
Find $\hat{\xi}$ that minimizes $-\sum_j log(\hat{\pi_0}+(1-\hat{\pi_0})\xi p_i^{\xi-1})$\;
\ForEach{literal $(\neg)adjacent(X, Y, \graph P_i) \in \graph F$ with p-value $p_j$}{
$E_0(p_j)\leftarrow\frac{\hat{\pi_0}}{\hat{\xi} p_j^{\hat{\xi}-1}(1-\hat{\pi_0})}$, $E_1(p_j)\leftarrow \frac{\hat{\xi}
p_j^{\hat{\xi}-1}(1-\hat{\pi_0})}{\hat{\pi_0}}$\;
\uIf{$E_1(p_j)<E_0(p_j)$}{add  $\neg adjacent(X, Y, \graph P_i)$ to \graph F}
\Else{add  $adjacent(X, Y, \graph P_i)$ in \graph F}
$Score(literal)\leftarrow max\{E_0(p_j),E_1(p_j)\}$\;
}
\ForEach{literal $collider(X, Y, Z, \graph P_i)$, $dnc(X, Y, Z, \graph P_i)$}{
\uIf{\var X, \var Y, \var Z is an unshielded triple in $\graph P_i$}{$Score(literal)\leftarrow Score(X, Z, \graph
P_i)$\;}
\ElseIf{$\langle\var W \dots\var X, \var Y, \var Z\rangle$ is discriminating for \var Y in $\graph
P_i$}{$Score(literal)\leftarrow Score(W, Z, \graph P_i)$\;}
}
$\graph F\leftarrow$ sort \graph F by descending score\;

\ForEach{$\phi \in \graph F$}{\label{algoline:sat}
\If{$\Phi\wedge \phi$ is satisfiable}{$\Phi\leftarrow\Phi \wedge \phi$\; Add $\phi$ to $\graph F'$}\label{algoline:sate}
}
\caption{MMRstrategy}\label{algo:MMRstrategy}
\end{algorithm}

Figure \ref{fig:mapratios} shows how the MMR varies with the p-value for several combinations of $\hat{\pi_0}$ and $\hat
{\xi}$. The lowest possible value of the MMR is 1, and corresponds to the p-value \var p for which $E_0(p)=E_1(p)$.
Naturally, for the same $\xi$, this p-value (where the odds switch in favor of non-adjacency) is larger for a lower
$\pi_0$. In Figure \ref{fig:mapratios} for $\pi_0=0.6$ we can see an example of two p-values that correspond to the same
$E$: An adjacency represented by a p-value of $0.0038$ ($0.0038$ being the \emph{maximum} p-value of any test performed
by FCI for the pair of variables) is as likely as a non-adjacency represented by a p-value of $0.6373$ ($0.6373$ being
the  p-value based on which FCI removed this edge).

To obtain MMR estimates, we need to estimate $\pi_0$ and $\xi$. We used the method described in \cite{Storey2003} to
estimate $\pi_0$ on the pooled (maximum) p-values $\{p_j\}_{j=1}^M$ over all data-sets obtained during FCI. For a given
$\hat{\pi_0}$, Equation \ref{eq:nll} can then be easily optimized for $\xi$.

The method used to obtain $\hat{\pi_0}$ assumes independent p-values, which is of course not the case since the test
schedule of FCI depends on previous decisions. In addition, each p-value may be the maximum of several p-values; these
maximum p-values may not follow a uniform distribution even when the non-adjacency (null) hypothesis is true. Finally,
given that p-values stem from tests over different conditioning set sizes, p-values corresponding to adjacencies do not
necessarily follow the same beta distribution. Thus, the approach presented here is at best an approximation.

In the algorithm as presented, a single beta is fit from the pooled p-values of FCI runs over all data-sets. This is
strategy is perhaps more appropriate when individual data-sets have a small number of p-values, so the pooled set
provides a larger sample size for the fitting. Other strategies though, are also possible. One could instead fit a
different beta for each data-set and its corresponding set of p-values. This approach could perhaps be more appropriate
in case the PAG structures $\graph P_i$ vary greatly in terms of sparseness. In addition, one could also fit different
beta distributions for each conditioning set size. Figure \ref{fig:pi0xiALARM} shows the empirical distribution of
p-values and the estimated $\hat{\pi_0}$ based on the p-values returned from FCI on 2, 5 and 10 input data sets,
simulated from a network of 14 variables.

The strategy for selecting non-conflicting constraints based on the MMR strategy is presented in Algorithm
\ref{algo:MMRstrategy}. MMR is a general criterion that can be used to compare confidence in dependencies and
independencies. The method is based on  p-values and thus, can be applied in different types of data (e.g., continuous
and discrete) in conjunction with any appropriate test of independence. Moreover, since it is based on cached p-values,
and fitting a beta distribution is efficient, it adds minimal computational complexity. On the other hand, the estimation
of maximum MAP ratios is based on heuristic assumptions and approximations. Nevertheless, experiments presented in the
following section showcase that the method works similarly if not better than other conflict resolution methods, while
being orders of magnitude computationally more efficient.

\section{Experimental Evaluation\label{sec:experiments}}

We present a series of experiments to characterize how the behavior of \combine\; is affected by the characteristics of
the problem instance and compare it against other alternative algorithm in the literature. We also present a comparative
evaluation of conflict resolution methods, including the one based on the proposed MMR estimation technique. Finally, we
present a proof-of-concept application on real mass cytometry data on human T-cells. In more detail, we initially compare
the complete version of \combine\; (i.e., without restrictions on the maximum path length or the conditioning set)
against SBCSD \citep{hyttinen2013discovering} in ideal conditions (i.e., both algorithms are provided with an
independence oracle). We perform a series of experiments to explore the (a) learning accuracy of \combine\; as a function
of the maximum path length considered by the algorithm, the density and size of the network to reconstruct, the number of
input data sets, the sample size, and the number of latent variables, and (b) the computational time as a function of the
above factors.

All experiments were performed on data simulated from randomly generated networks as follows. The graph of each network
is a DAG with a specified number of variables and maximum number of parents per variable. Variables are randomly sorted
topologically and for each variable the number of parents is uniformly selected between 0 and the maximum allowed. The
parents of each variable are selected with uniform probability from the set of preceding nodes. Each DAG is then coupled
with random parameters to generate conditional linear gaussian networks. To avoid very weak interactions, minimum
absolute conditional correlation was set to 0.2. Before generating a data set, the variables of the graph are partitioned
to unmanipulated, manipulated, and latent. Mean value and standard deviation for the manipulated variables were set to 0
and 1, respectively. Subsequently, data instances are sampled from the network distribution, considering the
manipulations and removing the latent variables. All experiments are performed on \textbf{conservative} families of
targets; the term was introduced in \cite{Hauser2012} to denote families of intervention targets in which all variables
have been observed unmanipulated at least once.

\begin{table}[!t]
\centering
 \begin{tabular}{|c|c|}\hline
\textbf{Problem attribute}&\textbf{Default value used}\\ \hline \hline
Number of variables in the generating DAG & 20 \\ \hline
Maximum number of parents per variable & 5 \\ \hline
Number of input data sets & 5 \\ \hline
Maximum number of latent variables per data set & 3 \\ \hline
Maximum number of manipulated variables per data set & 2 \\ \hline
Sample size per data set & 1000 \\ \hline
\end{tabular}\caption{\label{tab:params}\textbf{Default values used in generating experiments in each iteration of
\combine}. Unless otherwise stated, the input data sets of \combine\; were generated according to these values.}
\end{table}

For each invocation of the algorithm, the problem instance (set of data sets) is generated using the parameters shown in
Table \ref{tab:params}. \combine\; default parameters were set as follows: maximum path length = 3, $\alpha =0.1$ and
maximum conditioning set $maxK=5$, and the Fisher z-test of conditional independence. As far as orientations are
concerned, in our experience, FCI is very prone to error propagation, we therefore used the rule in \citep{Ramsey2006}
for \emph{conservative} colliders. Unless otherwise stated, Algorithm \ref{algo:MMRstrategy} is employed to resolve
conflicts. SAT instances were solved using  MINISAT2.0 \citep{minisat} along with the modifications presented in
\cite{hyttinen2013discovering} for iterative solving and computing the backbone with some minor modifications for
sequentially performing literal queries. In the subsequent experiments, {\em one of the problem parameters in Table
\ref{tab:params} is varied each time, while the others retain the values above}.

To measure learning performance, ideally one should know the ground truth, i.e., the structure that the algorithm would
learn if ran with an oracle of conditional independence, and unrestricted infinite maxK and maximum path length
parameters. Notice, that {\em the original generating DAG structure cannot serve directly as the ground truth}. This is
because the presence of manipulated and latent variables implies that not all structural features of the generating DAG
can be recovered. For example, for the problem instance presented in Figure \ref{fig:inca_example}(middle), the ground
truth structure has one solid edge out of 5, no solid endpoint \ref{fig:inca_example}(right), one absent, and four dashed
edges. Dashed edges and endpoints in the output of the algorithm can only be evaluated if one knows the ground truth
structure. Unfortunately, the ground truth structure cannot be recovered in a timely fashion in most problems involving
more than 15 variables.

As a surrogate, we defined metrics that do not consider dashed edges or endpoints and can be directly computed by
comparing the ``solid'' features of the output with the original data generating graph. Specifically, we used two types
of precision and recall; one for edges (s-Precision/s-Recall) and one for orientations (o-Precision/o-Recall). Let \graph
G be the graph that generated the data (the SMCM stemming from the initial random DAG after marginalizing out variables
latent in all data sets), and \graph H be the summary graph returned by \combine. s-Precision and s-Recall were then
calculated as follows:
\begin{equation*}\textnormal{s-Precision} = \frac{\textnormal{\# solid edges in \graph H that are also in \graph
G}}{\textnormal{\# solid edges in \graph H}}\end{equation*}
and\begin{equation*}\textnormal{s-Recall} = \frac{\textnormal{\# solid edges in \graph H that are also in \graph
G}}{\textnormal{\# edges in \graph G}}.\end{equation*}
Similarly, orientation precision and recall are calculated as follows:
\begin{equation*}\textnormal{o-Precision} = \frac{\textnormal{\# endpoints in \graph G correctly oriented in \graph
H}}{\textnormal{\# of orientations(arrows/tails) in \graph H}}\end{equation*} and \begin{equation*}\textnormal{o-Recall}
= \frac{\textnormal{\# endpoints in \graph G correctly oriented in \graph H}}{\textnormal{\# endpoints in \graph
G}}.\end{equation*}
Since dashed edges and endpoints do not contribute to these metrics, precision in particular could be favorable for
conservative algorithms that tend to categorize all edges (endpoints) as dashed. To alleviate this problem, we accompany
each precision / recall figure with the percentage of dashed edges out of all edges in the output graph to indicate how
conservative is the algorithm. Similarly, we present the percentage of dashed (circled) endpoints out of all endpoints in
the output graph. Finally, we note that in the experiments that follow, unless otherwise stated, we report the median, 5,
and 95 percentile over 100 runs of the algorithm with the same settings.

\subsection{COmbINE vs. SBCSD}
\begin{table}[!t]
\centering
\resizebox{\columnwidth}{!}{
\begin{tabular}{|c|c|c|c|c|c|c|c|}\hline
&&\multicolumn{3}{|c|}{Running time}& \multicolumn{3}{|c|}{Completed instances/}\\
\# &\# max&\multicolumn{3}{|c|}{\textbf{Median} (5 \%ile, 95 \%ile)}& \multicolumn{3}{|c|}{total instances}\\
\cline{3-8}
variables& parents&\combine\;&SBCSD&SBCSD$^*$&\combine\;&SBCSD& SBCSD$'$\\ \hline
\multirow{2}{*}{10}&3&$\textbf{17} (1, 113) $&$\textbf{149} (14, 470)^*$&$\textbf{91} (30, 369)^*$& $50/50$ &30/50&
$48/50$\\ \cline{2-8}
&5&$ \textbf{80} (4, 1192)$&$\textbf{365}(133, 500)^*$&$\textbf{264}(68, 554)^*$& $50/50$ &16/50& $32/50$\\ \hline
\multirow{2}{*}{14}&3&$\textbf{28} (4, 6361)^* $&$-$&$\textbf{451} (407, 492)^*$& $49/50$ &0/50& $4/50$\\ \cline{2-8}
&5&$ \textbf{272} (23, 16107)^*$&$-$&$-$& $43/50$ &$0/50$& $0/50$\\ \hline
\end{tabular}
}
\caption{\label{tab:vsDiscoverer} \textbf{Comparison of running times for \combine\; and SBCSD for networks of 10 and 14
variables}. The table reports the median running time along with the  5 and 95 percentiles, as well as the number of
instances (problem inputs) in which each algorithm managed to complete; $^*$numbers are computed only on the problems for
which the algorithm completed.}
\end{table}

\cite{hyttinen2013discovering} presented a similar algorithm, named SAT-based causal structure discovery (SBCSD). SBCSD
is also capable of learning causal structure from manipulated data-sets over overlapping variable sets. In addition, if
linearity is assumed, it can admit feedback cycles. SBCSD also uses similar techniques for converting conditional
(in)dependencies into a SAT instance. However, the algorithm requires all \m-connections to constrain the search space
(at least the ones that guarantee completeness), while \combine\; uses inducing paths to avoid that. For each adjacency
$X\doublestar Y$ in a data set, \combine\; creates a constraint specifying that at least one path between the variables
is inducing with respect to \set{L_i}. In contrast, SBCSD creates a constraint specifying that at least one path between
the variables is \m-connecting path given each possible conditioning set. So, both algorithms are forced to check every
possible path, yet \combine\; examines each path once (with respect to \set{L_i}), while SBCSD examines it for multiple
possible conditioning sets. The latter choice may be necessary to deal with cyclic structures, but leads to significantly
larger SAT problems when acyclicity is assumed.

SBCSD is not presented with a conflict resolution strategy and so it can only be tested by using an oracle of conditional
independence. Equipping SBCSD with such a strategy is possible, but it may not be straightforward: SBCSD computes the SAT
backbone incrementally for efficiency, which complicates pre-ranking constraints according to some criterion. Since SBCSD
cannot handle conflicts, we compared it to the complete version of our algorithm (infinite maxK and maximum path length)
using an oracle of conditional independence. Since no statistical errors are assumed, the initial search graph for
\combine\; includes all observed arrows. Both algorithms are sound and complete, hence we only compare running time.
SBCSD uses a path-analysis heuristic to limit the number of tests to perform. However, the authors suggest that in cases
of acyclic structures, this heuristic could be substituted with the FCI test schedule. To better characterize the
behavior of SBCSD on acyclic structures, we equipped the original implementation as suggested\footnote{However, we do not
include the Possible d-Separating step of FCI; this step hardly influences the quality of the algorithm
\cite{colombo2012}. Thus, the timing results of Table \ref{tab:vsDiscoverer} are a lower bound on the execution time of
the SBCSD algorithm.}. We denote this version of the algorithm as SBCSD$'$. Also note, that the available implementation
of SBCSD by its authors has an option to restrict the search to acyclic structures, which was employed in the comparative
evaluation. Finally, we note that SBCSD is implemented in C, while \combine\;is implemented in Matlab.

For the comparative evaluation, we simulated random acyclic networks with 10 and 14 variables. The default parameters
were used to generate 50 problem instances for networks with 3 and 5 maximum parents per variable. Both algorithms were
run on the same computer, with 4GB of available memory. SBCSD reached maximum memory and aborted without concluding in
several cases for networks of 10 variables, and {\em in all cases for networks of 14 variables}. SBCSD$'$ slightly
improves the running time over SBCSD. Median running time along with the 5 and 95 percentiles as well as number of cases
completed are reported in Table \ref{tab:vsDiscoverer}. The metrics for each algorithm were calculated only on the cases
where the algorithm completed.

The results in Table \ref{tab:vsDiscoverer} indicate that \combine\; is more time-efficient than SBCSD and SBCSD$'$.
While the running times do depend on implementation, the fact that SBCSD have much higher memory requirements indicates
that the results must be at least in part due to the more compact representation of constraints by \combine\;. \combine\;
managed to complete all cases for networks of 10 and most cases for 14 variables, while SBCSD completed less than 50\%
and 0\%, respectively. SBCSD$'$ completed most cases for 10 variables but only 4\% of cases for 14 variables.
Interestingly, the percentiles for \combine\;are quite wide spanning two orders of magnitude for problems with maxParents
equal to 5 (we cannot compute the actual 95 percentile for SBCSD since it did not complete for most problems). Thus,
performance highly depends on the input structure. Such heavy-tailed distributions are well-noted in the constraint
satisfaction literature \citep{gomes2000heavy}. We also note the fact that \combine\; seems to depend more on the
sparsity and less on the number of variables, while SBCSD's time increases monotonically with the number of variables.
Based on these results, we would suggest the use of \combine\; for problems where acyclity is a reasonable assumption and
the number of variables is relatively high.

\subsection{Evaluation of Conflict Resolution Strategies} 

In this section we evaluate our Maximum Map Ratio strategy ({\bf MMR}) against three other alternatives: A ranking
strategy where constraints are sorted based on Bayesian probabilities as proposed in \cite{Claassen2012}
(\textbf{BCCDR}), as well as a Max-SAT (\textbf{MaxSAT}) and a weighted max-SAT (\textbf{wMaxSAT}) approach.

\textbf{MMR}: This strategy sorts constraints according to the Maximum Map Ratio (Algorithm \ref{algo:MMRstrategy}) and
greedily satisfies constraints in order of confidence; whenever a new constraint is not satisfiable given the ones
already selected, it is ignored (lines \ref{algoline:sat}- \ref{algoline:sate} in Algorithm \ref{algo:MMRstrategy}).

\textbf{BCCDR}: BCCDR sorts constraints according to Bayesian probability estimates of the literals in \graph F as
presented in \cite{Claassen2012}. The same greedy strategy for satisfying constraints in order is employed. Briefly, the
authors of \citep{Claassen2012} propose a method for calculating Bayesian probabilities for any feature of a causal graph
(e.g. adjacency, \m-connection, causal ancestry). To estimate the probability of a feature, for a given data set \set D,
the authors calculate the score of all DAGs of \var N variables. Let $\graph G \vdash f$ denote that a feature \var f is
present in DAG \graph G. The probability of the feature is then calculated as $P(f) =\sum_{\graph G \vdash
\textnormal{f}}{P(\set D|\graph G )P(\graph G)}$. Scoring all DAGs is practically infeasible for networks with more than
5 or 6 variables. Thus, for data sets with more variables, a subset of variables must be selected for the calculation of
the probability of a feature. Following \citep{Claassen2012}, we use 5 as the maximum \var N attempted.

The literals in \graph F represent information on adjacencies: $(\neg) adjacent (\var X, \var Y , \graph P_i)$ and
colliders: $(\neg) collider (\var X, \var Y , \var Z, \graph P_i)$. To apply the method above for a given feature, we
have to select the variables used in the DAGs, a suitable scoring function, and suitable DAG priors. For (non)
adjacencies \var X\doublestar \var Y in PAG $\graph P_i$, we scored the DAGs over variables \var X, \var Y and \set Z,
for the conditioning set \set Z maximizing the p-value of the tests \var X\independent\var Y\given\set Z performed by
FCI. Since the total number of variables cannot exceed 5, the maximum conditioning set for FCI is limited to 3 in all
experiments in this section for a fair comparison. For a (non) collider \var X \doublestar\var Y \doublestar \var Z, we
score all networks over \var X, \var Y and \var Z.

\begin{figure}[!t]\centering
\includegraphics[width = \columnwidth]{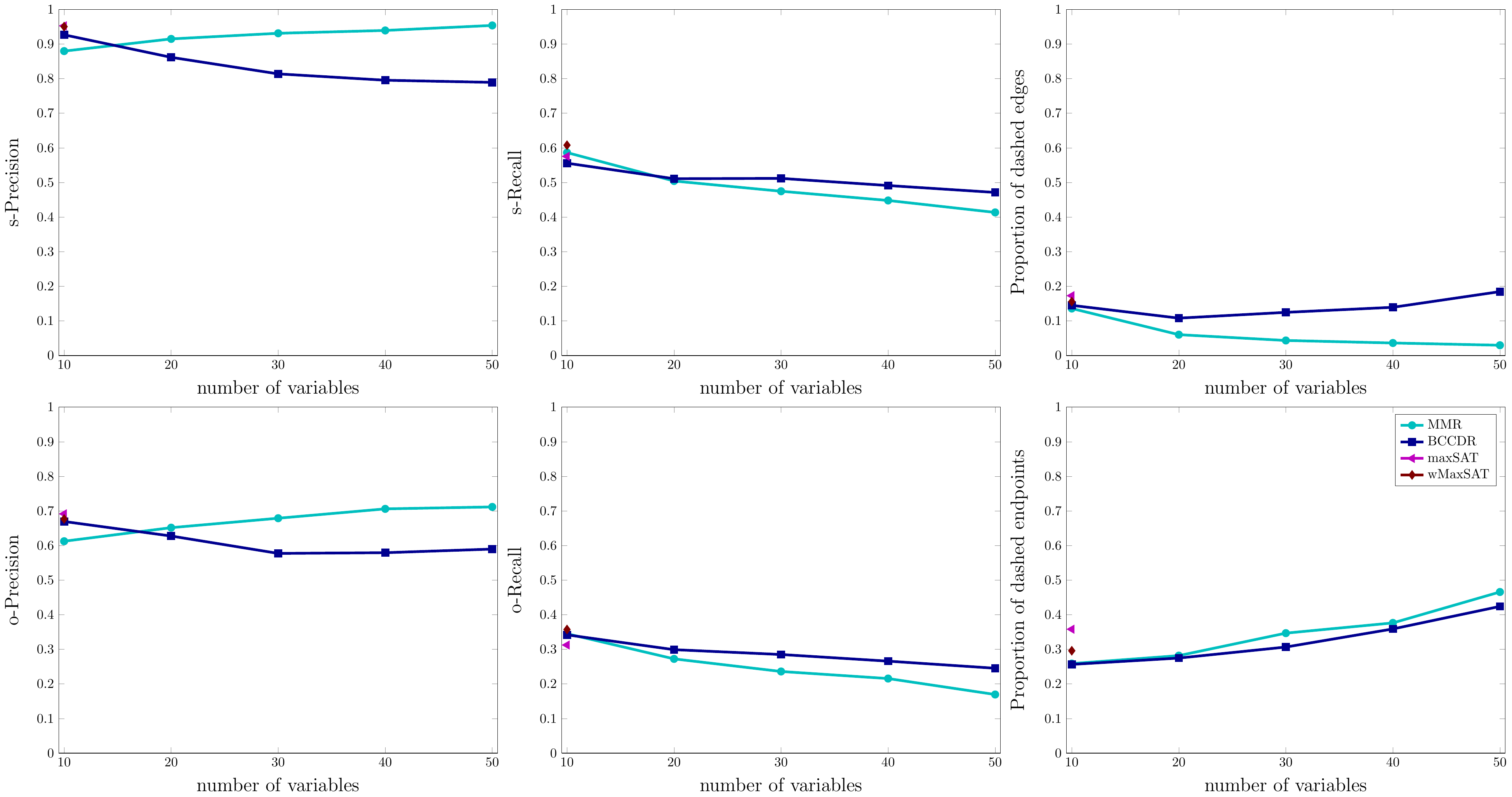}
\caption{\label{fig:cr_strategies} \textbf{Learning performance of \combine\; with various conflict resolution
strategies}. From left to right: Median  s-Precision, s-Recall, proportion of dashed edges (top) and
o-Precision, o-Recall and proportion of dashed endpoints (bottom) for networks of several sizes for various conflict
resolution strategies. Each data set consists of 100 samples. The numbers for wMaxSAT and maxSAT correspond to 22 and 23
cases, respectively, in which the algorithms managed to return a solution within 500 seconds.}
\end{figure}

We use the BGE metric for gaussian distributions \citep{Geiger1994} as implemented in the BDAGL package \cite{EatonBDAGL}
to calculate the likelihoods of the DAGs. This metric is score equivalent, so we pre-computed representatives of the
Markov equivalent networks of up to 5 nodes, and scored only one network per equivalence class to speed up the method.
Priors for the DAGs were also pre-computed to be consistent with respect to the maximum attempted number of nodes (i.e.
5) as suggested in \cite{Claassen2012}.

\textbf{MaxSAT}: This approach tries to satisfy as many literals in \graph F as possible. Recall that the SAT problem
consists of a set of hard-constraints (conditionals, no cycles, no tail-tail edges), which should always be satisfied
(hard constraints), and a set of literals \graph F. Maximum SAT solvers cannot be directly applied to the entire SAT
formula since they do not distinguish between hard and soft constraints. To maximize the number of literals satisfied,
while ensuring all hard-constraints are satisfied we resorted to the following technique: we use the akmaxsat
\citep{kuegel2010improved} {\em weighted} max SAT solver that tries to maximize the sum of the weights of the satisfied
clauses. Each literal is assigned a weight of 1, and each hard-constraint is assigned a weight equal to the sum of all
weights in \graph F plus 10000. The summary graph returned by Algorithm \ref{algo:COmbINE} is based on the backbone of
the subset of literals selected by akmaxsat.

\textbf{wMaxSAT}: Finally, we augmented the above technique with a different weighted strategy that considers the
importance of each literal. Specifically, each literal was weighted proportionally to the logarithm of the corresponding
MMR. Again, each hard-constraint was assigned a weight equal to the sum of all weights in \graph F plus 10000, to ensure
that the solver will always satisfy these statements. The summary graph returned by Algorithm \ref{algo:COmbINE} is based
on the backbone of the subset of literals selected by akmaxsat.

We ran all methods for networks of 10, 20, 30, 40 and 50 variables for data sets of 100 samples to test them on cases
where statistical errors are common. For each network size we performed 50 iterations. \textbf{MaxSAT} and
\textbf{wMaxSAT} often failed to complete in a timely fashion; to complete the experiments we aborted the solver after
500 seconds. We note that this amount of time corresponds to more than 10 times the maximum running time of the MMR
method (calculating MMRs and solving the SAT instance), and more than twice times the maximum running time of the
BCCDR-based method (for 50 variables). Cases where the solver did not complete were not included in the reported
statistics. Unfortunately, \emph{the methods using weighted max SAT solving failed to complete in most cases for 10
variables}, and all cases for more than 10 variables.

The results are shown in Figure \ref{fig:cr_strategies}, where we can see the median performance of both algorithms over
50 iterations. Overall, \textbf{MMR} exhibits better Precision and identifies more solid edges, while \textbf{BCCDR}
exhibits slightly better Recall. \textbf{BCCDR} is better for variable size equal to 10, which could be explained from
the fact that \textbf{MMR} is not provided with sufficient number of p-values to estimate $\hat{\pi}_0$ and $\hat{\xi}$.
In terms of computational complexity, for networks of 50 variables, estimating the \textbf{BCCDR} ratios takes about 150
seconds on average, while estimating the \textbf{MMR} ratios takes less than a second. The more sophisticated search
strategies \textbf{MaxSAT} and \textbf{wMaxSAT} do not seem to offer any significant quality benefits, at least for the
single variable size for which we could evaluate them. Based on these results, we believe that \textbf{MMR} is a
reasonable and relatively efficient conflict resolution strategy.

\subsection{COmbINE performance with increasing maximum path length}
\begin{figure}[!t]\centering
\includegraphics[width = \columnwidth]{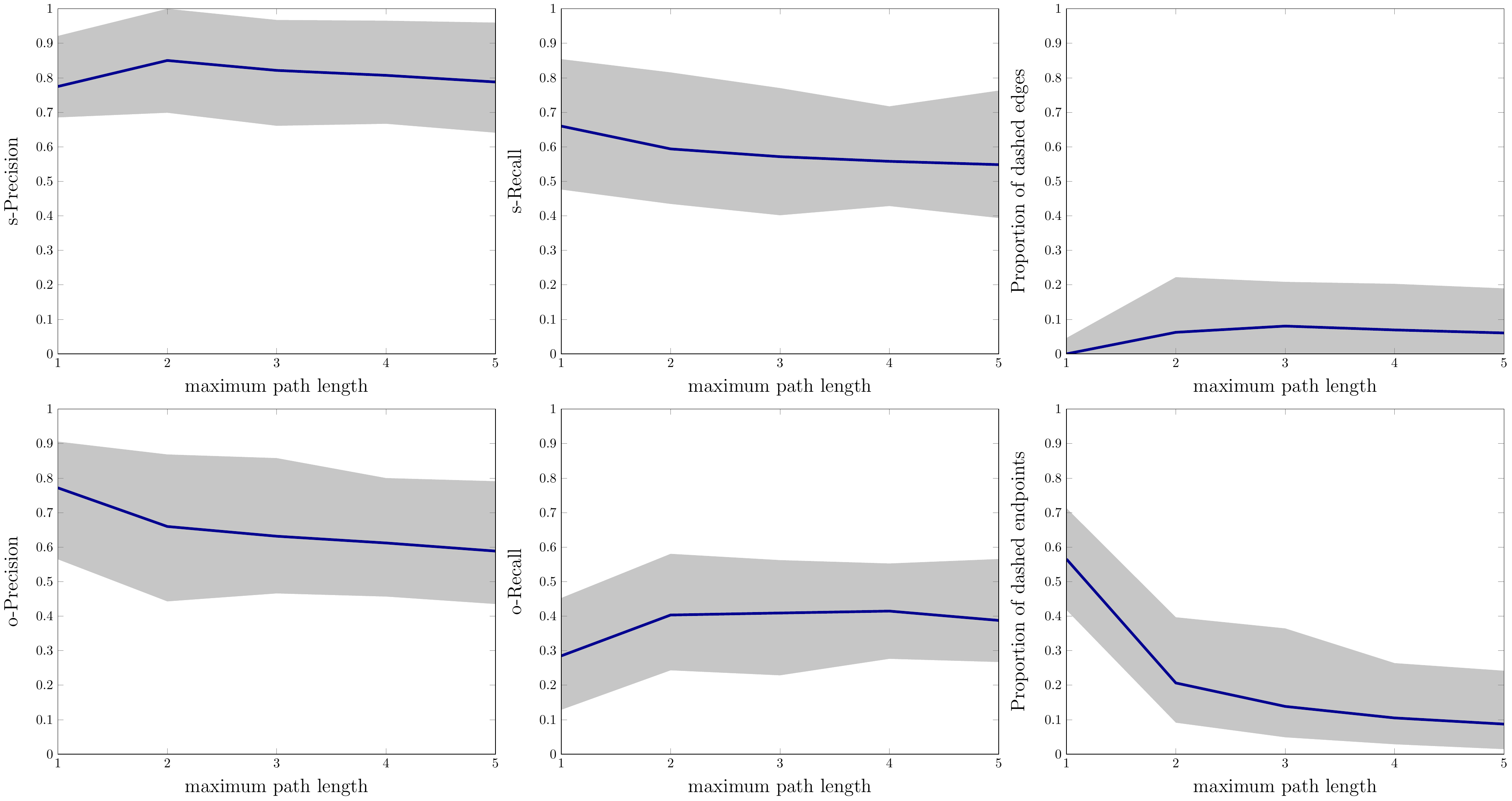}
\caption{\label{fig:err_vs_mpl}\textbf{Learning performance of \combine\; against maximum path length}. From left to
right: s-Precision, s-Recall, percentage dashed edges and o-Precision, o-Recall and percentage of dashed endpoints
(bottom) for varying maximum path length, averaged over all networks. Shaded area ranges from the 5 to the 95 percentile.
Maximum path length 3 seems to be a be a reasonable trade-off between performance, percentage of dashed features, and
efficiency.}
\end{figure}

In this section, we examine the behavior of the algorithm when the length of the paths considered is limited, in which
case the output is an approximation of the actual solution. The \combine\; pseudo-code in Algorithm \ref{algo:COmbINE}
accepts the maximum path length as a parameter.
\begin{figure}[!t]\centering
\includegraphics[width = \columnwidth]{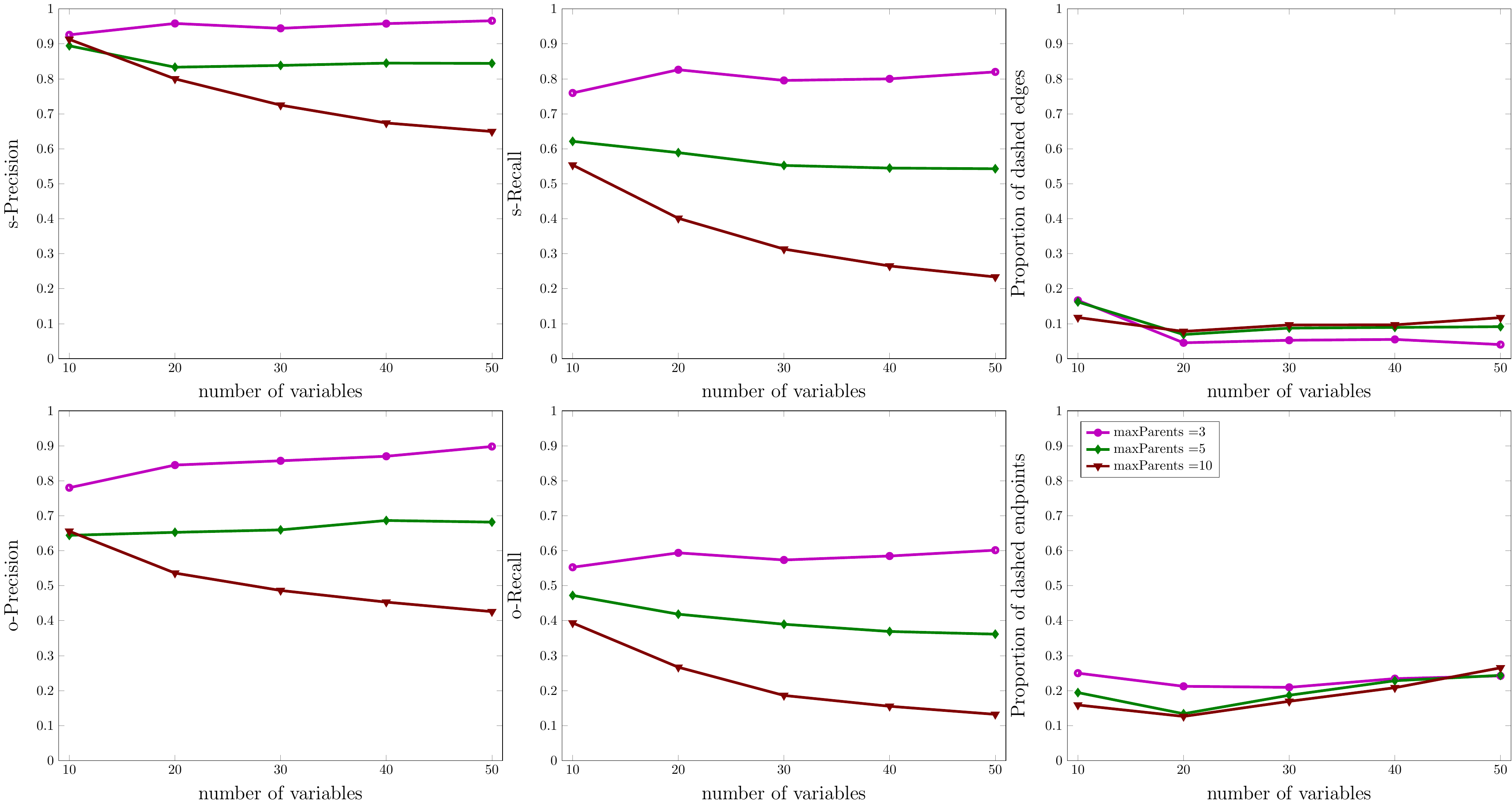}
\caption{\label{fig:err_vs_max_parents}\textbf{Learning performance of \combine\; for various network sizes and
densities}. From left to right: Median s-Precision, s-Recall, proportion of dashed edges (top) and o-Precision, o-Recall
and proportion of dashed endpoints (bottom) for varying network size and density. Density is controlled by limiting the
number of possible parents per variable.  As expected, the performance deteriorates as networks become denser.}
\end{figure}
Learning performance as a function of the maximum path length is shown in Figure \ref{fig:err_vs_mpl}. Notice that when
the path length is increased from 1 to 2 there is drop in the percentage of dashed endpoints, implying more orientations
are possible. For length equal to 1, only unshielded and discriminating colliders are identified, while for length larger
than 2 further orientations become possible thanks to reasoning with the inducing paths. When length is 1, notice that
there are almost no dashed edges (except for the edges added in line \ref{algoline:de} of Algorithm
\ref{algo:initializeSMCM}). When the maximum length increases, adjacencies in one data set, can be explained with longer
inducing paths in the underlying graph and more dashed edges appear. The learning performance of the algorithm is not
monotonic with the maximum length. Explaining an association (adjacency) through the presence of a long inducing path may
be necessary for asymptotic correctness. However, in the presence of statistical errors, allowing such long paths could
lead to complicated solutions or the propagation of errors.

Overall, it seems any increase of the maximum path length above 3 does not significantly affect performance. It seems
that a maximum path length of 3 is a reasonable trade-off among learning performance (precision and recall), percentage
of uncertainties, and computational efficiency. These experiments justify our choice of maximum length 3 as the default
parameter value of the algorithm.

\subsection{COmbINE performance as a function of network density and size\label{sec:vsMaxParents}}

In Figure \ref{fig:err_vs_max_parents} the learning performance of the algorithm is presented as a function of network
density and size. Density was controlled by the maximum parents allowed per variable, set by parameter maxParents during
the generation of the random networks. For all network sizes, learning performance monotonically decreases with increased
density, while the percentage of dashed features does not significantly vary. The size of the network has a smaller
impact on the performance, particularly for the sparser networks. For dense networks, performance is relatively poor and
becomes worse with larger sizes.

\subsection{COmbINE performance over sample size and number of input data sets}
\begin{figure}[!t]\centering
\includegraphics[width =\columnwidth]{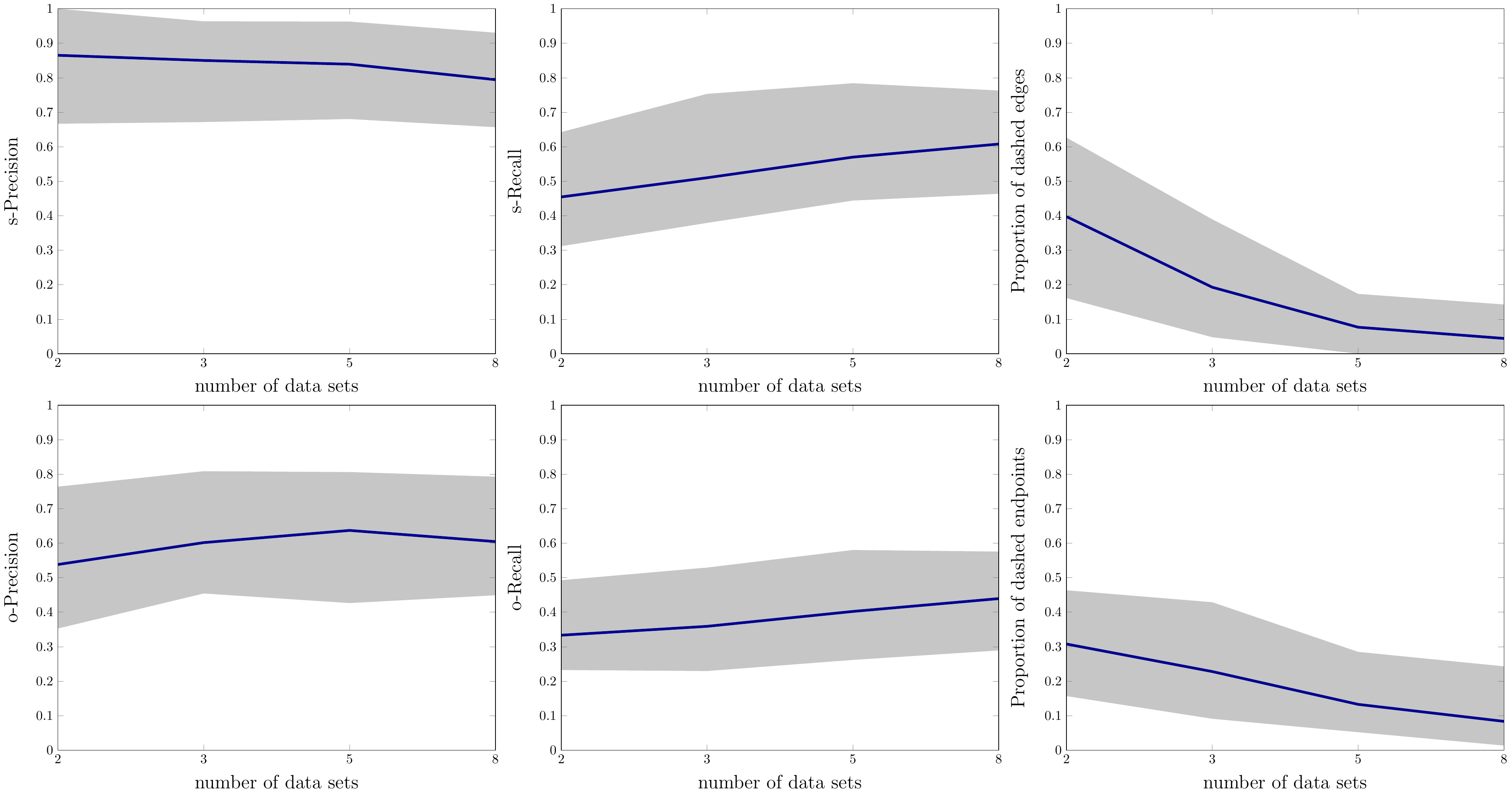}
\caption{\label{fig:err_vs_nexps_1000}\textbf{Learning performance of \combine\; for varying number of input data sets}.
From left to right: Median  s-Precision, s-Recall, Proportion of dashed edges (top) and o-Precision, o-Recall and
proportion of dashed endpoints of (bottom) for varying number of input data sets. Shaded area ranges from the 5 to the 95
percentile. Increasing the number of input data sets improves the performance of the algorithm.} \end{figure}

\begin{figure}[!h]\centering
\includegraphics[width =\columnwidth]{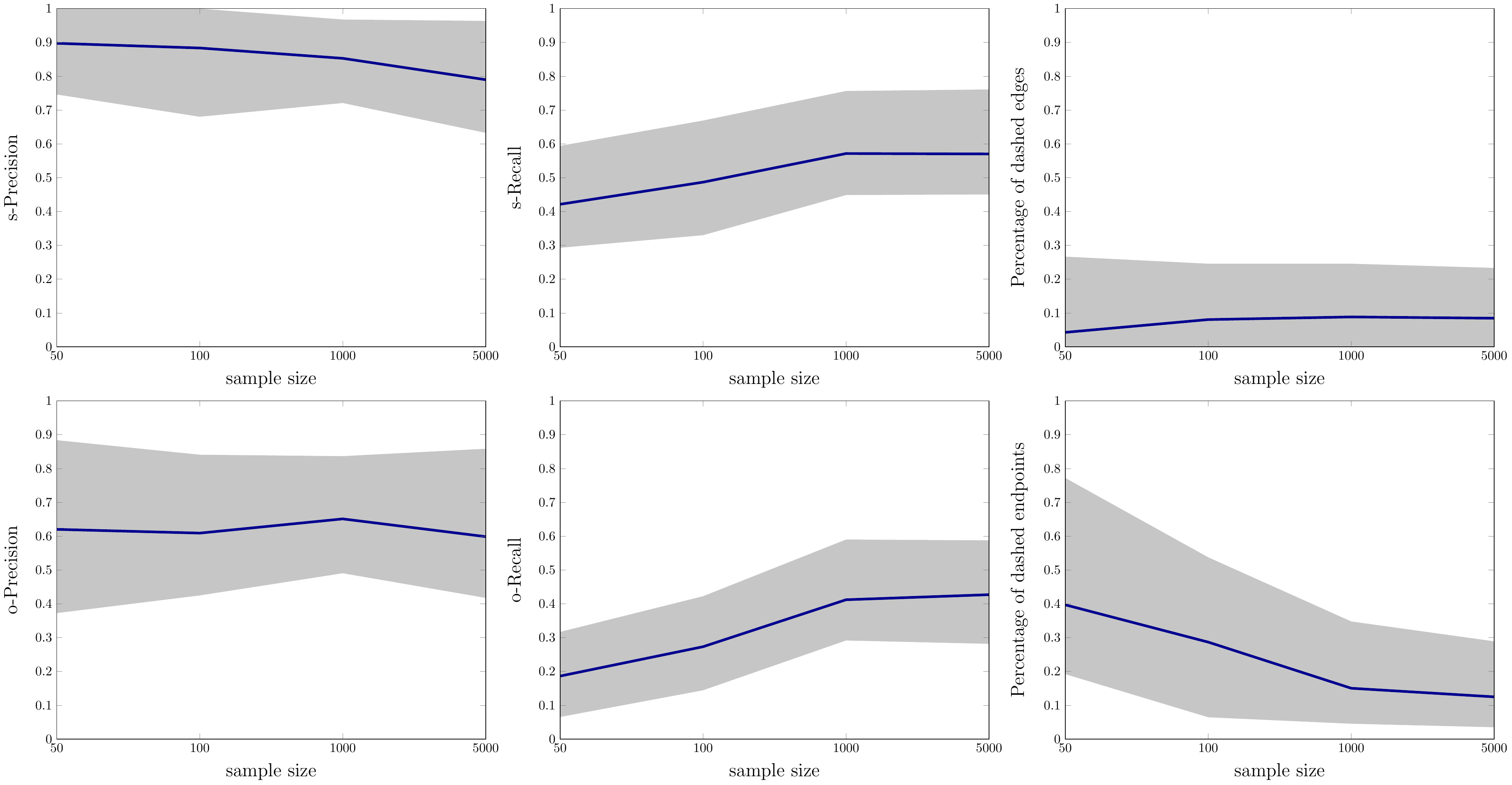}
\caption{\label{fig:err_vs_ss}\textbf{Learning performance of \combine\; for varying sample size per data set}. From left
to right: s-Precision, s-Recall, Proportion of dashed edges (top) and o-Precision, o-Recall and proportion of dashed
endpoints of (bottom) for varying sample size per data set. Shaded area ranges from the 5 to the 95 percentile.
Increasing the sample size improves the performance of the algorithm.}
\end{figure}

Figure \ref{fig:err_vs_nexps_1000} shows the performance of the algorithm with increasing the number of input data sets.
As expected, the percentage of uncertainties (dashed features) is steadily decreasing with increased number of input data
sets. Recall also steadily improves, while Precision is relatively unaffected. Figure \ref{fig:err_vs_ss} holds the
number of input data set constant to the default value 5, while increasing the sample size per data set. Recall in
particular improves with larger sample sizes, while the percentage of dashed endpoints drops.

\subsection{COmbINE performance for increasing number of latent variables}
\begin{figure}[!t]\centering
\includegraphics[width =\columnwidth]{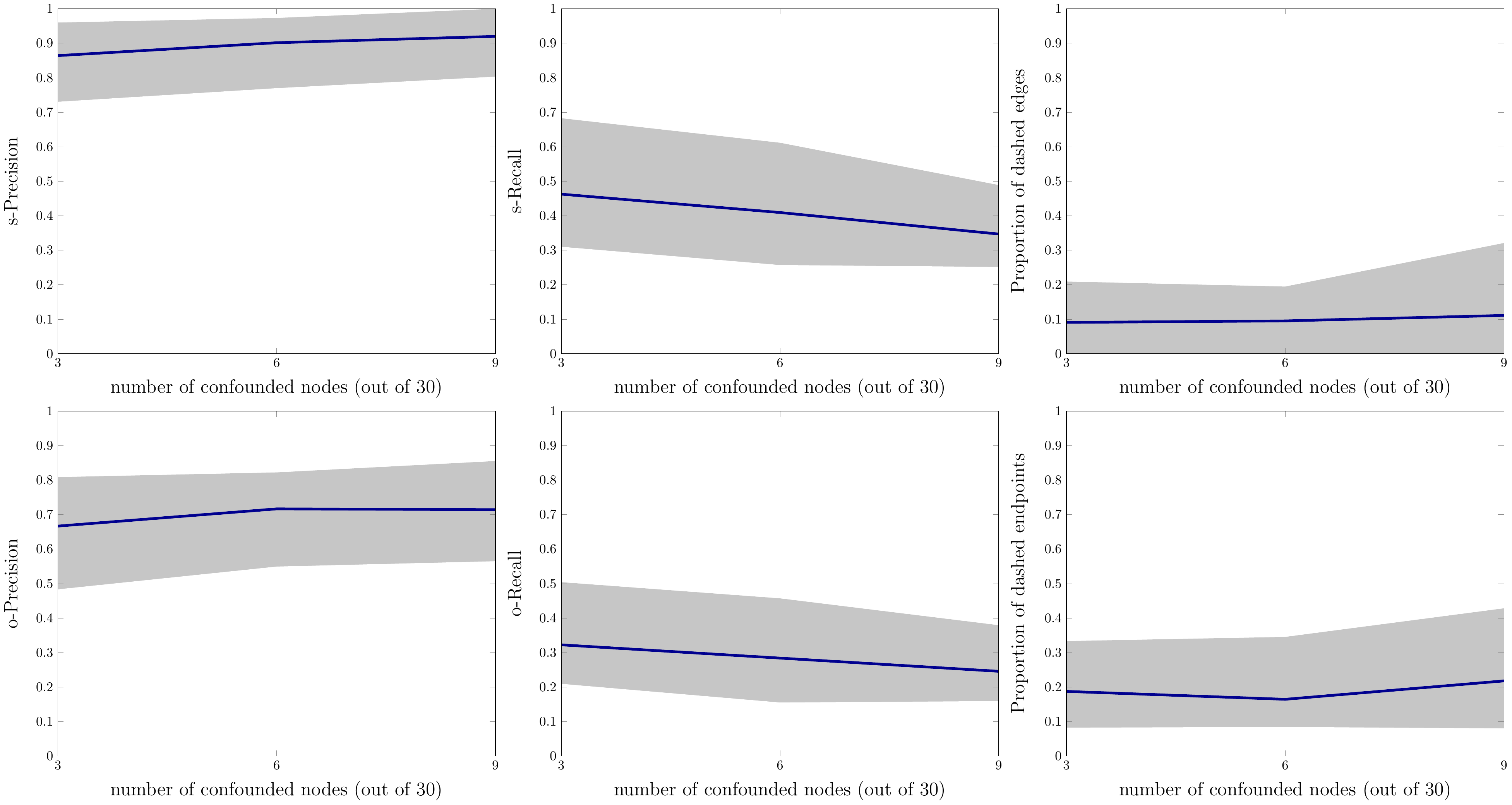}
\caption{\label{fig:err_vs_confounding}\textbf{Learning performance of \combine\; for varying percentage of confounded
variables}. From left to right: s-Precision, s-Recall, percentage of dashed edges (top) and o-Precision, o-Recall and
percentage of dashed endpoints (bottom) for varying  number of confounded nodes for networks of 30 variables. Shaded area
ranges from the 5 to the 95 percentile. Overall, the number of confounding variables does not seem to greatly affect the
algorithm' s performance.}
\end{figure}

We also examine the effect of confounding to the performance of \combine\;. To do so, we generated semi-Markov causal
models instead of DAGs in the generation of the experiments: We generated random DAG networks of 30 variables and then
marginalized out a percentage of the variables. Figure \ref{fig:err_vs_confounding} depicts \combine's performance
against 3, 6, and 9 of latent variables, corresponding to 10\%, 20\% and 30\% of the total number of variables in the
graph, respectively. Overall, confounding does not seem to greatly affect the performance of \combine. We must point out
however, that s-Recall is lower than the s-Recall with no confounded variables for the same network size (see Figure
\ref{fig:err_vs_max_parents}).

\subsection{Running Time for COmbINE}

\begin{figure}[!t]\centering
\includegraphics[width =0.7\columnwidth]{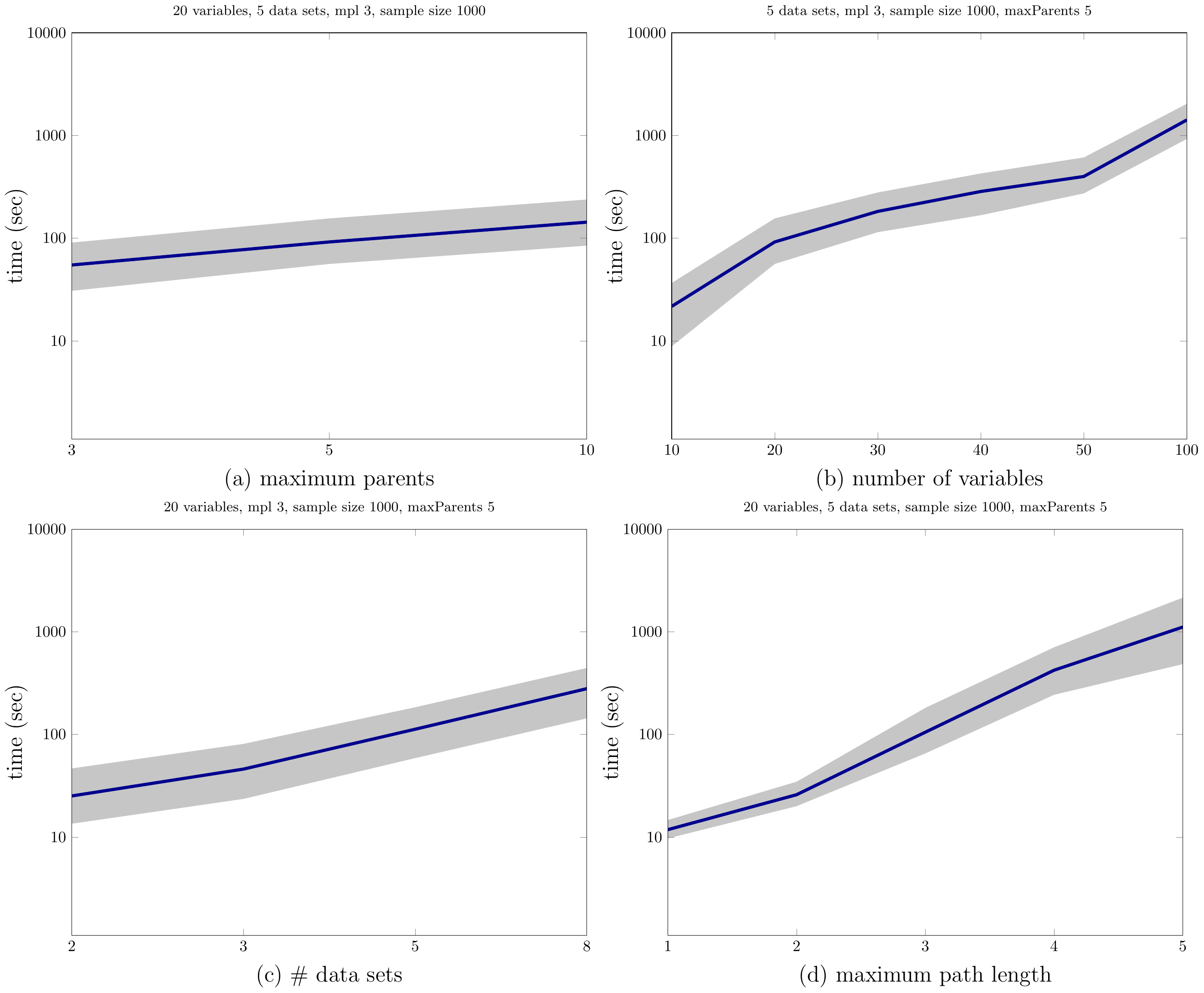}
\caption{\label{fig:times}\textbf{Running time  of \combine\;}. From left to right: Running time (in seconds) is plotted
in logarithmic scale against maximum parents per variable and number of variables (top row); number of data sets and
maximum path length (bottom row). Shaded area ranges from the 5 to the 95 percentile. The number of variables and the
maximum path length seem to be the most critical factors of computational performance. Notice that, \combine\; scales up
to problems with 100 total variables for limited path length and relatively sparse networks.}
\end{figure}

The running time of \combine\; depends on several factors, including the ones examined in the previous experiments:
Maximum path length, number of input data sets and sample size, and, naturally, the number of variables. Figure
\ref{fig:times} illustrates the running time of \combine\; against these factors. Figure \ref{fig:times} (b) presents the
running time of \combine\; against number of variables for networks of 5 maximum parents per variable. The experiments
regarding 10 to 50 variables  have also been presented in terms of learning performance in section
\ref{sec:vsMaxParents}. To further examine the scalability of the algorithm, {\em we also ran \combine\; in networks of
100 variables}, with 5 maximum parents per variable. The experiments were ran with the default parameter values. As we
can see in Figure \ref{fig:times}, the restriction on the maximum path length is the most critical factor for the
scalability of the algorithm.

\subsection{A case study: Mass Cytometry data}
Mass cytometry \citep{Bendall2011} is a recently introduced technique that enables measuring protein activity in cells,
and its main use is to classify hematopoietic cells and identify signaling profiles in the immune system. Therefore, the
proteins are usually measured in a sample of cells and then in a different sample of the same (type of) cells after they
have been stimulated with a compound that triggers some kind of signaling behavior. Identifying the  causal succession of
events during cell signaling is crucial to designing drugs that can trigger or suppress immune reaction. Therefore in
several studies both stimulated and un-stimulated cells are treated with several perturbing compounds to monitor the
potential effect on the signaling pathway.

Mass cytometry data seem to be an suitable test-bed for causal discovery methods: The proteins are measured in single
cells instead of representing tissue averages, the latter being known to be problematic for causal discovery
\citep{chu2003statistical}, and the samples range in thousands. However, the mass cytometer can measure only up to 34
variables, which may be too low a number to measure all the variables involved in a signaling pathway. Moreover, about
half of these variables are surface proteins that are necessary to distinguish (gate) the cells into sub-populations, but
are not functional proteins involved in the signaling pathway. It is therefore reasonable for scientists to perform
experiments measuring overlapping variable sets.

\cite{Bendall2011} and \cite{Bodenmiller2012} both use mass cytometry to measure protein abundance in cells of the immune
system. In both studies, the samples were treated with several different signaling stimuli. Some of the stimuli were
common in both studies. After stimulation with each activating compound, \cite{Bodenmiller2012} also test the cell's
response to 27 inhibitors. One of these inhibitors is also used in \cite{Bendall2011}. For this inhibitor,
\cite{Bendall2011} measured bone marrow cell samples of a single donor. In \cite{Bodenmiller2012}, measurements were
taken from Peripheral blood mononuclear cell samples of a (different) single donor. Despite differences in the
experimental setup, the signaling pathway of every stimulus and every sub-population of cells is considered universal
across (healthy) donors, so the data should reflect the same underlying causal structure.

We focused on two sup-populations of the cells, CD4+ and CD8+ T-cells, which are known to play a central role in immune
signaling. The data were manually gated by the researchers in the original studies. We also focused on one of the stimuli
present in both studies, PMA-Ionomycin, which is known to have prominent effects on T-cells. Proteins pBtk, pStat3,
pStat5, pNfkb, pS6, pp38, pErk, pZap70 and pSHP2 are measured in both data sets (initial p denotes that the concentration
of the  phosphorylated protein is measured). Four additional variables were included in the analysis, pAkt, pLat and
pStat1 measured only in \cite{Bodenmiller2012} and pMAPK measured only in \cite{Bendall2011}. To be able to detect
signaling behavior, we formed data sets that contain both stimulated and unstimulated samples. As mentioned above, the
cells were treated with several inhibitors. Some of these inhibitors target a specific protein, and some of them perturb
the system in a more general or unidentified way. We used three target specific compounds that can be modeled as hard
interventions (i.e. the compounds used to target these proteins are known to be specific and to have an effect in the
phosphorylation levels of the target). More information on the specific compounds can be found in the respective
publications.  We ended up with four data sets for each sub-population. Details can be found in Table
\ref{tab:case_study}.

\begin{table}
\centering
\begin{tabular}{|c|c|c|c|c|}\hline
Data set & Source & latent (\set{L_i}):  & manipulated(\set{I_i})& Donor \\ \hline
\set{D_1} & \cite{Bodenmiller2012} & pMAPK & pAkt & 1\\ \hline
\set{D_2} & \cite{Bodenmiller2012} & pMAPK & pBtk & 1\\ \hline
\set{D_3} & \cite{Bodenmiller2012} & pMAPK & pErk & 1\\ \hline
\set{D_4} & \cite{Bendall2011} & pAkt, pLat, pStat1 & pErk & 2\\ \hline
\end{tabular}
\caption{\label{tab:case_study}\textbf{Summary of the mass cytometry  data sets co-analyzed with \combine}. The procedure
was repeated for two sub-populations of cells, CD4+ cells and CD8+ cells.}
\end{table}

\begin{figure}[!t]\centering
\begin{tabular}{|l|r|}
\hline
\includegraphics[width = 0.4\columnwidth]{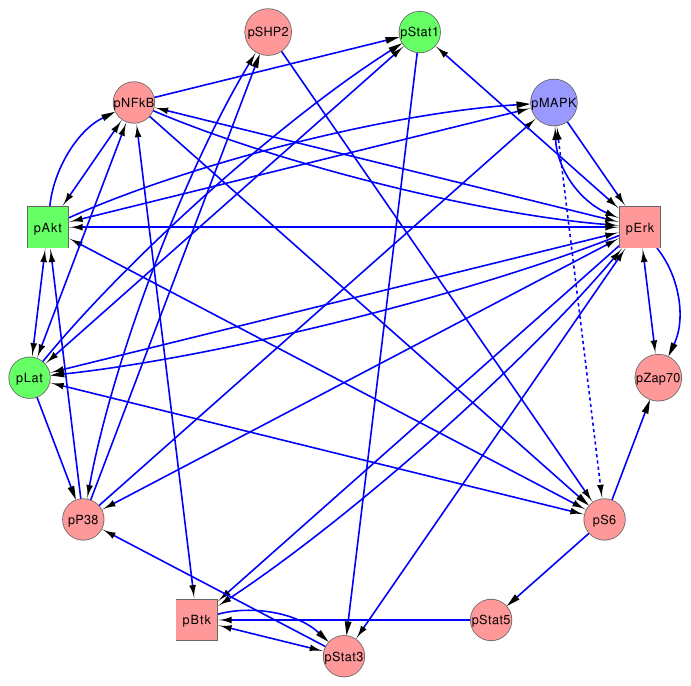}&
\includegraphics[width = 0.4\columnwidth]{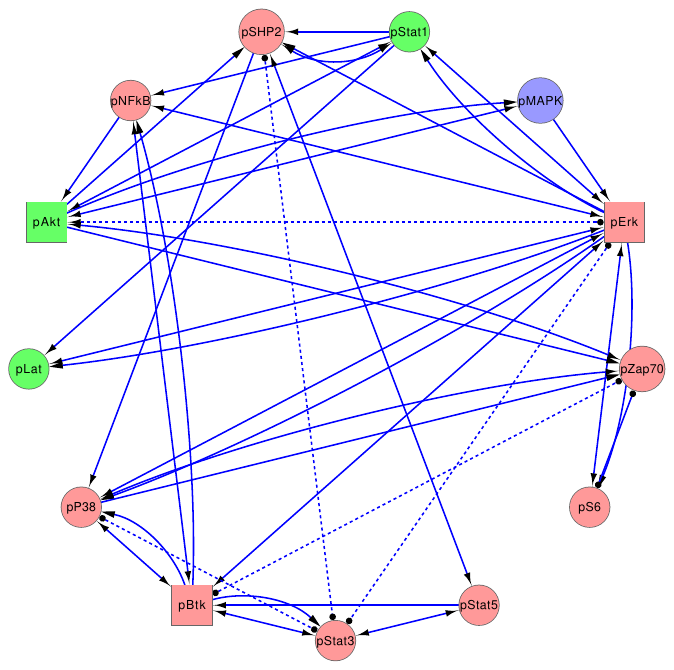}\\ \hline
\end{tabular}
\caption{\label{fig:case_study}\textbf{A case study for \combine: Mass cytometry data}. \combine\; was run on 4 different
mass cytometry data  for two different cell populations: CD4+ T-cells (left) and CD8+ T-cells (right). In each data set,
one variable was manipulated (pAkt, pBTk, pErk, pErk respectively). Variables pAkt, pLat and pStat1 are only measured in
data sets 1-3, while pMAPK is only measured in data set 4. Notice that pAkt is predicted to be a direct cause of pMAPK
in both CD4+ and CD8+ cells, even though the two variables have never been measured together.}
\end{figure}

Protein interactions are typically non-linear, so we discretized the data into 4 bins. We ran Algorithm
\ref{algo:COmbINE} with maximum path length 3. We used the $G^2$ test of independence for FCI with $\alpha =0.1$ and
maxK=5. We used Cytoscape \citep{smoot2011cytoscape} to visualize the summary graphs produced by COmbINE, illustrated in
Figure \ref{fig:case_study}.

Unfortunately, the ground truth for this problem is not known for a full quantitative evaluation of the results.
Nevertheless, this set of experiments demonstrates the availability of real and important data sets and problems that are
suited integrative causal analysis. Second, these experiments provide a proof-of-concept for the specific algorithm. One
type of interesting type of inference possible with \combine\; and similar algorithms is the prediction that pAkt is a
direct cause of pMAPK in both CD4+ and CD8+ cells, {\em even though the variables are not jointly measured in any of the
input data sets}. Evidence of a direct protein interaction between the two proteins does exists in the literature
\cite{rane2001p38}. Thus, methods for learning causal structure from multiple manipulations over overlapping variables
potentially constitute a powerful tool in the field of mass cytometry.

We do not make any claims for the validity of the output graphs and they are presented only as a proof-of-concept, as
there are several potential pitfalls. \combine\; assumes lack of feedback cycles, which is not guaranteed in this system
(we note however, that acyclic networks have been successfully used for reverse engineering protein pathways in the past
\citep{Sachs2005}). Causal discovery methods that allow cycles \cite{hyttinen2013discovering}  on the other hand rely on
the assumption of linearity, which is also known to be heavily violated in such networks. Thus, which set of assumptions
best approximates the specific system is unknown.

\section{Conclusions and Future Work\label{sec:conclusions}}
We have presented \combine, a sound and complete algorithm that performs causal discovery from multiple data sets that
measure overlapping variable sets under different interventions in acyclic domains. \combine\; works by converting the
constraints on inducing paths in the sought out semi Markov causal model (SMCMs) that stem from the discovered
(in)dependencies into a SAT instance. \combine\; outputs a summary of the structural characteristics of the underlying
SMCM, distinguishing between the characteristics that are identifiable from the data (e.g., causal relations that are
postulated as present), and the ones that are not (e.g., relations that could be present or not). In the empirical
evaluation the algorithm outperforms in efficiency a recently published similar one \citep{hyttinen2013discovering} that,
given an oracle of conditional independence, performs the same inferences by checking all \m-connections necessary for
completeness.

\combine\; is equipped with a conflict resolution technique that ranks dependencies and independencies discovered
according to confidence as a function of their p-values. This technique allows it to be applicable on real data that may
present conflicting constraints due to statistical errors. To the best of our knowledge, \combine\; is the only
implemented algorithm of its kind that can be applied on real data.

The algorithm is empirically evaluated in various scenarios, where it is shown to exhibit high precision and recall and
reasonable behavior against sample size and number of input data sets. It scales up to networks with up to 100 variables
for relatively sparse networks. Moreover, it is possible for the user to trade the number of inferences for improved
computational efficiency by limiting the maximum path length considered by the algorithm. As a proof-of-concept
application, we used \combine\; to analyze a real set of experimental mass-cytometry data sets measuring overlapping
variables under three different interventions.

\combine\; outputs a summary of the characteristics of the underlying SMCM that can be identified by computing the
backbone of the corresponding SAT instance. The conversion of a causal discovery problem to a SAT instance makes
\combine\; easily extendable to other inference tasks. One could instead produce all SAT solutions and obtain all the
SMCMs that are plausible (i.e., fit all data sets). In this case, \combine\; with input a single PAG would output all
SMCMs that are Markov Equivalent with the  PAG; there is no other known procedure for this task. Alternatively, one could
easily query whether there are solution models with certain structural characteristics of interest (e.g., a directed path
from $A$ to $B$); this is easily done by imposing additional SAT clauses expressing the presence of these features.
Incorporating certain types of prior knowledge such as causal precedence information can also be achieved by imposing
additional path constraints. Future work includes extending this work for admitting soft interventions and known
instrumental variables. The conflict resolution technique proposed could be employed to standard causal discovery
algorithms that learn from single data sets, in an effort to improve their learning quality.

\appendix
\section*{Appendix A.}
\label{app:proofs}

\textbf{Proof of Proposition \ref{prop:mcmindpaths}}\\
\textbf{Proposition}\\
Let \set O be a set of variables and \graph J the independence model over \set V. Let \graph S be a SMCM over variables
\set V that is faithful to \prob J and \graph M be the MAG over the same variables that is faithful to \prob J. Let
$\var X, \var Y\in \set O$. Then there is an inducing path between $\var X$ and $\var Y$ with respect to $\set L$, $\set
L\subseteq\set V$  in \graph S if and only if there is an inducing path between $\var X$ and $\var Y$ with respect to
\set L in \graph M.\\

\begin{proof}($\Rightarrow$) Assume there exists a path \var p in  \graph S that is inducing w.r.t. \set L. Then by
theorem \ref{the:indpathssmcm} there exists no $\set Z\subseteq\set V\setminus\set L\cup \{X, Y\}$ such that \var X and
\var Y are \m-separated given \set Z in \graph S, and since \graph S and \graph M entail the same \m-separations there
exists no $\set Z\subseteq\set V\setminus\set L\cup \{X, Y\}$ such that \var X and \var Y are \m-separated given \set Z
in \graph M. Thus, by Theorem \ref{the:indpaths} there exists an inducing path between \var X and \var Y with respect to
\set L in \graph M.\\
($\Leftarrow$) Similarly,  assume there exists a path \var p in  \graph M that is inducing w.r.t. \set L. Then by theorem
\ref{the:indpaths} there exists no $\set Z\subseteq\set V\setminus\set L\cup \{X, Y\}$ such that \var X and \var Y are
\m-separated given \set Z in \graph M, and since \graph S and \graph M entail the same \m-separations there  exists no
$\set Z\subseteq\set V\setminus\set L\cup \{X, Y\}$ such that \var X and \var Y are \m-separated given \set Z in \graph
S. Thus, by Theorem \ref{the:indpathssmcm} there exists an inducing path between \var X and \var Y with respect to \set L
in \graph S.
\end{proof}
\textbf{Proof of Lemma \ref{lemma:indpaths}}\\
\textbf{Lemma}\\
Let $\{\graph P_i\}_{i=1}^N$ be a set of PAGs and \graph S a SMCM such that \graph S is possibly underlying for $\{\graph
P_i\}_{i=1}^N$, and let \graph H be the initial search graph returned by Algorithm \ref{algo:initializeSMCM} for
$\{\graph P_i\}_{i=1}^N$. Then, if \var p is an ancestral path in \graph S, then \var p is a possibly ancestral path in
\graph H. Similarly, if \var p is a possibly inducing path with respect to \set L in \graph S, then \var p is a possibly
inducing path with respect to \set L in \graph H.
\begin{proof} We will first prove that any path in \graph S is a path also in \graph H, i.e. \graph H has a superset of
edges compared to \graph S. If \var X and \var Y are adjacent in \graph S, then one of the following holds:
\begin{enumerate}
\item $\exists i$ s.t.  $\var X, \var Y \in \set O_i\setminus\set I_i$. Then the edge is present in $\graph
    S\manip{\set I_i}$, and \var X and \var Y are adjacent in $\graph P_i$: the edge is added to \graph H in Lines
    \ref{algoline:addedges} of Algorithm \ref{algo:initializeSMCM}.
\item $\not \exists i$ s.t.  $\var X, \var Y \in \set O_i\setminus\set I_i$. Then the edge is added in \graph H in
    Line \ref{algoline:addedgesu} of Algorithm \ref{algo:initializeSMCM}.
\end{enumerate}
Therefore, every edge in \graph S is present also in \graph H. We must also prove that no orientation in \graph H is
oriented differently in \graph S: \graph H has only arrowhead orientations, so we must prove that, if  \var
X\starleftarrow \var Y in \graph H and \var X and \var Y are adjacent in both graphs, \var X\starleftarrow \var Y in
\graph S.

Arrows are added to \graph H  in Line \ref{algoline:addOrientations} or in Lines \ref{algoline:addedges1} of the
Algorithm.
Arrowheads added in Line \ref{algoline:addOrientations} occur in all $\graph P_i$. If X\starrightarrow \var Y in $\graph
P_i$, this means that \var Y is not an ancestor of \var X in $\graph S\manip{\set I_i}$. Assume that  \var X\pagleftarrow
\var Y in \graph S: If \var X in $\set I_i$, the edge would be absent in $\graph  S\manip{\set I_i}$ and $\graph P_i$. If
$\var X\not\in\set I_i$, \var X would be ancestor of \var Y in \graph S\manip{\set I_i}, which is a contradiction.
Therefore, if \var X and \var Y are adjacent in \graph S,  X\starrightarrow \var Y  in \graph S.

The latter type of arrows correspond to cases where an edge is not present in any $\graph P_i$, $\not \exists i$ s.t.
$\var X, \var Y\in\set O_i\setminus\set I_i$, but $\exists i$ s.t. $\var X, \var Y \in \set O_i$, $\var X\in \set I_i$
and $\var Y\not\in \set I_i$. Then an arrow is added towards \var X. Assume the opposite holds: \var X\pagrightarrow \var
Y in \graph S, then \var X\pagrightarrow \var Y in $\graph S\manip{\set I_i}$, and since both variables are observed in
\var i the edge would be present in $\graph P_i$, which is a contradiction. Thus, if the edge is present in \graph S, the
edge is oriented into \var X.

Thus, \graph H has a superset of edges of \graph S, and for any edge present in both graphs, the orientations are the
same. Thus, if Then, if \var p is an ancestral path in \graph S, then \var p is a possibly ancestral path in \graph H.
Similarly, if \var p is a possibly inducing path with respect to \set L in \graph S, then \var p is a possibly inducing
path with respect to \set L in \graph H.
\end{proof}
\textbf{Proof of Lemma \ref{lemma:sound}}\\
\textbf{Lemma}\\Let $\{\set D_i\}_{i=1}^N$ be a set of data sets over overlapping subsets of \set O, and $\{\set
I_i\}_{i=1}^N$ be a set of (possibly empty) intervention targets such that $\set I_i\subset \set O_i$ for each i. Let
$\graph P_i$  be output PAG of  FCI  for data set $\set D_i$, $\Phi \wedge\graph F'$ be the final formula of Algorithm
\ref{algo:COmbINE}, and \graph S be a possibly underlying SMCM for $\{\set P_i\}_{i=1}^N$. Then \graph S satisfies $\Phi
\wedge\graph F'$.\\
\begin{proof} Constraints in Lines \ref{algoline:ndc} and \ref{algoline:ntt} of Algorithm \ref{algo:addConstraints} are
satisfied since \graph S is a semi-Markov causal model.

Since $\graph M_i\in \graph P_i\forall i$, $\graph M_i$ and $\graph P_i$ share the same adjacencies and non-adjacencies.
If \var X and \var Y are  adjacent in $\graph P_i$,  \var X and \var Y are  adjacent in $\graph M_i$, and by Proposition
\ref{prop:mcmindpaths} there exists an inducing path with respect to $\set L_i$ in  $\graph S\manip{\set I_i}$, and by
Lemma \ref{lemma:indpaths} this path is a possibly inducing path in the initial search graph. If \var X and \var Y are
not adjacent in $\graph P_i$,  \var X and \var Y are not adjacent in $\graph M_i$, and by Proposition
\ref{prop:mcmindpaths} there exists no inducing path with respect to $\set L_i$ in  $\graph S\manip{\set I_i}$. Thus,
constraints added in Line \ref{algoline:bc1} of Algorithm \ref{algo:addConstraints} along with the corresponding literals
$(\neg) adjacent(X, Y, \graph P_i)$ are satisfied by \graph S.

If \var X\doublestar\var Y \doublestar \var Z is an unshielded triple  in $\graph P_i$, \var X\doublestar\var Y
\doublestar \var Z is an unshielded triple  in $\graph M_i$. If \var Y is a collider on the triple on $\graph P_i$ then
\var Y is a collider on the triple on $\graph M_i$ and by the semantics of edges in MAGs \var Y is not an ancestor of
\var X nor \var Z $\graph S\manip{\set I_i}$. Thus, constraints added to $\Phi$ in Line \ref{algoline:bc2b} along with
the corresponding literal $collider(X, Y, Z, \graph P_i)$ are satisfied by \graph S. Similarly, if \var Y is not a
collider on the triple, \var Y is  an ancestor of either \var X or \var Z in $\graph M_i$ and there exists a relative
ancestral path $p_{YX}$ or $p_{YZ}$ in $\graph S\manip{\set I_i}$. By Lemma \ref{lemma:indpaths}, this path is a possibly
ancestral path in the initial \graph H. Thus, \graph S satisfies the constraints added to $\Phi$ in in Line
\ref{algoline:bc2a} along with the corresponding literal $dnc(X, Y, Z, \graph P_i)$.

If $\langle \var W,\dots,\var X, \var Y, \var Z\rangle$ is a discriminating path for \var V in  $\graph P_i$ and $\graph
M_i$ and \var Y is a collider on the path in $\graph P_i$, then \var Y is a collider on the path in $\graph M_i$,
therefore \var Y is not an ancestor of either \var X or \var Z in $\graph S\manip{\set I_i}$, so \graph S satisfies the
constraints added to $\Phi$ in Line \ref{algoline:bc3b} of Algorithm \ref{algo:addConstraints} along with the
corresponding literal $collider(X, Y, Z, \graph P_i)$. Similarly, if \var Y is not a collider on the triple, \var Y is
an ancestor of either \var X or \var Z in $\graph M_i$ and there exists a relative ancestral path $p_{YX}$ or $p_{YZ}$ in
$\graph S\manip{\set I_i}$. By Lemma \ref{lemma:indpaths}, this path is a possibly ancestral path in the initial \graph
H. Thus, \graph S satisfies the constraints added to $\Phi$ in in Line \ref{algoline:bc3a} along with the corresponding
literal $dnc(X, Y, Z, \graph P_i)$.\end{proof}
\textbf{Proof of Lemma \ref{lemma:complete}}\\
\textbf{Lemma}\\Let $\{\set D_i\}_{i=1}^N$, $\{\set I_i\}_{i=1}^N$, $\{\set P_i\}_{i=1}^N$, $\Phi \wedge\graph F'$ be
defined as in Lemma \ref{lemma:sound}. If graph S satisfies $\Phi \wedge\graph F'$, then \graph S is a possibly
underlying SMCM for $\{\set P_i\}_{i=1}^N$.
\begin{proof}\graph S is a SMCM: \graph S is by construction a mixed graph, and it satisfies constraints in Lines
\ref{algoline:ndc} and \ref{algoline:ntt} of Algorithm \ref{algo:addConstraints}, so it has no directed cycles, and at
most one tail per edge.

$\graph M_i$ and $\graph P_i$ share the same edges: If \var X and \var Y are adjacent in $\graph P_i$, then by the
constraints in Line \ref{algoline:bc1} of Algorithm \ref{algo:addConstraints} there exists an inducing path with respect
to $\set L_i$ in $\graph S\manip{\set I_i}$, therefore \var X and \var Y are adjacent in $\graph M_i$. If \var X and \var
Y are not adjacent in $\graph P_i$ then by the same constraints there exists no inducing path with respect to $\set L_i$
in $\graph S\manip{\set I_i}$, therefore \var X and \var Y are not adjacent in $\graph M_i$.

$\graph M_i$ and $\graph P_i$ share the same unshielded colliders: Let \var X\doublestar \var Y\doublestar  \var Z be an
unshielded triple in  $\graph P_i$. Since $\graph P_i$ and $\graph M_i$ share the same edges, \var X\doublestar\var
Y\doublestar \var Z is an unshielded triple in $\graph M_i$. If the triple is an unshielded collider in $\graph P_i$ then
by the constraints in Line \ref{algoline:bc2b} of Algorithm \ref{algo:addConstraints} \var Y is not an ancestor of either
\var X or \var Z in $\graph S\manip{\set I_i}$, thus \var X\starrightarrow\var Y\starleftarrow \var Z in $\graph M_i$. If
on the other hand the triple is a definite non-collider in $\graph P_i$, then by the constraints in  Line
\ref{algoline:bc2a} of Algorithm \ref{algo:addConstraints} \var Y is an ancestor of either \var X or \var Z in $\graph
S\manip{\set I_i}$, therefore either \var Y\pagrightarrow \var X or \var Y\pagrightarrow \var Z in $\graph M_i$, thus,
the triple is an unshielded non-collider in $\graph M_i$.

If $\langle \var W,\dots,\var X, \var Y, \var Z\rangle$ is a discriminating path for \var V in both $\graph M_i$ and
$\graph P_i$, and \var Y is a collider on the path, then by the constraints in  Line \ref{algoline:bc3b} of Algorithm
\ref{algo:addConstraints}  \var Y is not an ancestor of  \var X or \var Z in  $\graph S\manip{\set I_i}$, therefore \var
Y is a collider on the same path in $\graph M_i$. If, conversely, \var Y is not a collider on the path, then by the
constraints in  Line \ref{algoline:bc3a} of Algorithm \ref{algo:addConstraints}, \var Y is an ancestor of either \var X
or \var Z, thus, \var X is not a collider on the same path in $\graph M_i$.
\end{proof}

\bibliography{INCA_BIB}

\begin{thebibliography}{47}
\providecommand{\natexlab}[1]{#1}
\providecommand{\url}[1]{\texttt{#1}}
\expandafter\ifx\csname urlstyle\endcsname\relax
  \providecommand{\doi}[1]{doi: #1}\else
  \providecommand{\doi}{doi: \begingroup \urlstyle{rm}\Url}\fi

\bibitem[Beinlich et~al.(1989)Beinlich, Suermondt, Chavez, and
  Cooper]{Beinlich1989}
IA~Beinlich, HJ~Suermondt, RM~Chavez, and GF~Cooper.
\newblock The {A}{L}{A}{R}{M} monitoring system: A case study with two
  probabilistic inference techniques for belief networks.
\newblock In \emph{Second European Conference on Artificial Intelligence in
  Medicine}, volume~38, pages 247--256. Springer-Verlag, Berlin, 1989.

\bibitem[Bendall et~al.(2011)Bendall, Simonds, Qiu, Amir, Krutzik, Finck,
  Bruggner, Melamed, Trejo, Ornatsky, Balderas, Plevritis, Sachs, Pe\'{er},
  Tanner, and Nolan]{Bendall2011}
SC~Bendall, EF~Simonds, P~Qiu, El-ad~D Amir, PO~Krutzik, R~Finck, RV~Bruggner,
  R~Melamed, A~Trejo, OI~Ornatsky, RS~Balderas, SK~Plevritis, K~Sachs,
  D~Pe\'{er}, SD~Tanner, and GP~Nolan.
\newblock Single-cell mass cytometry of differential immune and drug responses
  across a human hematopoietic continuum.
\newblock \emph{Science}, 332\penalty0 (6030):\penalty0 687--696, 2011.

\bibitem[Bodenmiller et~al.(2012)Bodenmiller, Zunder, Finck, Chen, Savig,
  Bruggner, Simonds, Bendall, Sachs, Krutzik, et~al.]{Bodenmiller2012}
B~Bodenmiller, ER~Zunder, R~Finck, TJ~Chen, ES~Savig, RV~Bruggner, EF~Simonds,
  SC~Bendall, K~Sachs, PO~Krutzik, et~al.
\newblock Multiplexed mass cytometry profiling of cellular states perturbed by
  small-molecule regulators.
\newblock \emph{Nature biotechnology}, 30\penalty0 (9):\penalty0 858--867,
  2012.

\bibitem[Borboudakis et~al.(2012)Borboudakis, Triantafillou, and
  Tsamardinos]{Borboudakis2012}
G~Borboudakis, S~Triantafillou, and I~Tsamardinos.
\newblock Tools and algorithms for causally interpreting directed edges in
  maximal ancestral graphs.
\newblock In \emph{Sixth European Workshop on Probabilistic Graphical
  Models(PGM)}, 2012.

\bibitem[Chu et~al.(2003)Chu, Glymour, Scheines, and
  Spirtes]{chu2003statistical}
T~Chu, C~Glymour, R~Scheines, and P~Spirtes.
\newblock A statistical problem for inference to regulatory structure from
  associations of gene expression measurements with microarrays.
\newblock \emph{Bioinformatics}, 19\penalty0 (9):\penalty0 1147--1152, 2003.

\bibitem[Claassen and Heskes(2010{\natexlab{a}})]{Claassen2010}
T~Claassen and T~Heskes.
\newblock {Causal discovery in multiple models from different experiments}.
\newblock In \emph{Advances in Neural Information Processing Systems (NIPS
  2010)}, volume~23, pages 1--9, 2010{\natexlab{a}}.

\bibitem[Claassen and Heskes(2010{\natexlab{b}})]{claassen2010learning}
T~Claassen and T~Heskes.
\newblock Learning causal network structure from multiple (in) dependence
  models.
\newblock In \emph{Proc. of the Fifth European Workshop on Probabilistic
  Graphical Models (PGM)}, pages 81--88, 2010{\natexlab{b}}.

\bibitem[Claassen and Heskes(2012)]{Claassen2012}
T~Claassen and T~Heskes.
\newblock {A Bayesian Approach to Constraint Based Causal Inference}.
\newblock In \emph{Proceedings of the 28th Conference on Uncertainty in
  Artificial Intelligence (UAI2012)}, pages 207--217, 2012.

\bibitem[Colombo et~al.(2012)Colombo, Maathuis, Kalisch, and
  Richardson]{colombo2012}
Diego Colombo, Marloes~H. Maathuis, Markus Kalisch, and Thomas~S. Richardson.
\newblock Learning high-dimensional directed acyclic graphs with latent and
  selection variables.
\newblock \emph{The Annals of Statistics}, 40\penalty0 (1):\penalty0 294--321,
  02 2012.

\bibitem[Cooper and Yoo(1999)]{Cooper1999}
GF~Cooper and Ch~Yoo.
\newblock {Causal discovery from a mixture of experimental and observational
  data}.
\newblock In \emph{Proceedings of Uncertainty in Artificial Intelligence (UAI
  1999)}, volume~10, pages 116--125, 1999.

\bibitem[Eaton and Murphy(2007{\natexlab{a}})]{Eaton2007}
D~Eaton and K~Murphy.
\newblock Belief net structure learning from uncertain interventions.
\newblock \emph{J Mach Learn Res}, 1:\penalty0 1--48, 2007{\natexlab{a}}.

\bibitem[Eaton and Murphy(2007{\natexlab{b}})]{EatonBDAGL}
D~Eaton and K~Murphy.
\newblock Bdagl: Bayesian dag learning.
\newblock \url{http://www.cs.ubc.ca/~murphyk/Software/BDAGL/},
  2007{\natexlab{b}}.

\bibitem[Eberhardt and Scheines(2007)]{eberhardt2007interventions}
F~Eberhardt and R~Scheines.
\newblock Interventions and causal inference.
\newblock \emph{Philosophy of science}, 74\penalty0 (5):\penalty0 981--995,
  2007.

\bibitem[E\'{e}n and S\"{o}rensson(2004)]{minisat}
N~E\'{e}n and N~S\"{o}rensson.
\newblock An extensible {SAT}-solver.
\newblock In \emph{Theory and Applications of Satisfiability Testing}, pages
  333--336, 2004.

\bibitem[Evans and Richardson(2010)]{Evans2010}
RJ~Evans and TS~Richardson.
\newblock Maximum likelihood fitting of acyclic directed mixed graphs to binary
  data.
\newblock In \emph{Proceedings of the 26th International Conference on
  Uncertainty in Artificial Intelligence}, 2010.

\bibitem[Evans and Richardson(2011)]{Evans2011}
RJ~Evans and TS~Richardson.
\newblock Marginal log-linear parameters for graphical markov models.
\newblock \emph{arXiv preprint arXiv:1105.6075}, 2011.

\bibitem[Fisher(1922)]{Fisher1922}
RA~Fisher.
\newblock {O}n the interpretation of $\chi$2 from contingency tables, and the
  calculation of p.
\newblock \emph{Journal of the Royal Statistical Society}, 85\penalty0
  (1):\penalty0 87--94, 1922.

\bibitem[Geiger and Heckerman(1994)]{Geiger1994}
Dan Geiger and David Heckerman.
\newblock Learning gaussian networks.
\newblock In \emph{Proceedings of the Tenth Conference Annual Conference on
  Uncertainty in Artificial Intelligence (UAI-94)}, pages 235--243, San
  Francisco, CA, 1994. Morgan Kaufmann.

\bibitem[Gomes et~al.(2000)Gomes, Selman, Crato, and Kautz]{gomes2000heavy}
Carla~P Gomes, Bart Selman, Nuno Crato, and Henry Kautz.
\newblock Heavy-tailed phenomena in satisfiability and constraint satisfaction
  problems.
\newblock \emph{Journal of automated reasoning}, 24\penalty0 (1-2):\penalty0
  67--100, 2000.

\bibitem[Hauser and B{\"{u}}hlmann(2012)]{Hauser2012}
A~Hauser and P~B{\"{u}}hlmann.
\newblock {Characterization and Greedy Learning of Interventional Markov
  Equivalence Classes of Directed Acyclic Graphs}.
\newblock \emph{JMLR}, 13, August 2012.

\bibitem[Hyttinen et~al.(2012{\natexlab{a}})Hyttinen, Eberhardt, and
  Hoyer]{hyttinen2012alearning}
A~Hyttinen, F~Eberhardt, and PO~Hoyer.
\newblock Learning linear cyclic causal models with latent variables.
\newblock \emph{JMLR}, 13:\penalty0 3387--3439, 2012{\natexlab{a}}.

\bibitem[Hyttinen et~al.(2012{\natexlab{b}})Hyttinen, Eberhardt, and
  Hoyer]{hyttinen2012causal}
A~Hyttinen, F~Eberhardt, and PO~Hoyer.
\newblock Causal discovery of linear cyclic models from multiple experimental
  data sets with overlapping variables.
\newblock In \emph{Uncertainty in Artificial Intelligence}, 2012{\natexlab{b}}.

\bibitem[Hyttinen et~al.(2013)Hyttinen, Hoyer, Eberhardt, and
  J{\"a}rvisalo]{hyttinen2013discovering}
A~Hyttinen, PO~Hoyer, F~Eberhardt, and M~J{\"a}rvisalo.
\newblock Discovering cyclic causal models with latent variables: A general
  sat-based procedure.
\newblock In \emph{Uncertainty in Artificial Intelligence}, 2013.

\bibitem[Kuegel(2010)]{kuegel2010improved}
Adrian Kuegel.
\newblock Improved exact solver for the weighted max-sat problem.
\newblock In \emph{Workshop Pragmatics of SAT}, 2010.

\bibitem[Meganck et~al.(2006)Meganck, Maes, Leray, and
  Manderick]{meganck2006learning}
S~Meganck, S~Maes, P~Leray, and B~Manderick.
\newblock Learning semi-markovian causal models using experiments.
\newblock In \emph{Third European Workshop on Probabilistic Graphical
  Models(PGM)}, 2006.

\bibitem[Murphy(2001)]{murphy2001active}
K~Murphy.
\newblock Active learning of causal bayes net structure.
\newblock Technical report, UC Berkeley, 2001.

\bibitem[Ramsey et~al.(2006)Ramsey, Spirtes, and Zhang]{Ramsey2006}
J~Ramsey, P~Spirtes, and J~Zhang.
\newblock Adjacency faithfulness and conservative causal inference.
\newblock In \emph{Proceedings of Uncertainty in Artificial Intelligence},
  2006.

\bibitem[Rane et~al.(2001)Rane, Coxon, Powell, Webster, Klein, Pierce, Ping,
  and McLeish]{rane2001p38}
MJ~Rane, PY~Coxon, DW~Powell, R~Webster, JB~Klein, W~Pierce, P~Ping, and
  KR~McLeish.
\newblock p38 kinase-dependent mapkapk-2 activation functions as
  3-phosphoinositide-dependent kinase-2 for akt in human neutrophils.
\newblock \emph{Journal of Biological Chemistry}, 276\penalty0 (5):\penalty0
  3517--3523, 2001.

\bibitem[Richardson and Spirtes(2002)]{Richardson2002}
TS~Richardson and P~Spirtes.
\newblock Ancestral graph markov models.
\newblock \emph{The Annals of Statistics}, 30\penalty0 (4):\penalty0 962--1030,
  2002.

\bibitem[Richardson et~al.(2012)Richardson, Robins, and
  Shpitser]{Richardson2012}
TS~Richardson, JM~Robins, and I~Shpitser.
\newblock Nested markov properties for acyclic directed mixed graphs.
\newblock In \emph{Proceedings of the Twenty Eighth Conference on Uncertainty
  in Artificial Intelligence}, page~13. AUAI Press, 2012.

\bibitem[Sachs et~al.(2005)Sachs, Perez, Pe'er, Lauffenburger, and
  Nolan]{Sachs2005}
K~Sachs, O~Perez, D~Pe'er, DA~Lauffenburger, and GP~Nolan.
\newblock Causal protein-signaling networks derived from multiparameter
  single-cell data.
\newblock \emph{Science}, 308\penalty0 (5721):\penalty0 523--529, 2005.

\bibitem[Sellke et~al.(2001)Sellke, Bayarri, and Berger]{sellke2001calibration}
T~Sellke, MJ~Bayarri, and JO~Berger.
\newblock Calibration of $\rho$ values for testing precise null hypotheses.
\newblock \emph{The American Statistician}, 55\penalty0 (1):\penalty0 62--71,
  2001.

\bibitem[Shpitser et~al.(2013)Shpitser, Evans, Richardson, and
  Robins]{Shpitser2013}
I~Shpitser, R~Evans, TS~Richardson, and JM~Robins.
\newblock Sparse nested markov models with log-linear parameters.
\newblock In \emph{Proceedings of the Twenty Ninth Conference on Uncertainty in
  Artificial Intelligence (UAI-13)}, pages 576--585. AUAI Press, 2013.

\bibitem[Smoot et~al.(2011)Smoot, Ono, Ruscheinski, Wang, and
  Ideker]{smoot2011cytoscape}
ME~Smoot, K~Ono, J~Ruscheinski, PL~Wang, and T~Ideker.
\newblock Cytoscape 2.8: new features for data integration and network
  visualization.
\newblock \emph{Bioinformatics}, 27\penalty0 (3):\penalty0 431--432, 2011.

\bibitem[Spirtes and Richardson(1996)]{Spirtes1996}
P~Spirtes and TS~Richardson.
\newblock {A polynomial time algorithm for determining DAG equivalence in the
  presence of latent variables and selection bias}.
\newblock In \emph{Proceedings of the 6th International Workshop on Artificial
  Intelligence and Statistics}, pages 489--500, 1996.

\bibitem[Spirtes et~al.(2001)Spirtes, Glymour, and Scheines]{Spirtes2000}
P~Spirtes, C~Glymour, and R~Scheines.
\newblock \emph{Causation, Prediction, and Search}.
\newblock The MIT Press, second edition, January 2001.

\bibitem[Storey and Tibshirani(2003)]{Storey2003}
JD~Storey and R~Tibshirani.
\newblock Statistical significance for genomewide studies.
\newblock \emph{Proceedings of the National Academy of Sciences of the United
  States of America}, 100\penalty0 (16):\penalty0 9440, 2003.

\bibitem[Tian and Pearl(2003)]{Tian2003}
J~Tian and J~Pearl.
\newblock { On the identification of causal effects}.
\newblock Technical Report R-290-L, UCLA Cognitive Systems Laboratory, 2003.

\bibitem[Tillman and Spirtes(2011)]{Tillman2011}
RE~Tillman and P~Spirtes.
\newblock {Learning equivalence classes of acyclic models with latent and
  selection variables from multiple datasets with overlapping variables}.
\newblock In \emph{Proceedings of the 14th International Conference on
  Artificial Intelligence and Statistics}, volume~15, pages 3--15, 2011.

\bibitem[Tillman et~al.(2008)Tillman, Danks, and Glymour]{Tillman2008}
RE~Tillman, D~Danks, and C~Glymour.
\newblock {Integrating locally learned causal structures with overlapping
  variables}.
\newblock In \emph{Advances in Neural Information Processing Systems (NIPS},
  2008.

\bibitem[Tong and Koller(2001)]{tong2001active}
S~Tong and D~Koller.
\newblock Active learning for structure in bayesian networks.
\newblock In \emph{International joint conference on artificial intelligence},
  pages 863--869, 2001.

\bibitem[Triantafillou et~al.(2010)Triantafillou, Tsamardinos, and
  Tollis]{Triantafillou2010}
S~Triantafillou, I~Tsamardinos, and IG~Tollis.
\newblock {Learning causal structure from overlapping variable sets}.
\newblock In \emph{Proceedings of Artificial Intelligence and Statistics},
  volume~9, 2010.

\bibitem[Tsamardinos et~al.(2012)Tsamardinos, Triantafillou, and
  Lagani]{Tsamardinos2012towards}
Ioannis Tsamardinos, Sofia Triantafillou, and Vincenzo Lagani.
\newblock Towards integrative causal analysis of heterogeneous data sets and
  studies.
\newblock \emph{The Journal of Machine Learning Research}, 98888:\penalty0
  1097--1157, 2012.

\bibitem[Verma and Pearl(2003)]{Verma1991}
TS~Verma and J~Pearl.
\newblock {Equivalence and Synthesis of Causal Models}.
\newblock Technical Report R-150, UCLA Department of Computer Science, 2003.

\bibitem[Zhang(2006)]{zhang2006causal}
J~Zhang.
\newblock \emph{Causal inference and reasoning in causally insufficient
  systems}.
\newblock PhD thesis, PhD thesis, Carnegie Mellon University, 2006.

\bibitem[Zhang(2008{\natexlab{a}})]{Zhang2008}
J~Zhang.
\newblock {On the completeness of orientation rules for causal discovery in the
  presence of latent confounders and selection bias}.
\newblock \emph{Artificial Intelligence}, 172\penalty0 (16-17):\penalty0
  1873--1896, 2008{\natexlab{a}}.

\bibitem[Zhang(2008{\natexlab{b}})]{Zhang2008b}
J~Zhang.
\newblock {Causal Reasoning with Ancestral Graphs}.
\newblock \emph{Journal of Machine Learning Research}, 9\penalty0 (1):\penalty0
  1437--1474, 2008{\natexlab{b}}.

\end{thebibliography}
\end{document}